\documentclass{article}

\usepackage{microtype}
\usepackage{amsmath, amssymb, amsfonts}
\usepackage{algorithm}
\usepackage{algpseudocode}

\usepackage{graphicx}
\usepackage{subcaption}
\usepackage{booktabs} 

\usepackage{hyperref}
\usepackage{mlmath}

\usepackage[accepted]{icml2026} 

\usepackage{amsmath}
\usepackage{amssymb}
\usepackage{mathtools}
\usepackage{amsthm}

\usepackage[capitalize,noabbrev]{cleveref}

\theoremstyle{plain}
\newtheorem{theorem}{Theorem}[section]

\newtheorem{lemma}[theorem]{Lemma}
\newtheorem{corollary}[theorem]{Corollary}
\theoremstyle{definition}
\newtheorem{definition}[theorem]{Definition}
\newtheorem{assumption}[theorem]{Assumption}
\theoremstyle{remark}
\newtheorem{remark}[theorem]{Remark}

\usepackage[textsize=tiny]{todonotes}

\icmltitlerunning{Gradient Flow Dynamics and Implicit Bias of Diagonal Linear Networks under Infinitesimal Initialization}

\begin{document}

\twocolumn[
  \icmltitle{
  Gradient Flow Dynamics and Implicit Bias of Diagonal Linear Networks under Infinitesimal Initialization
  }



  \icmlsetsymbol{equal}{*}

  \begin{icmlauthorlist}
    \icmlauthor{Jiajie Zhao}{yyy,yyy2}
    \icmlauthor{Jianxing Wang}{yyy,yyy2}
    \icmlauthor{Junjie Yang}{yyy,yyy2}
    \icmlauthor{Zhiwei Bai}{yyy,yyy2}
    \icmlauthor{Yaoyu Zhang}{yyy,yyy2,comp}
    
  \end{icmlauthorlist}

  \icmlaffiliation{yyy}{School of Mathematical Sciences, Shanghai Jiao Tong University}

   \icmlaffiliation{yyy2}{, Institute of Natural Sciences, MOE-LSC, Shanghai Jiao Tong University}
  
  \icmlaffiliation{comp}{School of Artificial Intelligence, Shanghai Jiao Tong University}

  \icmlcorrespondingauthor{Yaoyu Zhang}{zhyy.sjtu@sjtu.edu.cn}

  \icmlkeywords{Diagonal linear network, gradient flow, implicit bias, infinitesimal initialization, saddle-to-saddle dynamics}

  \vskip 0.3in
]




\printAffiliationsAndNotice{}  


\begin{abstract}

We study the gradient flow dynamics of diagonal linear networks for regression tasks under infinitesimal initialization. Extending Theorem 1 from~\citet{pesme2023saddle}, we generalize the analysis to both deep diagonal linear networks and a broader class of two-layer diagonal linear networks (as defined in Definition~\ref{def:general 2-layer}). Specifically, we demonstrate that the training trajectories of these models can be equivalently characterized by the proposed Algorithm~\ref{alg:my_algorithm}. We further prove that this algorithm converges to the solution of a modified \( \ell_1 \) norm minimization problem. As a result, we establish that the implicit bias of both network architectures corresponds to a modified \( \ell_1 \) norm in the regime of infinitesimal initialization. Additionally, we provide insights into the underlying mechanisms governing these dynamics by identifying the Structural Invariant Manifold (SIM)~\cite{zhao2026architecture} as the key geometric structure that shapes the learning process.
\end{abstract}

\section{Introduction}
The remarkable success of deep learning relies  on the ability of overparameterized neural networks to generalize well to unseen data, even when they possess enough capacity to memorize the training set with random labels~\cite{zhang2017understanding}. Classical learning theory, which relies on uniform convergence and capacity measures like VC-dimension~\cite{vapnik2015uniform,mohri2018foundations}, often fails to explain this phenomenon~\cite{zhang2017understanding,neyshabur2017exploring}.
Consequently, attention has shifted toward the concept of implicit bias~\cite{neyshabur2017exploring,vardi2023implicit}, which posits that gradient-based optimization methods  favor solutions that generalize well.

 The implicit bias in linear models is well-understood---gradient descent converges to the minimum Euclidean norm solution in regression~\cite{gunasekar2018characterizing,zhang2017understanding} and max-margin solution in classification~\cite{soudry2018implicit}. As for nonlinear model, diagonal linear network, where each layer applies a diagonal linear transformation to the input, is a simple testbed for studying implicit bias.   In~\citet{woodworth2020kernel}, the authors showed that for a two-layer diagonal linear network, the implicit bias of gradient descent is the \( \ell_1 \) norm under infinitesimal initialization, and to the \( L_2 \) norm under large (infinite) initialization. They further demonstrated that similar results hold for deep diagonal linear networks under infinitesimal initialization of a specific direction. 

Beyond characterizing convergence points, there is growing interest in understanding the complete trajectory of the dynamics. It is observed that under small initialization, saddle-to-saddle dynamics is present in many models, such as the diagonal linear network~\cite{pesme2023saddle,jacot2021saddle,berthier2023incremental},
matrix factorization~\cite{li2020towards,bai2024connectivity}, ReLU network~\cite{boursier2022gradient,bantzis2025saddle}. The most relevant work is~\citet{pesme2023saddle}, which, building on the mirror flow technique from~\citet{azulay2021implicit}, proves that under infinitesimal initialization, the gradient flow trajectory of a two-layer diagonal linear network successively transitions from one saddle point to another, ultimately converging to the minimum \(\ell_1\)-norm solution. These saddle-to-saddle transitions are further characterized through a recursive algorithm.

The analysis in~\citet{pesme2023saddle} relies on the explicit form of mirror flow. In~\citet{li2022implicit}, the author establishes the existence of mirror flow in a broader setting known as commuting parametrization, which encompasses diagonal linear networks as a special case. This suggests that the results of~\citet{pesme2023saddle} could potentially be extended to more general architectures, although such a generalization remains an open question.

In this paper, we achieve this generalization via
 a novel technique called the Structural Invariant Manifold~\cite{zhao2026architecture}, which investigates data-independent properties of parametric models.  We extend the analysis of saddle-to-saddle dynamics in~\citet{pesme2023saddle} to both deep diagonal linear networks and  general two-layer diagonal linear networks (defined in Definition~\ref{def:general 2-layer}). As a  direct corollary, we characterize the implicit bias of the two types of diagonal linear network.
 

\subsection{Contributions}
\label{sec:contribution}
\textbf{Training Trajectory:} 
Theorem 1 in~\citet{pesme2023saddle} addresses the saddle-to-saddle dynamics of two-layer diagonal linear networks and deep networks restricted to specific initialization directions. We generalize this in Theorem~\ref{thm: dynamics of diagona linear network} by extending the analysis to general two-layer networks (Definition~\ref{def:general 2-layer}) and establishing results for deep networks under generic initialization directions.

\textbf{Implicit bias:} \citet{gunasekar2017implicit,woodworth2020kernel} established that the implicit bias corresponds to $\ell_1$-norm minimization for standard two-layer diagonal linear networks and deep diagonal linear networks under specific initialization directions. We generalize these findings by characterizing the implicit bias as a modified $\ell_1$-norm minimization for general two-layer diagonal linear networks and  deep diagonal linear networks under generic initialization directions (Corollary~\ref{cor:L-layer}).

\textbf{Mechanism:} We provide a theoretical analysis of the underlying mechanism that leads to the equivalence between the dynamics and the recursive algorithm, identifying the Structural Invariant Manifold~\cite{zhao2026architecture} as the key  for this behavior (Section~\ref{sec:mechanism}).


\section{Related Works}

\textbf{Diagonal Linear Network:} 
Early work on diagonal linear networks focused on establishing the implicit bias of gradient flow~\cite{gunasekar2017implicit,woodworth2020kernel,azulay2021implicit} and mirror flow~\cite{labarriere2024optimization,azulay2021implicit}. Subsequent research has elucidated the geometric nature of these optimization paths, specifically saddle-to-saddle dynamics~\cite{pesme2023saddle,berthier2023incremental,jacot2021saddle,berthier2025diagonal}. More recently, literature has addressed the algorithmic impact of discrete hyperparameters, such as step size~\cite{even2023s,nacson2022implicit}, stochastic noise~\cite{pesme2021implicit}, and momentum~\cite{papazov2024leveraging}.

\textbf{Matrix Factorization:}
Since diagonal linear networks can be viewed as a special case of matrix factorization, related literature is particularly relevant. \citet{gunasekar2017implicit} prove that for commuting observations, gradient descent exhibits an implicit bias toward low nuclear norm solutions. \citet{li2020towards} provide both theoretical and empirical evidence that gradient flow with infinitesimally small initialization behaves like a greedy low-rank algorithm. However, \citet{pesme2023saddle} demonstrate that this greedy low-rank behavior does not extend to diagonal linear networks. \citet{bai2024connectivity} further identify the “connectivity” of observations as a key factor influencing the dynamics.  Extending the technique in this paper to the matrix factorization setting is  an interesting direction for future research.



\section{Preliminary}
\textbf{Analytic Parametric Model and Its Training:} In this paper, we use $F(\vtheta)(\vx)$ to denote a parametric model. $\vtheta\in \sR^M$ is the parameter of the model, $\vx\in \sR^d$ is the input of the model. The output of the model is $F(\vtheta)(\vx)\in \sR$. 
If $F$ is real-analytic for $(\vtheta,\vx)\in \sR^{M}\times \sR^d$, we say $F$ is an analytic parametric model.
To train the parametric model, we collect data $S=\{(\vx_i,y_i)\}_{i=1}^n$. We define MSE loss $L(\vtheta)=\sum_{i=1}^n (F(\vtheta)(\vx_i)-y_i)^2$. We consider its gradient flow
\begin{equation}
    \frac{\rd \vtheta}{\rd t}=-\nabla_{\vtheta} L(\vtheta), \vtheta(0)=\vtheta_0.
\label{eq:gradient flow}
\end{equation}
Here, $\vtheta_0$ is the initialization.

\textbf{Diagonal Linear Network:}
An $L$-layer diagonal linear model  is  $F(\vtheta)(\vx)=\sum_{i=1}^n a_{i,1}a_{i,2}\cdots a_{i,L} x_i$, where $\vx=(x_1,\ldots,x_n)\in \sR^n$, $\vtheta=(a_{i,1},a_{i,2}\ldots ,a_{i,L})_{i=1}^n\in \sR^{nL}$. The training data of diagonal linear model is denoted as $\vX,\vy$, where $\vX$ is an $m\times n$ matrix, $\vy$ is an $m\times 1$ vector. Each row of $\vX$ and $\vy$ is a sample. The MSE loss is 
$L(\vtheta)=\|\vX\vk(\vtheta)-\vy\|_2^2$. Here, $\vk(\vtheta)=(k_i(\vtheta_i))_{i=1}^n,$ where $k_i(\vtheta_i)=a_{i,1}\times a_{i,2}\times \cdots \times  a_{i,L}.$  If $L\geq 3$, we call it the deep diagonal linear network.

\textbf{Notation:} We use $[n]$ to denote the set $\{1,\ldots,n\}$. We use bold letters to denote vectors, and non-bold letters to denote scalars. 
For a vector \(\vv\), we use \(v_i\) denotes the i-th coordinate of  $\vv$. 
For a matrix \(\vX\), we use \(\vX_{ij}\) to denote the element in the \(i\)-th row and \(j\)-th column, 
\(\vX_i\) to denote the \(i\)-th row of \(\vX\), and \(\vX_{:,j}\) to denote the \(j\)-th column of \(\vX\).


\textbf{Infinitesimal initialization:} 
Let $\vtheta_0$ denote a fixed base parameter vector. We set the initialization of the model weights as: $ \vtheta(0) = s \vtheta_0 $ where $s > 0$ is a scalar multiplier. The ``infinitesimal initialization" refers to the asymptotic limit as $s \to 0$.

\textbf{$\ell_1$ Implicit Bias:} Two-layer diagonal linear model is known to exhibit implicit regularization of $\ell_1$ norm under infinitesimal initialization~\cite{gunasekar2017implicit}. Mathematically, if we denote the solution of Equation~\eqref{eq:gradient flow} as $\phi(\vtheta_0,t)$, then we have 
$$\lim_{s \to 0}\lim_{t\to +\infty} \phi(s\vtheta_0,t)=\vk^*,$$
where $\vk^*$ is a solution of the following $\ell_1$ minimization problem:
\begin{equation}
\min _{\vk\in \sR^n} \|\vk\|_1 \quad s.t. \quad \vX\vk=\vy.    
\end{equation}

\section{Main Results}


\begin{definition}[General Two-Layer Diagonal Linear Network]
Consider the model defined by
\[
F(\vtheta)(\vx) = \sum_{i=1}^n k_i(\vtheta_i) x_i,
\]
where $\vtheta_i$ is a vector for each $i\in [n]$,
\( \vtheta = (\vtheta_i)_{i=1}^n \) and \( \vx = (x_i)_{i=1}^n \). Define the vector \( \vk(\vtheta) = (k_i(\vtheta_i))_{i=1}^n \). Given a data matrix \( \vX \in \mathbb{R}^{m \times n} \) and a target vector \( \vy \in \mathbb{R}^m \), we define the loss function as
\[
L(\vtheta) = \|\vX \vk(\vtheta) - \vy\|_2^2.
\]
The parameter \( \vtheta \) is optimized using gradient flow.

For each \( i \in [n] \), we impose the following assumptions on the function \( k_i(\vtheta_i) \):
\begin{itemize}
    \item The function \( k_i(\vtheta_i) \) is real analytic, satisfies \( k_i(\vzero) = 0 \), and has a unique critical point at \( \vtheta_i = \vzero \).
    
    \item Let \( \lambda_1 \leq \lambda_2 \leq \ldots \leq \lambda_d \) denote the eigenvalues of the Hessian \( \nabla^2 k_i(\vzero) \). We assume that
    \[
    \lambda_1 < \min\{0, \lambda_2\}, \quad \lambda_d > \max\{0, \lambda_{d-1}\}.
    \]
    
    \item The function \( k_i(\vtheta_i) \) is unbounded along each of its limit trajectories\footnote{See Lemma~\ref{lemma:limit solution} and Definition~\ref{def:limit trajectory} for detailed explanations. This assumption primarily fails if $k_i(\boldsymbol{\theta}_i)$ is globally bounded from above (i.e., $k_i(\boldsymbol{\theta}_i) < M$ for all $\boldsymbol{\theta}_i$) or below.}.
\end{itemize}
\label{def:general 2-layer}
\end{definition}

\textbf{Summary of Main Results:} Briefly speaking, in Corollary~\ref{cor:L-layer}, we prove the implicit bias  of  deep diagonal linear network  and general two-layer diagonal linear network (defined in Definition~\ref{def:general 2-layer}) under infinitesimal initialization.  The implicit bias is a modified $\ell_1$ norm:
$$R(\vk)=\sum_{i=1}^n (t_i^+k_i^+ +t_i^-k_i^-).$$

The proof of Corollary~\ref{cor:L-layer}  consists of two steps.
In Theorem~\ref{thm: dynamics of diagona linear network}, 
we prove that the dynamics  of the two models is equivalent to Algorithm~\ref{alg:my_algorithm}. In Theorem~\ref{thm: algorithm}, we prove that Algorithm~\ref{alg:my_algorithm} will converge to the solution of 
$$\min _{\vk} R(\vk) \quad s.t. \quad \vX\vk=\vy.$$
 Then the implicit bias is a direct corollary.


\subsection{The Algorithm}

\begin{algorithm}[h]
\caption{Algorithm($\vX,\vy,\{t_i^+\}_{i=1}^n,\{t_i^-\}_{i=1}^n$)}
\label{alg:my_algorithm}
\begin{algorithmic}[1]
\Require Data matrix $\vX \in \mathbb{R}^{m \times n}$,  $\vy \in \mathbb{R}^{m \times 1}$.
\Require  Positive values $\{t_i^+\}_{i=1}^n$ and $\{t_i^-\}_{i=1}^n$.


\State \textbf{Initialization:}
$p \gets 0$, $\vk^{(0)} \gets \vzero$, $\vs^{(0)} \gets \vzero$.

\While{$\vX \vk^{(p)} \neq \vy$}
    \State  $\vu \gets \vX^\T(\vy-\vX \vk^{(p)})$
    \State $I_1 \gets \{i \in \{1, \dots, n\} \mid k_i^{(p)} > 0\}$ 
    \State $I_2 \gets \{i \in \{1, \dots, n\} \mid k_i^{(p)} = 0\}$ 
    \State $I_3 \gets \{i \in \{1, \dots, n\} \mid k_i^{(p)} < 0\}$ 
    
 \For{$i \in I_2$}
    \If{$u_i > 0$, }  $\delta_i \gets ({t_i^+ - s_i^{(p)}})/{u_i}$
    \ElsIf{$u_i < 0$, }  $\delta_i \gets ({t_i^- + s_i^{(p)}})/{|u_i|}$
    \Else \quad
     $\delta_i \gets +\infty$
\EndIf
\EndFor
    
    \State $j \gets \mathrm{argmin}_{i \in I_2} \delta_i $ 
    \State $\vs^{(p+1)} \gets \vs^{(p)} + \delta_j \vu$
    
    \State Find $\vk^{(p+1)}$ as the solution to:
    \begin{align*}
        \min_{\vk} \quad  \|\vX \vk - \vy\|_2^2 \\
        \text{s.t.} \quad & k_i \geq 0, \quad \forall i \in I_1 \\
                     & k_i \leq 0, \quad \forall i \in I_3 \\
                     & k_i = 0, \quad \forall i \in I_2 \setminus \{j\}
    \end{align*}
    
    \State $p \gets p + 1$
\EndWhile

\State \Return $\vk^{(p)}$
\end{algorithmic}
\end{algorithm}

\begin{theorem}[Convergence and Well-Posedness of Algorithm~\ref{alg:my_algorithm}]
Assume the following conditions hold for Algorithm~\ref{alg:my_algorithm}:
\begin{itemize}
    \item The linear system $\vX \vk = \vy$ admits at least one solution.
    \item At each iteration, the index $j = \arg\min_{i \in I_2} \delta_i$ is uniquely defined.
    \item At each iteration \( p \), if there exists an index \( i \in [n] \) such that:  
    (i) \( k_i^{(p)} = 0 \), and  
    (ii) \( s_i^{(p)} = t_i^+ \) or \( s_i^{(p)} = -t_i^- \),  
    then \( u_i \) is assumed to be nonzero.
\end{itemize}
Under these assumptions, Algorithm~\ref{alg:my_algorithm} terminates in a finite number of iterations. Furthermore, the output \( \vk \) of the algorithm is a solution to the following optimization problem:
\begin{equation}
    \min \sum_{i=1}^n \left( t_i^+ k_i^+ + t_i^- k_i^- \right) \quad \text{subject to} \quad \vX \vk = \vy,
    \label{eq:min_L1_norm}
\end{equation}
where \( k_i^+ \) and \( k_i^- \) denote the positive and negative parts of \( k_i \), respectively.
\label{thm: algorithm}
\end{theorem}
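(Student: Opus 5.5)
The plan follows the dual-certificate argument of \citet{pesme2023saddle}, adapted to the weighted penalty. The key invariant is that the dual vector $\vs^{(p)}$ always lies in the weighted box $-t_i^-\le s_i^{(p)}\le t_i^+$, and that it saturates the box at the active coordinates. Concretely, by induction on $p$ I will establish three facts:
\begin{itemize}
\item[(a)] $\vs^{(p)}\in \mathrm{Im}(\vX^\T)$;
\item[(b)] $-t_i^-\le s_i^{(p)}\le t_i^+$ for all $i$;
\item[(c)] $s_i^{(p)}=t_i^+$ whenever $k_i^{(p)}>0$, and $s_i^{(p)}=-t_i^-$ whenever $k_i^{(p)}<0$.
\end{itemize}
Property (a) is immediate, since each update adds a multiple of $\vu=\vX^\T(\vy-\vX\vk^{(p)})$. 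Once the loop exits with $\vX\vk=\vy$, write $\vs=\vX^\T\vw$. Then (b) and (c) are exactly the KKT conditions of the convex problem \eqref{eq:min_L1_norm}, since the subdifferential of $t_i^+k_i^++t_i^-k_i^-$ at $k_i$ is $\{t_i^+\}$, $\{-t_i^-\}$ or $[-t_i^-,t_i^+]$ according to the sign of $k_i$. Equivalently, for any feasible $\vk'$ we get $R(\vk')\ge \langle \vs,\vk'\rangle=\langle \vw,\vy\rangle=\langle\vs,\vk\rangle=R(\vk)$, so the output is optimal.

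For the induction step, I first use the KKT conditions of the constrained least squares at the previous step. Stationarity of $\vk^{(p)}$ for the problem over its sign pattern gives $u_i=0$ for coordinates with $k_i^{(p)}\neq 0$. Hence moving $\vs$ by $\delta_j\vu$ leaves the saturated coordinates fixed, and (c) is preserved for $I_1\cup I_3$. This needs care: a coordinate could have been active with $k_i^{(p-1)}\ne0$ and then become $0$ at $\vk^{(p)}$. Its $s_i$ is then saturated at a boundary while $k_i^{(p)}=0$, and the third assumption ($u_i\neq0$ there) is what guarantees the sign of $u_i$ points back into the box. I would check this using the inequality multipliers of the least-squares problem: for $i\in I_1$ with $k_i^{(p)}=0$ the KKT condition gives $u_i\le 0$, and so $s_i$ moves into the box. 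For $i\in I_2$, the choice $\delta_j=\min\delta_i$ is the largest step keeping every $s_i$, $i\in I_2$, inside the box, and it makes $s_j$ hit the boundary with sign matching $u_j$. This preserves (b). For the new free coordinate $j$, I need the solution $\vk^{(p+1)}$ to satisfy $\mathrm{sign}(k_j^{(p+1)})=\mathrm{sign}(u_j)$ or $k_j^{(p+1)}=0$. This should follow from a first-order argument: the gradient of the least-squares objective in coordinate $j$ at $\vk^{(p)}$ is $-2u_j$, together with convexity and comparing the optimality conditions.

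The main obstacle is finite termination. Here I would use the loss $\|\vX\vk^{(p)}-\vy\|^2$ as a potential. At each step the feasible set of the new least-squares problem contains $\vk^{(p)}$, and strictly improves on it because $u_j\neq0$ gives a descent direction. So the loss strictly decreases. Each $\vk^{(p)}$ is determined by its sign-constraint pattern $(I_1,I_2,I_3,j)$. If the minimizer is non-unique, I would select the one the algorithm defines, or argue via the fitted value $\vX\vk$, which is unique. Since there are finitely many patterns and the loss strictly decreases, no pattern repeats, so the algorithm terminates. It remains to rule out the loop stalling with all $\delta_i=+\infty$ while $\vX\vk\ne\vy$. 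In that case $u_i=0$ for all $i\in I_2$, and also for active coordinates, so $\vX^\T(\vy-\vX\vk^{(p)})=0$. Combined with solvability of $\vX\vk=\vy$, this forces the residual to lie in $\ker\vX^\T\cap\mathrm{Im}\vX=\{0\}$, contradicting $\vX\vk^{(p)}\neq\vy$. Uniqueness of the argmin (the second assumption) prevents simultaneous boundary hits that could break the sign-consistency step. I expect the delicate part to be the sign-consistency of newly activated and deactivated coordinates, which is where the third assumption enters.
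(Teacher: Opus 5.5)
Your proposal is correct and follows essentially the paper's route. Your invariant ($\vs^{(p)}\in\mathrm{Im}(\vX^\T)$, with $\vs^{(p)}$ in the weighted box and saturated on the support of $\vk^{(p)}$) is exactly the paper's condition $\vs^{(p)}\in\partial(\vk^{(p)})$ that yields the KKT certificate, and termination likewise comes from the strictly decreasing least-squares loss over finitely many sign patterns. You in fact justify steps the paper only asserts (the box invariant, the sign of the newly freed coordinate, and ruling out stalling via $\ker\vX^\T\cap\mathrm{Im}\,\vX=\{\vzero\}$), whereas the paper additionally proves linear independence of the active columns to make each $\vk^{(p+1)}$ unique, which your argument shows is not needed for the stated conclusions.
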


\begin{proof}
    The proof is provided in Appendix~\ref{sec: proof of algorithm}.
\end{proof}

\begin{remark}
Algorithm~\ref{alg:my_algorithm} and
Theorem~\ref{thm: algorithm} are  straightforward generalizations of the corresponding result in~\citet{pesme2023saddle}, where the authors assume symmetric growth times, i.e., \( t_i^- = t_i^+ = 1 \) for all \( i \in [n] \).
\end{remark}

\subsection{Dynamics of  diagonal linear network}

\begin{assumption}[Assumptions and Notation for Theorem~\ref{thm: dynamics of diagona linear network}]
\textbf{Shared Notation:}  
We define the loss function as
\[
L(\vtheta) = \|\vX \vk(\vtheta) - \vy\|_2^2.
\]
Let \( \vtheta(t) \) evolve according to the gradient flow dynamics:
\[
\frac{\rd \vtheta}{\rd t} = -\nabla L(\vtheta), \quad \vtheta(0) = \vtheta_0,
\]
and denote the solution by \( \phi(\vtheta_0, t) \).

We assume throughout that the conditions of Theorem~\ref{thm: algorithm} are satisfied. In particular, by Theorem~\ref{thm: algorithm}, Algorithm~\ref{alg:my_algorithm}, when applied to input \( (\vX, \vy, \{t_i^+\}_{i=1}^n, \{t_i^-\}_{i=1}^n) \), terminates after a finite number of iterations. Let \( \{ \vk^{(p)} \}_{p=0}^{p_{\mathrm{max}}} \) denote the sequence of vectors generated at each iteration.

\textbf{Assumptions and Notation for Deep Diagonal Linear Network:}  
Consider an \( L \)-layer diagonal linear network with data matrix \( \vX \in \mathbb{R}^{m \times n} \) and target vector \( \vy \in \mathbb{R}^m \).  Assume $L\geq 3$.
Fix the initialization direction
\[
\vtheta^* = (a_{i,1}^*, \ldots, a_{i,L}^*)_{i=1}^n \in \mathbb{R}^{Ln}.
\]
For each \( i \in [n] \), assume that the minimizer
\[
\mathrm{argmin}_{k \in [L]} (a_{i,k}^*)^2
\]
is unique. Without loss of generality, we assume that this minimum is attained at \( k = L \).

Define the following quantities:
\begin{itemize}
    \item For \( i \in [n] \), \( j \in [L-1] \), define
    \[
    \mu_{i,j}^2 = (a_{i,j}^*)^2 - (a_{i,L}^*)^2.
    \]
    
    \item For each \( i \in [n] \), define the function
    \begin{equation*}
F_i(z) = \int_{a_{i,L}^*}^{z} \frac{1}{\prod_{j=1}^{L-1} \sqrt{x^2 + \mu_{i,j}^2}} \, \rd x.
\end{equation*}

    \item Define
    \[
    t_i^+ = F_i(+\infty), \quad t_i^- = -F_i(-\infty).
    \]
\end{itemize}

\vspace{0.5em}
\textbf{Assumptions and Notation for the General Two-Layer Diagonal Linear Network:}  
For each \( i \in [n] \), let \( \lambda_i^+ \) and \( \lambda_i^- \) denote the largest and smallest eigenvalues, respectively, of the Hessian \( \nabla^2 k_i(\vzero) \). Define
\[
t_i^+ = \frac{1}{\lambda_i^+}, \quad t_i^- = -\frac{1}{\lambda_i^-}.
\]

Fix an initialization direction \( \vtheta^* = (\vtheta_i^*)_{i=1}^n \). Assume that for each \( i \in [n] \), the vector \( \vtheta_i^* \) is not orthogonal to the eigenvectors corresponding to \( \lambda_i^+ \) and \( \lambda_i^- \).

\label{assum:L-layer}
\end{assumption}


\begin{theorem}[Dynamics of Deep Diagonal Linear Networks]
Consider either a deep diagonal linear network or a general two-layer diagonal linear network.  
Suppose Assumption~\ref{assum:L-layer} holds, and let \( \Gamma^s \) denote the trajectory of gradient flow initialized at \( s\vtheta^* \). Then the following statements hold:
\begin{itemize}
    \item For each \( p = 0, 1, \ldots, p_{\mathrm{max}} \), there exists a point \( \vtheta^s \in \Gamma^s \) such that
    \[
    \lim_{s \to 0} \vk(\vtheta^s) = \vk^{(p)}.
    \]
    
    \item The iterated limit
    \[
    \vtheta' = \lim_{s \to 0} \lim_{t \to +\infty} \phi(s\vtheta^*, t)
    \]
    exists, and satisfies
    \[
    \vk(\vtheta') = \vk^{(p_{\mathrm{max}})}.
    \]
\end{itemize}
\label{thm: dynamics of diagona linear network}
\end{theorem}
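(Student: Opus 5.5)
The plan is to reduce both architectures to a single scalar ``coordinate dynamics'' and then rerun the time-rescaling argument of \citet{pesme2023saddle}. Writing $\vr(t)=\vX^\T(\vy-\vX\vk(\vtheta(t)))$, the gradient flow decouples across coordinates given $\vr$:
\[
\dot\vtheta_i = 2\, r_i(t)\,\nabla k_i(\vtheta_i).
\]
So each block $\vtheta_i$ follows the flow of the vector field $\nabla k_i$ (or $-\nabla k_i$), reparametrized by the accumulated residual $S_i(t)=2\int_0^t r_i(\tau)\,\rd\tau$.

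\textbf{Step 1: invariant manifold.} For each $i$, I would identify the invariant structure that lets the position of $\vtheta_i$ be described, to leading order as $s\to0$, by the single scalar $S_i$.
\begin{itemize}
\item For the deep network, the balancedness quantities $a_{i,j}^2-a_{i,L}^2$ are conserved. After rescaling they equal $s^2\mu_{i,j}^2$, so the one-dimensional curve is parametrized by $z=a_{i,L}/s$ and satisfies $\dot z \propto \prod_{j<L}\sqrt{z^2+\mu_{i,j}^2}$.
\item For the general two-layer case, near $\vzero$ the flow is governed by the Hessian, $\dot\vtheta_i\approx 2r_i\nabla^2k_i(\vzero)\vtheta_i$. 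Escape from the $O(s)$ ball therefore happens along the eigendirection of $\lambda_i^+$ when $S_i>0$, and of $\lambda_i^-$ when $S_i<0$. The non-orthogonality assumption guarantees a nonzero component along both eigenvectors.
\end{itemize}

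\textbf{Step 2: escape times in rescaled time.} The key computation is that, in the time variable $\tau=t\cdot c(s)$ with $c(s)=1/\log(1/s)$ (two-layer) or $s^{L-2}$ (deep), the coordinate $k_i$ stays $o(1)$ until the rescaled integral $\tilde S_i(\tau)$ reaches $t_i^+$ or $-t_i^-$, and then becomes macroscopic.
\begin{itemize}
\item For the deep network, $k_i$ remains at scale $s^L$ until $z$ blows up. The definitions $t_i^\pm=\pm F_i(\pm\infty)$ are exactly the finite blow-up times of $\dot z=\prod_j\sqrt{z^2+\mu_{i,j}^2}$.
\item For the two-layer case, $|\vtheta_i|\sim s\,e^{\lambda_i^\pm S_i}$, which reaches $O(1)$ when $\lambda^\pm S_i\approx\log(1/s)$, giving thresholds $1/\lambda_i^+$ and $-1/\lambda_i^-$.
\end{itemize}
Once $\vtheta_i$ is $O(1)$ it follows a limit trajectory of $\pm\nabla k_i$, whose image $k_i$ is unbounded by the third assumption in Definition~\ref{def:general 2-layer}. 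Hence $k_i$ can take any value of the appropriate sign on the fast time scale.

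\textbf{Step 3: induction on $p$.} In the limit $s\to0$, on rescaled time the coordinates with $k_i\ne0$ move infinitely fast along $-\nabla L$ restricted to the sign-constrained set. They therefore instantly reach the constrained least-squares minimizer $\vk^{(p+1)}$ of Algorithm~\ref{alg:my_algorithm}, while the inactive coordinates accumulate $\tilde S$ linearly with slope $u_i$. This is exactly the update $\vs^{(p+1)}=\vs^{(p)}+\delta_j\vu$, and the next activation is the first index hitting its threshold, namely $j=\mathrm{argmin}\,\delta_i$. I would prove by induction that for each $p$ the rescaled trajectory passes within $o(1)$ of $\vk^{(p)}$, with $\tilde S^s\to\vs^{(p)}$.

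Two facts are used in the induction. Coordinates that return to $0$ re-enter the near-zero regime with their memory $\vs$ intact, which is what Step 1 provides. The uniqueness and nondegeneracy assumptions of Theorem~\ref{thm: algorithm} ensure strict inequalities, so limits are stable under perturbation. Finite termination, from Theorem~\ref{thm: algorithm}, then makes the induction finite. After the last step the loss is zero up to $o(1)$, and a Łojasiewicz argument (the loss is analytic) gives convergence of $\phi(s\vtheta^*,t)$ as $t\to\infty$ to a point near $\vtheta'$.

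\textbf{Main obstacle.} I expect the hardest part to be the uniform control of the fast phase: showing that, during the $O(1)$ window of rescaled time where active coordinates relax, the inactive coordinates' $\tilde S_i$ changes by only $o(1)$ while the active ones converge to the constrained minimizer. A further difficulty is the interchange of the limits $s\to0$ and $t\to\infty$ at the end. I would first try a perturbed-ODE comparison on the sign-constrained face, using strict convexity of the restricted least-squares problem implied by the uniqueness assumptions. For the final limit I would use Łojasiewicz with $s$-uniform constants near the analytic zero-loss set.
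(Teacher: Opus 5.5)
Your proposal follows the paper's proof essentially step for step. Your rescaled accumulated residual $\tilde S_i$ is the paper's time mapping $\tau_i^s$ (Definition~\ref{def:time mapping}), with analogue $F_i(a_{i,L}/s)$ for the deep network. The induction on $p$ carrying $\tilde S^s\to\vs^{(p)}$, the exit-time comparison selecting $j=\mathrm{argmin}_i\,\delta_i$, and the closing Łojasiewicz step at a fixed limiting global minimizer are exactly how the paper argues.

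For the fast phase you flag as the main obstacle, the paper's device is to pass to the $s\to0$ limit of the state at the exit time $T^s$. There the inactive blocks tend to $\vzero$ and the active ones tend to points on limit trajectories. The paper then runs the $s$-independent flow from that limit, using Lemma~\ref{lemma:convergence guaranty if initialized at the limit trajectory}: signs are kept by the SIM, and the limit is identified via Łojasiewicz and KKT. It transfers the result back by continuity in the initial data over a fixed horizon. The long plateau, which you only assert, is handled separately by loss monotonicity (Lemma~\ref{lemma:small change of large neurons}).
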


\begin{proof}

    The proof is provided in Appendix~\ref{sec:proof of dynamics}.
\end{proof}

\begin{corollary}[Implicit Bias of Diagonal Linear Networks]
Under Assumption~\ref{assum:L-layer}, the implicit bias of a deep diagonal linear network or a general two-layer diagonal linear network with infinitesimal initialization is given by
\[
R(\vk) = \sum_{i=1}^n \left( t_i^+ k_i^+ + t_i^- k_i^- \right).
\]
\label{cor:L-layer}
\end{corollary}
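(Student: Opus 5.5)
The corollary should follow by combining Theorem~\ref{thm: dynamics of diagona linear network} with Theorem~\ref{thm: algorithm}. The plan is first to make precise what ``the implicit bias is $R$'' means. In analogy with the $\ell_1$ statement in the preliminaries, it means
\[
\lim_{s\to 0}\lim_{t\to+\infty}\vk\bigl(\phi(s\vtheta^*,t)\bigr)=\vk^*, \qquad \vk^*\in\arg\min_{\vX\vk=\vy} R(\vk).
\]

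\textbf{Step 1 (the flow selects $\vk^{(p_{\max})}$).} By the second bullet of Theorem~\ref{thm: dynamics of diagona linear network}, the iterated limit $\vtheta'=\lim_{s\to0}\lim_{t\to\infty}\phi(s\vtheta^*,t)$ exists and satisfies $\vk(\vtheta')=\vk^{(p_{\max})}$. For the deep network, $\vk$ is a polynomial in $\vtheta$. For the general two-layer network, each $k_i$ is real analytic. In both cases $\vk$ is continuous, so
\[
\lim_{s\to0}\lim_{t\to\infty}\vk(\phi(s\vtheta^*,t))=\vk(\vtheta')=\vk^{(p_{\max})}.
\]
The values $t_i^\pm$ entering the algorithm are the ones specified in Assumption~\ref{assum:L-layer}. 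These are $F_i(\pm\infty)$ for deep networks and $\pm 1/\lambda_i^\pm$ for the general two-layer model.

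\textbf{Step 2 (the algorithm output minimizes $R$).} Assumption~\ref{assum:L-layer} includes the hypotheses of Theorem~\ref{thm: algorithm}, so the algorithm terminates and its output $\vk^{(p_{\max})}$ solves \eqref{eq:min_L1_norm}. Combining with Step 1 gives the claim. I would also verify that the $t_i^\pm$ are genuinely positive and finite, since Algorithm~\ref{alg:my_algorithm} requires positive inputs.

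For the general two-layer network this holds because $\lambda_i^+>0>\lambda_i^-$ by Definition~\ref{def:general 2-layer}.

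For the deep network, I would check finiteness and positivity of $F_i(\pm\infty)$. The integrand is positive, so $t_i^+>0$ and $t_i^->0$ whenever they are finite. For finiteness, the integrand decays like $|x|^{-(L-1)}$ and $L\ge3$, so both tails are integrable. Near the origin the integrand could blow up, but only if some $\mu_{i,j}=0$. This is excluded: the uniqueness of $\arg\min_k (a^*_{i,k})^2$ gives $\mu_{i,j}^2>0$ for all $j$.

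\textbf{Main obstacle.} There is no real difficulty here; all the substantive work sits in the two theorems. The only step needing care is the interchange implicit in Step 1: one must pass the limit through $\vk$. This relies on the existence of the limit point $\vtheta'$ in parameter space, which Theorem~\ref{thm: dynamics of diagona linear network} already guarantees, together with continuity of $\vk$.
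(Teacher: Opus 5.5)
Your proposal is correct and follows the paper's own argument, which simply chains the second bullet of Theorem~\ref{thm: dynamics of diagona linear network} (the iterated limit $\vtheta'$ exists with $\vk(\vtheta')=\vk^{(p_{\max})}$) with Theorem~\ref{thm: algorithm} (the algorithm's output minimizes $R$ subject to $\vX\vk=\vy$). Your extra checks are correct details that the paper leaves implicit: passing the limit through $\vk$ by continuity, and positivity and finiteness of $t_i^\pm$ (via $\lambda_i^-<0<\lambda_i^+$ in the two-layer case, and via $\mu_{i,j}^2>0$ and $L\ge 3$ in the deep case).
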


\begin{proof}
This is a direct corollary of Theorem~\ref{thm: algorithm} and Theorem~\ref{thm: dynamics of diagona linear network}.
\end{proof}

\begin{remark}
We provide a detailed comparison of Theorem~\ref{thm: dynamics of diagona linear network} and Corollary~\ref{cor:L-layer} with the existing literature in Section~\ref{sec:contribution}. 
\end{remark}




\section{Intuition of the Dynamics}
\label{sec:intuition}
In this section, we use a three-layer diagonal linear network as an illustrative example. The proof of Theorem~\ref{thm: dynamics of diagona linear network} serves as a rigorous formalization of the intuition developed in this section.

Specifically, we analyze the gradient flow dynamics of the model $F(\vtheta)(\vx)=\sum_{i=1}^n a_ib_ic_ix_i$ under a squared error loss $L(\vtheta)$. For a single neuron (dropping the index $i$), the dynamics for its parameters $(a,b,c)$ are given by:
\begin{align*}
    \frac{\rd a}{\rd t} &= bc \cdot v, &
    \frac{\rd b}{\rd t} &= ac \cdot v, &
    \frac{\rd c}{\rd t} &= ab \cdot v,
\end{align*}
where $v = \sum_{k=1}^m \vX_{k}(y_k - F(\vtheta)(\vX_k))$ is a term related to the correlation between the neuron's feature and the prediction error. 


\subsection{The Feature Selection Phase and Distance-Speed Argument}

\label{sec:feature selection}

In the early phase of training, with infinitesimally small initial parameters $\vtheta(0)$, the model output $F(\vtheta)(\vx)$ is negligible. Consequently, $v(t)$ remains approximately constant, $v(t) \approx v_0 = \sum_{k=1}^m \vX_{k}y_k$. 

This constant, $v_0$, can be interpreted as the \textbf{speed} of the neuron's evolution.
By rescaling time as $\tau = v_0 t$, we can isolate the geometry of the parameter trajectory:
\begin{align*}
    \frac{\rd a}{\rd \tau} &= bc, &
    \frac{\rd b}{\rd \tau} &= ac, &
    \frac{\rd c}{\rd \tau} &= ab.
\end{align*}
These dynamics possess two conserved quantities:
\begin{align*}
    a^2(\tau) - c^2(\tau) &= a^2(0) - c^2(0) := \mu_1^2 \\
    b^2(\tau) - c^2(\tau) &= b^2(0) - c^2(0) := \mu_2^2
\end{align*}
Without loss of generality, let us assume $a(0)^2 > b(0)^2 > c(0)^2 > 0$ and $a(0)b(0) > 0$. Substituting the conserved quantities into the dynamic for $c$ yields:
\[
\frac{\rd c}{\rd \tau} = \sqrt{c^2 + \mu_1^2} \sqrt{c^2 + \mu_2^2}.
\]
Now, consider an initialization of scale $s \ll 1$: $a(0)=a_0 s, b(0)=b_0 s, c(0)=c_0 s$. The conserved quantities become $\mu_1^2 = (a_0^2-c_0^2)s^2 := \tilde{\mu}_1^2 s^2$ and $\mu_2^2 = (b_0^2-c_0^2)s^2 := \tilde{\mu}_2^2 s^2$. By defining a scaled parameter $\tilde{c}(\tau) = c(\tau)/s$, its dynamic is:
\[
\frac{\rd \tilde{c}}{\rd \tau} = s \sqrt{\tilde{c}^2 + \tilde{\mu}_1^2} \sqrt{\tilde{c}^2 + \tilde{\mu}_2^2}, \quad \text{with} \quad \tilde{c}(0) = c_0.
\]
A neuron becomes ``significant" when its parameters grow from $\mathcal{O}(s)$ to $\mathcal{O}(1)$. In the scaled frame, this corresponds to $\tilde{c}$ growing from $c_0$ to $\pm\infty$. The time required for this growth (the blow-up time) can be found by separating variables. For instance, the time to grow to positive significance is:
\[
\begin{aligned}
\tau^+ &= \int_{c_0}^{\infty} \frac{\rd \tilde{c}}{s \sqrt{\tilde{c}^2 + \tilde{\mu}_1^2} \sqrt{\tilde{c}^2 + \tilde{\mu}_2^2}}\\& = \frac{1}{s} \int_{c_0}^{\infty} \frac{\rd x}{\sqrt{x^2 + \tilde{\mu}_1^2} \sqrt{x^2 + \tilde{\mu}_2^2}}.
\end{aligned}
\]
This structure motivates the core intuition. We define the \textbf{distance} a neuron must travel as the integral, which depends only on the initial \emph{direction} $(a_0, b_0, c_0)$:
\[
D^+ = \int_{c_0}^{\infty} \frac{\rd x}{\sqrt{x^2 + \tilde{\mu}_1^2} \sqrt{x^2 + \tilde{\mu}_2^2}}.
\]
The \textbf{time} to grow to significance in the original time $t$ is obtained by un-scaling $\tau^+$:
\begin{equation}
t^+ = \frac{\tau^+}{|v_0|} = \frac{1}{|v_0|s} D^+.
\label{eq:distance speed}
\end{equation}

\textbf{Distance-Speed Argument:}
Equation~\eqref{eq:distance speed} reveals that the time for a neuron to become significant is proportional to a ``distance'' $D^+$ (determined by its initialization) and inversely proportional to its ``speed'' $|v_0|$ (determined by its feature's correlation with the target). 
The neurons with \textbf{shortest time} 
will be $\fO(1)$ first, and therefore it will be
chosen as the grown feature.

\subsection{The Learning Phase and Sign-Locking of Features}

\textbf{Learning Phase:}
Once a feature, say $k_j = a_jb_jc_j$, has grown to a significant magnitude at time $t_{\min}$, the dynamics of the system change. The approximation that the learned vector $\vk$ is near zero no longer holds. The system now enters a ``learning phase" where the gradient descent dynamics actively adjust the significant features to minimize the loss function $L(\vtheta)$. 

\textbf{Mismatch of Time Scales of Two Phases:} There exists a mismatch in the time scales of the feature selection and learning phases. Specifically, the feature selection phase persists for a duration on the order of \(\mathcal{O}\left(\frac{1}{s}\right)\), whereas the duration of the learning phase is independent of \(s\). Consequently, when \(s\) is sufficiently small, the time spent in the learning phase becomes negligible in comparison to that of the feature selection phase. This implies that, during training, no new features are developed away from zero.

\textbf{Sign-Locking:}
The key insight is that once $k_j(t)$ has become significantly non-zero, its sign is effectively locked for a very long time. Consider the case where $k_j(t)$ has grown to be positive, meaning $a_j(t),b_j(t),c_j(t)$ are all large. Since we have conserved quantities
\begin{align*}
    a_j^2(\tau) - c_j^2(\tau) &= a_j^2(0) - c_j^2(0)  \\
    b_j^2(\tau) - c_j^2(\tau) &= b_j^2(0) - c_j^2(0), 
\end{align*}
therefore, the parameter $\vtheta_i(\tau):=(a_j(\tau),b_j(\tau),c_j(\tau))$ is \textbf{restricted to a curve} during training. As a consequence,
for $k_j(t)$ to flip its sign and become negative, $a_j(t),b_j(t),c_j(t)$ would first have to decrease back to its initialization, which is $\fO(s)$ scale. So
the time for $k_j(t)$ to flip its sign and become negative is of order $\fO(\frac{1}{s})$. Since we assume $s$ is a sufficiently small number representing the scale of  initialization, this required time is enormous. 

This reveals a mismatch in time scales during the learning dynamics. Consider the second learning phase. Let \(k_j\) and \(k_l\)  denote the feature chosen  at the first and second feature selection phase, respectively. We compare the following two time scales:

\begin{itemize}
    \item The time required for \(k_j\) to flip its sign is of order \(\frac{1}{s}\), which diverges to infinity as \(s \to 0\).
    \item The time required for \(k_j\) and \(k_l\) to reach a minimum of the loss function \(L(\boldsymbol{\theta})\) without changing their signs is approximately independent of \(s\), since both coordinates have already moved away from zero.
\end{itemize}

Consequently, when \(s\) is sufficiently small, \(k_j\) and \(k_l\) reach the minimum long before \(k_j\) is able to flip its sign. This results in a learning phase that can be effectively modeled as a sign-constrained optimization problem. For instance, if \(k_j\) becomes positive during the first learning phase, then the second learning phase can be described by the following constrained optimization:
\[
\min_{k_j, k_l} \left\| \vX_{:,j} k_j + \vX_{:,l} k_l - \vy \right\|_2^2 \quad \text{subject to} \quad k_j \geq 0.
\]

\subsection{Comprehensive Dynamics}


\textbf{Inter-Phase Continuity of Feature Selection:}  
Although certain neurons are not selected during a given feature selection phase, they still undergo non-trivial movement and settle at specific positions within the interval \((-t^-, t^+)\), as characterized by Equation~\eqref{eq:distance speed}. Given that the time scale of the subsequent feature learning phase is negligible, it does not significantly affect the final positions of these neurons. As a result, in the next feature selection phase, each non-activated neuron resumes its dynamics from the position reached at the end of the previous selection phase.

\textbf{Feature Selection Drives Subsequent Learning:}  
The training process alternates between feature selection and feature learning phases, each exerting influence on the other. During a feature selection phase, a new feature is typically activated. Consequently, the subsequent learning phase jointly optimizes this newly selected feature along with those previously selected.

\textbf{Learning Shapes Future Selection Dynamics:}  
Conversely, each learning phase modifies the correlation vector \(\vv = \vX^\T(\vy-\vX\vk^{(p)} )\), which governs the “speed” of neurons in the ensuing feature selection phase. As a result, the selection behavior in each feature selection phase is inherently influenced by the convergence point of the preceding learning phase.

\textbf{Comprehensive Dynamics:}  
The training process proceeds iteratively through alternating phases. At each iteration \(p\), the following steps are performed:

\begin{enumerate}
    \item \textbf{Feature Selection Phase:}  
    A new neuron \(j_p\) is selected from the inactive set based on the minimal activation time, computed as the ratio of ``distance'' to ``speed''.  
    The distance corresponds to the remaining gap inherited from the end of the previous selection phase, while the speed is determined by the correlation vector \(\vv = \vX^\T(\vy-\vX\vk^{(p)})\).
    
    \item \textbf{Learning Phase:}  
    The current active set of neurons \(\mathcal{A}_p = \{j_1, \dots, j_p\}\) jointly minimizes the loss function, subject to individual sign constraints on their corresponding coefficients.
    
    \item \textbf{Termination:}  
    The procedure repeats until convergence, i.e., the gradient flow converges to its global minimum.
\end{enumerate}

This framework of sequential feature activation and joint optimization defines the overall training trajectory and is exactly Algorithm~\ref{alg:my_algorithm}.

\section{Mechanism Behind the Dynamics}
\label{sec:mechanism}
In Section~\ref{sec:intuition}, we used a three-layer diagonal linear network as an illustrative example to explain the validity of Theorem~\ref{thm: dynamics of diagona linear network}. In this section, we aim to uncover a more general underlying mechanism that helps us understand the dynamics of diagonal linear networks.

\subsection{Structural Invariant Manifold (SIM)}

\begin{definition}[Structural Invariant Manifold (SIM)~\cite{zhao2026architecture}]
Let $F(\vtheta)(\vx),\vtheta\in \sR^M, \vx\in \sR^d$
be an analytic parametric model. For an immersed submanifold $\fM\subset \sR^M$, we say $\fM$ is a \textbf{structural invariant manifold} if it is invariant under $-\nabla_{\vtheta}L(\vtheta)$ in Equation~\eqref{eq:gradient flow} for any real analytic loss function $\ell:\sR\times \sR \to \sR$ and dataset $S$\footnote{See Definition~\ref{def:invariant manifold} for the definition of invariant manifold.}.
    
\end{definition}

\begin{definition}[orbit, page 33 of~\citet{jurdjevic1997geometric}]
    Let $\fF$ be a family of analytic vector fields on an analytic manifold $\fM$. Let $G=G(\fF)$ be the group (pseudogroup) of diffeomorphisms (local diffeomorphisms) generated by $\{e^{tX}\mid t\in\sR, X\in\fF\}$ under composition. For any $\vtheta\in \fM$, we define the \textbf{orbit} of $\fF$ through $\vtheta$ as $\{g(\vtheta)\mid g\in G\}$, which we denote by $O_{\fF}(\vtheta)$. \footnote{There is  a detailed explanation of orbit and its properties in chapter 2 of~\citet{jurdjevic1997geometric}.}
\label{def:orbit}
\end{definition}


\begin{theorem}[SIMs of $F$ are orbit unions of $\fF$~\cite{zhao2026architecture}]
 Let \( F(\vtheta)(\vx),\vtheta\in \sR^M, \vx\in \sR^d \) be an analytic parametric model. Let \( \fF = \left\{ \nabla_{\vtheta} F(\cdot)(\vx) \mid \vx \in \mathbb{R}^d \right\} \). 
Then 
\begin{itemize}
    \item Each SIM of $F$ is union of orbits of $\fF$.
    \item Each orbit of $\fF$ is an SIM.
\end{itemize}
\label{def:sim}
\end{theorem}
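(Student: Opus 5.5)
The statement just given is the theorem of \citet{zhao2026architecture} that SIMs of $F$ are exactly unions of orbits of $\fF=\{\nabla_{\vtheta}F(\cdot)(\vx)\mid \vx\in\sR^d\}$. The plan is to reduce both directions to a single structural fact. For every analytic loss $\ell$ and dataset $S$, the gradient field has the form
\[
-\nabla_{\vtheta}L(\vtheta)=-\sum_{i=1}^n \partial_1\ell\bigl(F(\vtheta)(\vx_i),y_i\bigr)\,\nabla_{\vtheta}F(\vtheta)(\vx_i).
\]
So every training vector field is a combination of fields in $\fF$ with analytic scalar coefficients. Conversely, suitable choices of $\ell$ and $S$ recover each single field in $\fF$, up to a nonvanishing scalar factor.

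\textbf{Each orbit is an SIM.} By the orbit theorem (Sussmann/Nagano, chapter 2 of \citet{jurdjevic1997geometric}), each orbit $O_{\fF}(\vtheta)$ is an immersed analytic submanifold. In the analytic case its tangent space at every point equals the evaluation of the Lie algebra $\mathrm{Lie}(\fF)$ there. Each $X\in\fF$ is therefore tangent to the orbit. A combination $\sum_i c_i(\vtheta)X_i$ with analytic coefficients is also tangent to the orbit, since tangency is pointwise linear. Hence $-\nabla L$ is tangent to $O_{\fF}(\vtheta)$. A vector field tangent to an immersed submanifold whose leaves are integral manifolds of a distribution has integral curves that stay in the leaf, by uniqueness of ODE solutions in the leaf's own manifold structure. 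This gives invariance, matching Definition~\ref{def:invariant manifold}.

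\textbf{Each SIM is a union of orbits.} Let $\fM$ be an SIM. I would show that each generator $X_{\vx}=\nabla_{\vtheta}F(\cdot)(\vx)$ is tangent to $\fM$, and that its flow $e^{tX_{\vx}}$ keeps $\fM$ invariant. To do this, choose a dataset with one sample $(\vx,y)$ and the loss $\ell(f,y)=-f$, which is analytic. Then $-\nabla L=X_{\vx}$ exactly. Invariance of $\fM$ under this field means that flows starting in $\fM$ remain in $\fM$, for both positive and negative time if the invariance notion is two-sided. If the definition only gives forward time, use $\ell(f,y)=f$ to obtain $-X_{\vx}$. The pseudogroup $G(\fF)$ is generated by compositions of such flows, so $g(\vtheta)\in\fM$ for all $g\in G$ and all $\vtheta\in\fM$. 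Thus $O_{\fF}(\vtheta)\subset\fM$, and $\fM=\bigcup_{\vtheta\in\fM}O_{\fF}(\vtheta)$.

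\textbf{Main obstacle.} The delicate step is reconciling the precise notion of invariance for immersed (not embedded) submanifolds with flows. In the first direction we need that tangency implies invariance for an immersed leaf. Here I would invoke the orbit theorem's conclusion that the orbit is an integral manifold of $\mathrm{Lie}(\fF)$ with the leaf topology, and then solve the ODE inside the leaf chart. The regularity of the coefficients $\partial_1\ell$ must be checked in the leaf topology; it holds because they are analytic functions on $\sR^M$ restricted to the leaf. In the second direction the subtlety is local flows, defined only for small times. I would handle this by composing flows piecewise along the definition of the pseudogroup.
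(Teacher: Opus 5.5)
The paper does not prove this theorem; it imports it from the cited source. So there is no in-paper argument to compare against, and your proposal has to stand on its own. Your reduction to $-\nabla_{\vtheta}L=-\sum_i\partial_1\ell\,\nabla_{\vtheta}F(\cdot)(\vx_i)$ is the right one.

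The direction ``every SIM is a union of orbits'' is complete as written. With one sample and $\ell(f,y)=-f$ you get $-\nabla L=\nabla_{\vtheta}F(\cdot)(\vx)$ exactly. Definition~\ref{def:invariant manifold} already requires invariance on the whole maximal interval of existence, in both time directions. Hence each $e^{tX}$ with $X\in\fF$ maps points of $\fM$ into $\fM$ wherever it is defined. Closure under composition is then immediate, so the local-flow issue you flag is not actually an obstacle.

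The step that does not go through as you justify it is ``tangent implies invariant'' in the other direction. Solving the restricted ODE in a leaf chart and invoking uniqueness only shows that the ambient trajectory $\gamma$ stays in $O=O_{\fF}(\vtheta)$ for a short time. The maximal solution inside the leaf may stop existing while $\gamma$ continues in $\sR^M$. Tangency to a single immersed submanifold never suffices: $(0,1)\times\{0\}\subset\sR^2$ is tangent to $\partial_x$ but not invariant.

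The missing ingredient is twofold: the orbits partition $\sR^M$, and your computation gives $-\nabla L(\vtheta)\in T_{\vtheta}O_{\fF}(\vtheta)$ at every $\vtheta\in\sR^M$, not only on $O$. Let $I$ be the maximal interval of $\gamma$ and consider $\{t\in I:\gamma(t)\in O\}$. This set contains $0$ and is open by your local argument. It is also closed. If $t_n\to t^*\in I$ with $\gamma(t_n)\in O$, then $\gamma(t^*)$ lies in some orbit $O'$. The local argument applied to $O'$ gives $\gamma(t)\in O'$ for $t$ near $t^*$, so $O\cap O'\neq\emptyset$ and hence $O'=O$. Connectedness of $I$ then finishes the argument.

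You also do not need Nagano's theorem. Sussmann's orbit theorem already gives $X(\vtheta)\in T_{\vtheta}O_{\fF}(\vtheta)$ for every $X\in\fF$, and tangency of $-\nabla L$ then follows by pointwise linear algebra.
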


\begin{remark}
 Theorem~\ref{def:sim} implies that orbit is the ``smallest unit'' of SIM.
\end{remark}

\begin{theorem}[SIM of diagonal linear network]
Consider the model $F(\vtheta)(\vx)=\sum_{i=1}^n k_i(\vtheta_i)x_i, \vtheta=(\vtheta_i)_{i=1}^n, \vx=(x_i)_{i=1}^n$.  Assume that for all $i\in [n]$, the function \( k_i(\vtheta_i) \) is real analytic, and has a unique critical point at \(  \vzero \). Define 
\begin{itemize}
    \item $\fF = \left\{ \nabla_{\vtheta} F(\cdot)(\vx) \mid \vx \in \mathbb{R}^d \right\}$
    \item $\fF_i = \left\{ \nabla k_i(\cdot)\right\}, i\in [n].$ 
\end{itemize}
Then for each $\vtheta=(\vtheta_i)_{i=1}^n$, we have 
$$O_{\fF}(\vtheta)= \prod_{i=1}^n O_{\fF_i}(\vtheta_i).$$
Besides, for each $i\in [n]$, the following holds:
\begin{itemize}
    \item If $\vtheta_i=\vzero$, then $O_{\fF_i}(\vtheta_i)=\{\vzero\}.$
    \item If $\vtheta_i=\vzero$, then $O_{\fF_i}(\vtheta_i)$ is a simple curve\footnote{ Here, a simple curve refers to an analytic curve that does not intersect itself.}.
\end{itemize}

\label{thm:SIM}
\end{theorem}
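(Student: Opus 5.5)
We compute the family \(\fF\) explicitly and reduce everything to the single-coordinate families \(\fF_i\). Since \(F(\vtheta)(\vx)=\sum_i k_i(\vtheta_i)x_i\), we have
\[
\nabla_{\vtheta}F(\cdot)(\vx)=\bigl(x_1\nabla k_1(\vtheta_1),\ldots,x_n\nabla k_n(\vtheta_n)\bigr),
\]
so every element of \(\fF\) is a linear combination \(\sum_i x_i Y_i\), where \(Y_i\) is the vector field equal to \(\nabla k_i\) in the \(\vtheta_i\)-block and zero elsewhere. Taking \(\vx=e_i\) shows \(Y_i\in\fF\). The fields \(Y_i\) act on disjoint blocks of coordinates, so they commute and their flows act independently on each factor. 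This gives \(e^{tY_i}(\vtheta)=(\vtheta_1,\ldots,e^{t\nabla k_i}(\vtheta_i),\ldots,\vtheta_n)\), and the flow of \(\sum_i x_iY_i\) is the product of the flows \(e^{tx_i\nabla k_i}\).

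\textbf{The product formula.} For the inclusion \(\supseteq\), compose the flows of \(Y_1,\ldots,Y_n\) one block at a time. Any point of \(\prod_i O_{\fF_i}(\vtheta_i)\) is reached this way, because each \(O_{\fF_i}(\vtheta_i)\) is by definition reached from \(\vtheta_i\) by finitely many flows of \(\nabla k_i\). For the inclusion \(\subseteq\), each generator \(e^{tX}\) with \(X=\sum x_iY_i\) acts on block \(i\) as the time-\(tx_i\) flow of \(\nabla k_i\). This flow maps \(O_{\fF_i}(\vtheta_i)\) into itself, so every composition stays in the product. The pseudogroup and domain issues (flows that are only locally defined) are harmless here, since all maps are block-diagonal.

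\textbf{Structure of each \(O_{\fF_i}(\vtheta_i)\).} Here \(\fF_i\) consists of the single field \(\nabla k_i\), up to the real time parameter. The orbit is therefore exactly the maximal integral curve of \(\nabla k_i\) through \(\vtheta_i\), traversed in both time directions.
\begin{itemize}
\item If \(\vtheta_i=\vzero\), this point is a zero of \(\nabla k_i\), because \(\vzero\) is a critical point. The orbit is then \(\{\vzero\}\).
\item If \(\vtheta_i\neq\vzero\), the second bullet of the statement presumably intends this case. By uniqueness of the critical point, \(\nabla k_i\) never vanishes along the trajectory, so the integral curve is a regular analytic immersion of an open interval.
\end{itemize}

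\textbf{Main obstacle: injectivity of the curve.} A nonsingular integral curve of an autonomous field is either injective or periodic. A gradient field admits no periodic orbit, because \(k_i\) is strictly increasing along it: \(\frac{\rd}{\rd t}k_i=\|\nabla k_i\|^2>0\). If the curve returned to the same point, we would have \(k_i(\gamma(t_1))=k_i(\gamma(t_2))\) with \(t_1<t_2\), contradicting strict monotonicity. Hence the curve does not self-intersect and is a simple analytic curve, which completes the proof. The only delicate point is to state precisely that the orbit of a one-element family is the full integral curve; the paper's footnoted notion of "simple curve" is exactly this injectivity, so strict monotonicity of \(k_i\) along the flow carries the argument.
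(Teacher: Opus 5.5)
Your proof is correct and follows the paper's argument: you decouple $\nabla_{\vtheta}F$ block by block to get the product of orbits, note that $\vzero$ is a fixed point, and use strict monotonicity of $k_i$ along the flow ($\frac{\rd}{\rd t}k_i=\|\nabla k_i\|^2>0$) to rule out self-intersection. The differences are minor. You identify the orbit of the one-field family directly with its maximal integral curve, where the paper invokes the Hermann--Nagano theorem. You also check both product inclusions explicitly, where the paper only asserts $\fF=\prod_i\fF_i$, which is loose since elements of $\fF$ scale each block by $x_i$.
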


\begin{proof}
By calculation, we have
$$\nabla_{\vtheta}F(\vtheta)(\vx)=(\nabla k_i(\vtheta_i)x_i)_{i=1}^n.$$
So we have 
$$\fF=\prod_{i=1}^n \fF_i.$$
Therefore, it holds that 
$$O_{\fF}(\vtheta)=\prod_{i=1}^n O_{\fF_i}(\vtheta_i).$$
If $\vtheta_i=0$, then $\nabla k_i(\vtheta_i)=\vzero.$ So $O_{\fF_i}(\vtheta_i)=\{\vzero\}.$ If $\vtheta_i\neq 0$, by assumption, $\nabla k_i(\vtheta_i)\neq 0$. By Hermann--Nagano Theorem (Theorem 6 in Section 2 of~\citet{jurdjevic1997geometric}), $O_{\fF_i}(\vtheta_i)$ is a real analytic submanifold, and $\dim(O_{\fF_i}(\vtheta_i))=1$.

It is readily to verify that, if $\vtheta_i\neq \vzero$, on $O_{\fF_i}(\vtheta_i)$, the value of $k_i(\cdot)$ is strictly increasing. So   $O_{\fF_i}(\vtheta_i)$ is non-self intersecting.

\end{proof}

\subsection{Implications of SIM in Dynamics}

\textbf{SIM Induces Two-Phase Dynamics:}  
In Theorem~\ref{thm:SIM}, we show that for each \( i \in [n] \), if the initial parameter satisfies \( \vtheta_i = \vzero \), then \( O_{\fF_i}(\vtheta_i) = \{\vzero\} \). This invariance implies that parameters initialized exactly at zero remain stationary under gradient flow indefinitely. In our setting, however, parameters are initialized infinitesimally close to zero rather than exactly at zero. Then such parameters require an infinite amount of time to move significantly away from the origin. Consequently, the learning dynamics exhibit a natural separation into two distinct phases: an initial feature selection phase, during which parameters remain near zero for an extended period, followed by a learning phase that unfolds over a finite time scale. 


\textbf{SIM Induces Sign Constraint During the Learning Phase:}  
By Theorem~\ref{thm:SIM}, under gradient flow dynamics, if the initialization \( \vtheta^* = (\vtheta_i^*)_{i=1}^n \) satisfies \( \vtheta_i^* \neq \vzero \), then for each \( i \in [n] \), the trajectory \( \vtheta_i(t) \) evolves along a simple curve for all \( t \in \mathbb{R} \). Suppose, for contradiction, that \( \vtheta_i(t) \) undergoes a sign change during training; that is, there exist times \( t_1 < t_2 \) such that \( \vtheta_i(t_1) > \delta > 0 \) and \( \vtheta_i(t_2) < \delta' < 0 \). Since \( \vtheta_i(t) \) evolves along a continuous simple curve, the intermediate value theorem implies the existence of some \( \hat{t} \in (t_1, t_2) \) such that \( \vtheta_i(\hat{t}) = \vtheta_i^* \).

Now, if the initialization \( \vtheta_i^* \) is chosen to be infinitesimally small, then reaching this value along the trajectory requires an infinite amount of time. However, the learning phase proceeds over a finite time horizon. As a result, such a sign change cannot occur within the learning phase. This implies that, due to the structure imposed by SIM, the sign of each \( \vtheta_i(t) \) is effectively preserved throughout the learning phase.

\subsection{Mechanism of Incremental Learning}  
In this context, \emph{incremental learning} refers to the phenomenon whereby \textit{only one neuron} is selected during the feature selection phase.

Consider model of the form $F(\vtheta)(\vx)=\sum_{i=1}^n k_i(\vtheta_i)x_i$.
For each \( i \in [n] \), the gradient flow dynamics of the parameter \( \vtheta_i \) are given by:
\begin{equation}
    \frac{\rd \vtheta_i}{\rd t} = \nabla k_i(\vtheta_i) \, u_i(\vtheta),
\end{equation}
where \( \vu(\vtheta) = \vX^\T (\vy-\vX \vk(\vtheta)) \). During the feature selection phase, the change in \( \vtheta \) is small due to the scale of initialization. Consequently, \( u_i(\vtheta) \) remains approximately constant and can be approximated by a fixed value \( u_i^* \). Therefore, the dynamics simplify to:
\begin{equation}
    \frac{\rd \vtheta_i}{\rd t} \approx \nabla k_i(\vtheta_i) \, u_i^*.
\end{equation}

Under this approximation, \( u_i^* \) can be interpreted as the effective “speed” at which \( \vtheta_i \) evolves along the  curve \( O_{\fF_i}(\vtheta_i) \).

As established in Section~\ref{sec:feature selection} and Lemma~\ref{lemma: short time leads to zero}, for deep diagonal linear networks and general two-layer diagonal linear networks with an initialization scale $s$, there exists a scaling function $h(s)$ such that $\lim_{s \to 0} h(s) = +\infty$. This function governs the temporal dynamics required for a neuron to escape the initialization regime. Specifically, the time required for a neuron to attain an $O(1)$ magnitude in the positive or negative direction is approximately $t_i^+ h(s)$ and $t_i^- h(s)$, respectively. For instance:
\begin{itemize}
    \item \( h(s) = \frac{1}{s} \) for three-layer diagonal linear networks,
    \item \( h(s) = -\log s \) for general diagonal linear networks.
\end{itemize}

Define the quantity
\[
\delta_i := 
\begin{cases} 
\frac{t_i^+}{u_i^*}, & \text{if } u_i^* > 0, \\[8pt]
\frac{t_i^-}{|u_i^*|}, & \text{if } u_i^* < 0.
\end{cases}
\]
$\delta_i$ represents the effective time required for neuron \( i \) to $O(1)$, taking into account both its growth direction and speed. The neuron with the smallest \( \delta_i \) will be the one that activates first and is thus selected during the feature selection phase. Under generic conditions on the data \( \vX \) and \( \vy \), the minimum of \( \delta_i \) is attained uniquely, implying that only one neuron is selected in this phase.

\section{Experiments}

We consider  the model  $F(\vtheta)(\vx)=\sum_{i=1}^4 k_i(\vtheta_i)x_i$, where
$$k_i(\vtheta_i)=2a_ib_i+(a_i^2+b_i^2+c_i^2)c_i \, , i=1,2 ;$$ $$k_i(\vtheta_i)=a_i\tanh(a_i)-(e^{b_i}-1)^2\,, i=3,4.$$
One can readily verify that this model satisfies Definition~\ref{def:general 2-layer}. Ideally, to verify Theorem~\ref{thm: dynamics of diagona linear network}, we would require infinitesimal initialization and gradient flow, which are impractical to implement in experimental settings. As a practical alternative, we initialize the model with a sufficiently small scale of \(10^{-60}\), and employ a relatively large learning rate of $0.1$ to accelerate the training of gradient descent.

The data matrix and label vector are given by:
\begin{equation}
\vX = \begin{pmatrix}
1 & 0.5 & 0.7 & 0 \\
0.5 & 1 & 0.1 & 0.7
\end{pmatrix}, \quad 
\vy = \begin{pmatrix}
1 \\
0
\end{pmatrix}.
\label{eq:data matrix}
\end{equation}

As illustrated in Figure~\ref{fig:experiment}, the training dynamics demonstrate transitions between successive saddle points. The three dashed lines in the second panel indicate the values of \(k_i(\cdot)\) following each learning phase. In Appendix~\ref{sec:details of figure}, we compute the values of \(\vk^{(p)}\) in Algorithm~\ref{alg:my_algorithm} for \(p = 1, 2, 3\), and confirm that these computed values closely correspond to those represented by the dashed lines in the figure.

\begin{figure}[h]
    \centering
    \includegraphics[width=0.9\linewidth]{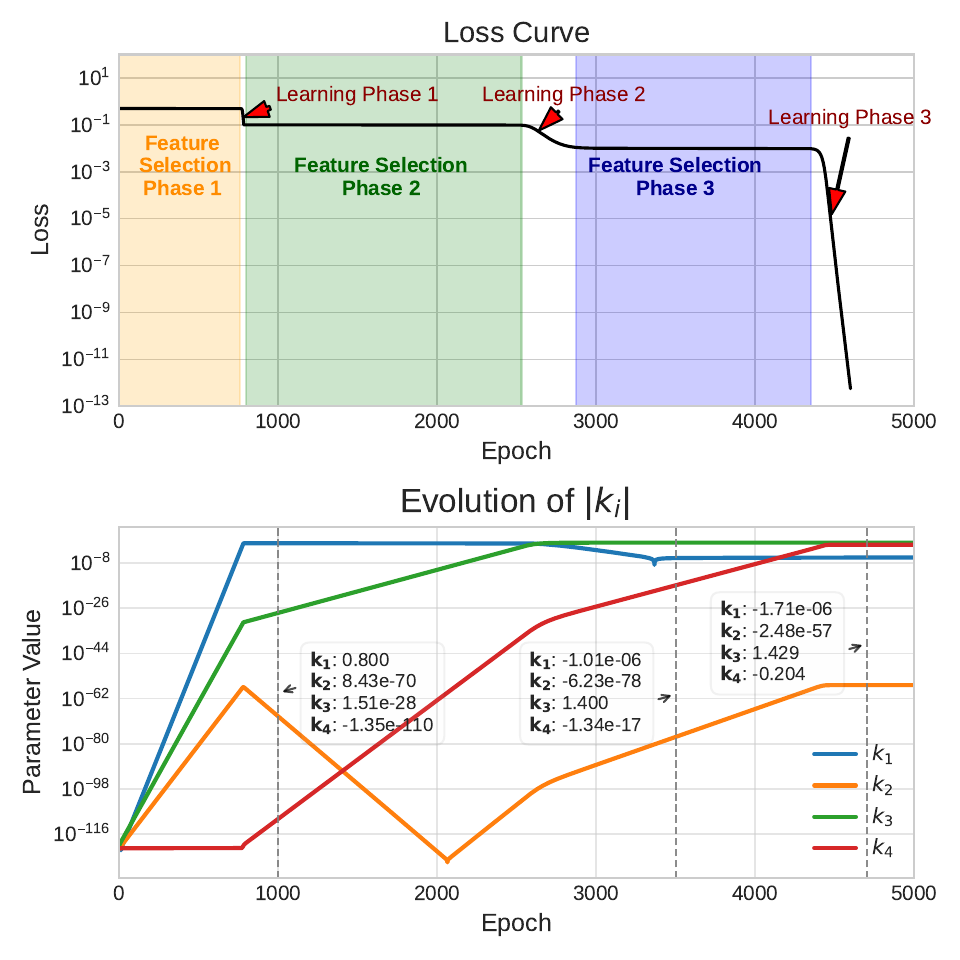}
    \caption{The model is trained via gradient descent. The initialization scale and learning rate are set to \( 10^{-60}\) and $0.1$, respectively. We utilize the \textit{mpmath} library to support high-precision computations. In the second figure, the boxes indicate the values of \(k_i\) at the positions of the vertical dashed lines. }
    \label{fig:experiment}
\end{figure}

\section{Limitations and Discussion}



\subsection{Limitations}

\textbf{Infinitesimal Initialization:}  
Our analysis is confined to the regime of infinitesimal initialization. While this setting facilitates theoretical tractability, it presents two notable drawbacks:  
(i) Infinitesimal initialization is not practical in real-world applications, where parameters are typically initialized with finite variance;  
(ii) Prior works on implicit bias, such as~\citet{woodworth2020kernel}, is able to characterize the implicit bias under all initialization scales.

\textbf{Gradient Flow Assumption:}  
The theoretical analysis in this paper is conducted under the assumption of continuous-time gradient flow dynamics. However, in practice, optimization is performed using discrete-time algorithms such as gradient descent or adaptive methods like Adam. The extent to which our results extend to these more realistic optimization settings remains an open question.

\subsection{Discussion}

\paragraph{Technique:} We highlight that the technique employed in this work relies on the concept of the Structural Invariant Manifold, which generally arises in nonlinear models such as matrix sensing and neural networks~\cite{zhao2026architecture,simsek2021geometry,liu2024symmetry,marcotte2023abide}. The potential outcomes of this study may be extendable to more complex models such as matrix sensing. Investigating this generalization is an interesting direction for future research.

\paragraph{Similarity and difference between $\ell_1$ and modified $\ell_1$ minimization:}
 Both the standard and modified $\ell_1$ norms fundamentally promote sparsity. Their difference is that the
modified $\ell_1$ norm can alter the recovered support (i.e., which specific features the model chooses to select).
To illustrate this, consider the data matrix presented in Equation~\eqref{eq:data matrix}.
\begin{itemize}
    \item Standard $\ell_1$ minimization yields the solution: $k_3=1.43, k_4=-0.20$, with $k_1=k_2=0$ (feature support: $\{3, 4\}$).
    \item Modified $\ell_1$ minimization (applying a $0.5$ penalty weight to $|k_1|$) shifts the optimal solution to: $k_1=1.0, k_4=-0.71$, with $k_2=k_3=0$ (feature support: $\{1, 4\}$).
\end{itemize}
This demonstrates that though the solution remains sparse, the actual features learned by the network may differ.

\paragraph{Gap between gradient flow (GF) and gradient descent (GD):} Whether the SIM remains exactly invariant under GD or SGD depends on its geometric curvature:
\begin{itemize}
    \item Affine (Flat) SIMs: If a SIM is an affine subspace, it remains strictly invariant under GD/SGD. Because the tangent space is globally constant, discrete steps do not cause the trajectory to leave the manifold. For instance, in a standard diagonal linear network parameterized by $k_i(\vtheta_i) = a_i b_i$, the affine SIM defined by $\{ (\vtheta_i)_{i=1}^n \mid a_1 = b_1 = 0\}$ is invariant under GD and SGD.
    \item Curved SIMs: If a SIM is curved, it generally loses its exact invariance under GD/SGD. Taking a finite discrete step along the tangent space of a curved manifold naturally causes the parameter to diverge slightly from the exact manifold. For example, the curved SIM defined by $\{(\vtheta_i)_{i=1}^n \mid a_1^2 - b_1^2 = 1\}$ is not exactly invariant under GD.
\end{itemize}
This theoretical gap explains a specific phenomenon observed in the training dynamics of Figure~\ref{fig:experiment} (optimized via GD, learning rate $\eta=0.1$). Between epochs $3000$ and $4000$, $k_1$ changes sign but fails to return to its exact initial scale ($10^{-120}$). This behavior is directly driven by the loss of exact invariance on the curved SIM caused by the discretization. 


\section{Conclusion}

In this paper, we investigated the training dynamics of deep diagonal linear networks and general two-layer diagonal linear networks. We showed that their training dynamics can be equivalently described by Algorithm~\ref{alg:my_algorithm}. Furthermore, we proved that Algorithm~\ref{alg:my_algorithm} converges to the solution of a modified \( \ell_1 \) norm minimization problem. As a result, we established that the implicit bias of both types of diagonal linear networks under infinitesimal initialization corresponds to a modified \( \ell_1 \) norm.

In addition, we analyzed the underlying mechanisms driving these dynamics and identified the Structural Invariant Manifold (SIM) as the key geometric structure governing the learning process.

\section*{Acknowledgements}
This work was supported by the
National Key R\&D Program of China (Grant No.~2022YFA1008200), the National Natural Science Foundation of China (Grant No.12571567), the Natural Science Foundation of Shanghai (Grant No. 25ZR1402280).

\section*{Impact Statement}
This paper presents work whose goal is to advance the field of Machine Learning. There are many potential societal consequences of our work, none which we feel must be specifically highlighted here.

\bibliography{camera_ready}
\bibliographystyle{icml2026}

\newpage
\appendix
\onecolumn

\section{Proof of Theorems}

\subsection{Proof of Theorem~\ref{thm: algorithm}}
\label{sec: proof of algorithm}
We first prove a lemma.
\begin{lemma}
Consider Algorithm~\ref{alg:my_algorithm} under assumptions of Theorem~\ref{thm: algorithm}. 
Let $J^{(p)}$ be the union of index set $I_1\cup I_3\cup \{j\}$ at step $p$. Then $\{\vX_{:,j}\}_{j\in J^{(p)}}$ is linearly independent for each $p.$  \label{lemma: linearly indpendent}
\end{lemma}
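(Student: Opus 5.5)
We argue by induction on $p$, keeping alongside the claim of Lemma~\ref{lemma: linearly indpendent} the invariant that after iteration $p$ the dual vector satisfies $s_i^{(p+1)}\in[-t_i^-,t_i^+]$ for all $i$. We also keep the KKT-type identities: $s_i^{(p+1)}=t_i^+$ whenever $k_i^{(p+1)}>0$ and $s_i^{(p+1)}=-t_i^-$ whenever $k_i^{(p+1)}<0$. These hold as in \citet{pesme2023saddle}. The base case $p=0$ is immediate: $I_1=I_3=\emptyset$ and $J^{(0)}=\{j\}$. Here $\vX_{:,j}\neq 0$ because $u_j\neq0$ (otherwise $\delta_j=+\infty$, and $\vX\vk^{(0)}\neq\vy$ together with solvability forces some $u_i\neq 0$).

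For the inductive step, suppose $\{\vX_{:,i}\}_{i\in J^{(p)}}$ were linearly dependent. The new index $j$ enters with $s_j^{(p+1)}=t_j^+$ or $-t_j^-$, while every $i\in I_1\cup I_3$ already has $|s_i|$ saturated. The next dual update is $\vs^{(p+1)}=\vs^{(p)}+\delta_j\vX^\T\vr^{(p)}$, where $\vr^{(p)}=\vy-\vX\vk^{(p)}$.

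First, $\vk^{(p)}$ is the constrained least-squares solution on the support $I_1\cup I_3$. The KKT conditions therefore give $\vX_{:,i}^\T\vr^{(p)}=0$ for the coordinates strictly inside their sign constraint, i.e.\ the active nonzero ones. Hence $u_i=0$ for $i\in I_1\cup I_3$, and $\vs$ stays saturated on those coordinates.

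Now write a dependence relation $\sum_{i\in J^{(p)}}c_i\vX_{:,i}=0$. Since $\{\vX_{:,i}\}_{i\in I_1\cup I_3}$ is independent by the induction hypothesis (it is a subset of $J^{(p-1)}$ or arises from it by dropping indices), we must have $c_j\neq0$. Pair the relation with $\vr^{(p)}$. The terms for $i\in I_1\cup I_3$ vanish because $u_i=0$, which forces $u_j=\vX_{:,j}^\T\vr^{(p)}=0$. But $j$ was chosen with finite $\delta_j$, so $u_j\neq0$. This is a contradiction, and linear independence follows.

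The main obstacle is the bookkeeping across steps where a coordinate of $\vk$ hits zero during the constrained minimization. In that case it leaves $I_1\cup I_3$ and joins $I_2$ while its $s_i$ remains saturated. I need to check that this does not spoil independence of the surviving support; it does not, since subsets of independent sets are independent. I also need to make sure the third assumption of Theorem~\ref{thm: algorithm} is used precisely here. A coordinate with $k_i=0$ and $s_i$ saturated has $u_i\neq0$, and such a coordinate must not re-enter with $\delta_i=0$ in a way that allows degenerate dependence. So I will verify that the signs work out, that is, $u_i$ points outward from the box $[-t_i^-,t_i^+]$ and the constraint remains feasible.
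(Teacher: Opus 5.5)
Your proof is correct and is essentially the paper's argument. The paper also inducts on $p$ and uses stationarity of the previous constrained least-squares problem to get $u_i=0$ on $I_1\cup I_3$, a subset of $J^{(p-1)}$ and hence independent; it then concludes that $\vX_{:,j}$ cannot lie in their span, since otherwise $u_j=0$ and $\delta_j=+\infty$. The extra bookkeeping you flag as the main obstacle is not needed for this lemma: neither the invariant $s_i\in[-t_i^-,t_i^+]$, nor saturation, nor the third assumption of Theorem~\ref{thm: algorithm} enters, only $u_j\neq 0$ for the chosen index and $u_i=0$ on the current support.
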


\begin{proof}
We prove by induction. For $p=0$, $J^{(p)}=\{j\}$, where  $j=\mathrm{argmin}_{i\in [n]} \delta_i$. Then we have $u_j\neq 0$. Since $$u_j=(\vX^\T\vy)_j=\langle \vX_{:,j},\vy\rangle,$$
we have $\vX_{:,j}\neq \vzero.$ Therefore the statement holds for $p=0.$ Assume the statement holds for some natural number $p$. We want to prove it for $p+1$.

Let $j$ be the neuron chosen at step $p$. Let $I_1^{(p)} = \{i \in [n] \mid k_i^{(p)} > 0\}$, $I_2^{(p)} = \{i \in [n] \mid k_i^{(p)} = 0\}$, $I_3^{(p)} = \{i \in [n] \mid k_i^{(p)} < 0\}.$ 
By the definition of Algorithm~\ref{alg:my_algorithm}, $\vk^{(p+1)}$ is the solution of the optimization problem
\begin{equation}
\min_{\vk} \quad  \|\vX \vk - \vy\|_2^2 \quad 
        \text{s.t.} \quad  k_i \geq 0, \forall i \in I_1^{(p)}, \quad 
                      k_i \leq 0,  \forall i \in I_3^{(p)},\quad k_i = 0,  \forall i \in I_2^{(p)} \setminus \{j\}
                    \label{eq:linear independent}
    \end{equation}

Define $I_1^{(p+1)} = \{i \in [n] \mid k_i^{(p+1)} > 0\}$, $I_2^{(p+1)} = \{i \in [n] \mid k_i^{(p+1)} = 0\}$, $I_3^{(p+1)} = \{i \in [n] \mid k_i^{(p+1)} < 0\}.$
Define $\vu=\vX^\T(\vy-\vX\vk^{(p+1)})$.

Since $\vk^{(p+1)}$ solves optimization problem~\ref{eq:linear independent}, therefore for each $i\in I_1^{(p+1)}\cup I_3^{(p+1)}$, we have $u_i=0.$ Let $j'$ be the neuron chosen at step $p+1$. 

If $\vX_{:,j'}$ is in linear span of $\{\vX_{:,i}\}_{i\in I_1^{(p+1)}\cup I_3^{(p+1)} }$, then  it is readily verifiable that $u_{j'}=0$. So $\delta_{j'}=+\infty$. This contradicts the fact that $j'$-th neuron is chosen at step $p+1$. So $\vX_{:,j'}$ is not in linear span of $\{\vX_{:,i}\}_{i\in I_1^{(p+1)}\cup I_3^{(p+1)} }.$

By our assumption, $\{\vX_{:,j}\}_{j\in J^{(p)}}$ is linearly independent. Since we have  $I_1^{(p+1)}\cup I_3^{(p+1)} \subset J^{(p)}$,  $\{\vX_{:,i}\}_{i\in I_1^{(p+1)}\cup I_3^{(p+1)} }$ is linearly independent. 

Since $\vX_{:,j'}$ is not in the linear span of $\{\vX_{:,i}\}_{i\in I_1^{(p+1)}\cup I_3^{(p+1)} },$ $\{\vX_{:,j}\}_{j\in J^{(p+1)}}$ is linearly independent. By mathematical induction, the lemma is proved.
\end{proof}

In the following we prove Theorem~\ref{thm: algorithm}.
\begin{proof}

\textbf{Well-Posedness:} At each step $p$, by assumption, the index $j = \arg\min_{i \in I_2} \delta_i$ is uniquely defined. So $j$ is well-posed.

Let $J=I_1^{(p)} \cup I_3^{(p)}$.
By Lemma~\ref{lemma: linearly indpendent}, $\{X_{:,j}
\}_{j\in J}$ is linearly independent. Recall that $\vk^{(p)}$ is the solution of
\begin{equation}
\min_{\vk} \quad  \|\vX \vk - \vy\|_2^2 \quad 
\text{s.t.} \quad  k_i \geq 0, \forall i \in I_1^{(p)}, \quad 
k_i \leq 0,  \forall i \in I_3^{(p)},\quad k_i = 0,  \forall i \in I_2^{(p)} \setminus \{j\}.
\label{eq:algorithm theorem, origial}
\end{equation}
 Then $\vk_J^{(p)}$ is the solution of
\begin{equation}
\min_{\vk_J} \quad  \|\vX_{:,J} \vk_J - \vy\|_2^2 \quad 
\text{s.t.} \quad  k_i \geq 0, \forall i \in I_1^{(p)}, \quad 
k_i \leq 0,  \forall i \in I_3^{(p)}.
\label{eq:algorithm theorem}
\end{equation}
Since $\{X_{:,j}
\}_{j\in J}$ is linearly independent, optimization problem~\eqref{eq:algorithm theorem} is strictly convex. So the solution of~\eqref{eq:algorithm theorem} is unique. Therefore, the solution of~\eqref{eq:algorithm theorem, origial} is unique. So $\vk^{(p+1)}$ is well defined.

\textbf{Terminates in finite iterations:}

    Note that the error $\epsilon^{(p)}=\|\vX\vk^{(p)}-\vy\|_2^2$ strictly decreases over $p$. Besides, there only exists finite possible $\epsilon$ of 
the minimization problem $$\min \|\vX \vk-\vy\|_2^2, \quad s.t. \quad \vk_i\geq 0,i\in I_1;\vk_i \leq 0,i\in I_3; \vk_i=0,i\in I_2\setminus{j} $$
for different $I_1,I_3,I_2,j$. So $\epsilon^{(p)}$ must reach  its minimum after finite iterations. Since $\vX\vk=\vy$ has solution, so $\epsilon^{(p)}$ reach $0$ after finite iterations.

\textbf{Convergence to Optimization Problem~\eqref{eq:min_L1_norm,prove}:}
Consider the optimization probelm
\begin{equation}
    \min \sum_{i=1}^n \left( t_i^+ k_i^+ + t_i^- k_i^- \right) \quad \text{subject to} \quad \vX \vk = \vy.
    \label{eq:min_L1_norm,prove}
\end{equation}
For $\vk=(k_1,\ldots,k_n)\in \sR^n$, define $\partial(\vk)=(\partial_i(k_i))_{i=1}^n$, where 
\[
\partial_i(k_i) = 
\begin{cases}
t_i^+, & \text{if } k_i > 0 \\
-t_i^-, & \text{if } k_i < 0 \\
[-t_i^-,\ t_i^+], & \text{if } k_i = 0
\end{cases}
\]

By calculation, the KKT condition of ~\eqref{eq:min_L1_norm,prove} is 
$$\vzero\in \partial (\vk)+\vX^\T \vlambda, \quad \vX\vk=\vy.$$

Since the optimization problem described in Equation~\eqref{eq:min_L1_norm,prove} is convex, and it has only equal constraint, then the solution of KKT condition is equivalent to solution of original problem.

For each $p$, let $\vk^{(p)}$ and $\vs^{(p)}$ be the value of $\vk$ and $\vs$ at step $p$. 
Let $\vk^*$ be the output of the algorithm. We now verify that $\vk^*$ satisfies the KKT condition.

From the algorithm, one sees that for each $p$, we have the following: \begin{itemize}
    \item If $s_i^{(p)}=t_i^+$, then $k_i^{(p)}\geq 0$.
    \item If $s_i^{(p)}=-t_i^-$, then $k_i^{(p)}\leq 0$.
    \item If $s_i^{(p)}\in (-t_i^-,t_i^+)$, then $k_i^{(p)}=0$.
\end{itemize}

So $\vs^{(p)}\in \partial (\vk^{(p)})$. Besides, by the update of $\vs^{(p)}$, one sees that $\vs^{(p)}\in \text{Img}(\vX^\T)$. So there exists $\vlambda^{(p)} \in \sR^{m}$ such that $\vs^{(p)}=\vX^\T(-\vlambda^{(p)})$. So 
$$\vs^{(p)}+\vX^\T \vlambda^{(p)}=\vzero,\quad  \vs^{(p)}\in \partial (\vk^{(p)}).$$

So the condition 
$$\vzero\in \partial (\vk)+\vX^\T \vlambda$$
holds in each step. Moreover, we have $\vX\vk^*=\vy$.
So $\vk^*$ satisfies the KKT condition. So $\vk^*$ is a solution of~\eqref{eq:min_L1_norm,prove}.
\end{proof}




\subsection{Proof of Theorem~\ref{thm: dynamics of diagona linear network}}
\label{sec:proof of dynamics}

\begin{lemma}[\citet{li2020towards}]
Assume $g(\vtheta):\sR^M\to \sR$ is real analytic, and $g(\vzero)=0,\nabla g(\vzero)=\vzero$. Let $\vH=\nabla^2 g(\vzero)$. Assume $\vH$ has a unique largest eigenvalue $\lambda_{1}>0$ with corresponding eigenvectors $\vv_{1}$. 
Denote the solution of
$$\frac{\rd \vtheta}{\rd t}=\nabla g(\vtheta), \vtheta(0)=\vtheta_0$$
as $\phi(\vtheta_0,t)$. Then for  $\vtheta_1\in \sR^d$ such that $\langle \vtheta_1,\vv_1\rangle > 0$, the limit 
$h(\vtheta_1,t):=\lim_{s\to 0} \phi(s\vtheta_1,t+\frac{1}{\lambda_{1}}\log \frac{1}{s})$ exists, and $h(\vtheta_1,t)\neq \vzero.$ Moreover, the trajectory $\Gamma(\vtheta_1):=\{h(\vtheta_1,t)\mid t\in \sR\}$ is independent of $\vtheta_1$ as long as $\langle \vtheta_1,\vv_1\rangle > 0$. Besides, the same statements also hold for $\langle \vtheta_1,\vv_1\rangle < 0$.
\label{lemma:limit solution}
\end{lemma}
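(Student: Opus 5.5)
The plan is to linearize at the origin and push the small-initialization limit through the flow using the time shift $\frac{1}{\lambda_1}\log\frac1s$. Write $\vtheta=\sum_k c_k\vv_k$ in an orthonormal eigenbasis of $\vH$, ordered so that $\lambda_1>\lambda_2\ge\cdots$. Near $\vzero$ we have $\nabla g(\vtheta)=\vH\vtheta+O(\|\vtheta\|^2)$, so $\phi(s\vtheta_1,t)\approx s e^{t\vH}\vtheta_1$ as long as the trajectory stays small.

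\textbf{Step 1: linear phase.} Fix a small radius $r$ and set $T_s=\frac{1}{\lambda_1}\log\frac{r}{s}$. By a Grönwall comparison we will show that
\[
\phi(s\vtheta_1,T_s)= r\,\langle\vtheta_1,\vv_1\rangle \vv_1 + O\!\left(r\, (s/r)^{1-\lambda_2/\lambda_1}\right)+O(r^2)\qquad\text{(for }\lambda_2>0\text{; otherwise the error is even smaller).}
\]
For the bound, write $\vtheta(t)=se^{t\vH}\vtheta_1+\int_0^t e^{(t-\tau)\vH}O(\|\vtheta(\tau)\|^2)\,\rd\tau$. Using the a priori estimate $\|\vtheta(\tau)\|\le C s e^{\lambda_1\tau}$ (bootstrap), the integral term is $O(s^2e^{2\lambda_1 t})$, which equals $O(r^2)$ at $t=T_s$. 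Hence, as $s\to0$, the point $\phi(s\vtheta_1,T_s)$ converges to $r c_1\vv_1+O(r^2)$ with $c_1=\langle\vtheta_1,\vv_1\rangle>0$.

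\textbf{Step 2: identify the limit via the unstable manifold.} The origin is a hyperbolic-or-not fixed point, but $\lambda_1$ is a simple, strictly dominant positive eigenvalue. So there is a unique one-dimensional strong-unstable manifold $W$ tangent to $\vv_1$: analytic, and locally the graph over the $\vv_1$-coordinate. It has two branches, which we denote $W^+$ on the side $c_1>0$ and $W^-$ on the side $c_1<0$. Let $\psi(t)$ parametrize $W^+$ by $\psi(t)\sim e^{\lambda_1 t}\vv_1$ as $t\to-\infty$. We will show that
\[
h(\vtheta_1,t)=\psi\!\left(t+\tfrac{1}{\lambda_1}\log c_1\right).
\]
Concretely, for fixed $t$ we write $\phi(s\vtheta_1,t+\frac1{\lambda_1}\log\frac1s)=\phi\bigl(\phi(s\vtheta_1,T_s),\,t-\frac1{\lambda_1}\log r\bigr)$. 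This point is within $O(r^2)+o_s(1)$ of $\psi$ evaluated at the corresponding time. Continuous dependence on initial data over the finite time $t-\frac{1}{\lambda_1}\log r$ then lets us send $s\to0$ first and $r\to0$ afterward. The main technical point here is the $O(r^2)$ deviation transverse to $\vv_1$. The flow over time $\sim\frac1{\lambda_1}\log\frac1r$ could amplify it, but only by at most $r^{-\lambda_2/\lambda_1}$ relative to the $\vv_1$ growth $r^{-1}$. So the transverse error relative to $W$ is $O(r^{2-\ldots})\to0$, using the invariant-cone / graph-transform argument for the dominated splitting.

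\textbf{Step 3: conclusions.} Nonvanishing follows because $\psi(t)\neq\vzero$, being a nontrivial orbit. The trajectory $\Gamma(\vtheta_1)=W^+$ (as an orbit) depends only on the sign of $\langle\vtheta_1,\vv_1\rangle$: changing $\vtheta_1$ only time-shifts $h$ by $\frac1{\lambda_1}\log c_1$. The case $c_1<0$ is symmetric and gives $W^-$. Uniformity of the limit on compact $t$-intervals follows from the same estimates.

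I expect the main obstacle to be Step 2. It requires justifying the interchange of limits and controlling error amplification near the origin, since non-dominant directions, including negative or zero eigenvalues, are not hyperbolic in the usual sense. I would handle this first by working in coordinates that straighten $W$, i.e. $\vtheta=(c_1,\vw)$ with $\vw$ measured from the graph of $W$. In these coordinates, a cone estimate $\frac{\rd}{\rd t}\|\vw\|\le(\lambda_2+\epsilon)\|\vw\|$ against $\frac{\rd}{\rd t}c_1\ge(\lambda_1-\epsilon)c_1$ holds in a small ball, which gives the required contraction toward $W$ in projective terms.
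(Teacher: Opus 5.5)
Your proposal is correct. It also takes a different route from the paper, in the trivial sense that the paper gives no argument of its own and simply imports the statement from Theorem~5.3 of \citet{li2020towards}. Your Step~1 (Duhamel plus bootstrap, giving $\phi(s\vtheta_1,T_s)=rc_1\vv_1+O(r^2)+o_s(1)$) is the standard core estimate. Step~2 goes further than the lemma requires by naming the limit: $h(\vtheta_1,t)=\psi(t+\tfrac{1}{\lambda_1}\log c_1)$, lying on a branch of the strong unstable manifold. This makes $h\neq\vzero$ and the independence of $\Gamma(\vtheta_1)$ from $\vtheta_1$ immediate, since changing $\vtheta_1$ is a pure time shift. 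It is also the picture the paper's later limit-trajectory and time-mapping lemmas use implicitly. The price is an appeal to the strong unstable manifold theorem at a possibly non-hyperbolic point. That appeal is legitimate, because the spectral gap can be placed at some $\rho$ with $\max\{\lambda_2,0\}<\rho<\lambda_1$, so $W$ is unique as you claim.

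Your cone/graph-transform step is unnecessary. Because the vector field is a gradient, a variation $\delta$ satisfies $\tfrac{\rd}{\rd\tau}\|\delta\|^2=2\delta^\top\nabla^2 g(\vtheta)\delta\le 2(\lambda_1+C\|\vtheta\|)\|\delta\|^2$. Along a trajectory growing like $re^{\lambda_1\tau}$ you have $\int C\|\vtheta\|\,\rd\tau=O(r_0)$, so the flow map is $O(1/r)$-Lipschitz near $rc_1\vv_1$ until exit from the $r_0$-ball. An $O(r^2)+o_s(1)$ discrepancy therefore becomes $O(r)+o_s(1)/r$, and taking $\limsup_{s\to0}$ and then $r\to0$ finishes. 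Only $\lambda_{\max}$ enters, so zero or negative eigenvalues are harmless. After exit the remaining time is fixed, and you only need continuity of the flow map at the fixed point $\psi(\tfrac{1}{\lambda_1}\log(c_1r_0))$.

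The same bookkeeping removes the need for the manifold theorem altogether. For $\alpha'<\alpha$, the point $\phi(\alpha'\vv_1,\tfrac{1}{\lambda_1}\log\tfrac{\alpha}{\alpha'})$ lies within $O(\alpha^2)$ of $\alpha\vv_1$. Hence $\alpha\mapsto\phi(\alpha\vv_1,\tfrac{1}{\lambda_1}\log\tfrac{1}{\alpha}+t)$ is Cauchy. Step~1 then transfers the limit to general $\vtheta_1$ with the shift $\tfrac{1}{\lambda_1}\log c_1$, and the resulting orbit is $W^+$ a posteriori.

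Finally, both your proof and the statement tacitly assume the limiting orbit exists for all forward time, which fails for general analytic $g$. For $g(\theta)=\tfrac12\theta^2+\theta^4$ the flow blows up in finite time, and $h(\theta_1,t)$ exists only for $t<\log\tfrac{1}{2\theta_1}$. The conclusion should therefore be read on the maximal existence interval of $\psi$.
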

\begin{proof}
The lemma is proved in~\citet{li2020towards} in their Theorem 5.3. 
\end{proof}

\begin{definition}[limit trajectory]
In general two-layer diagonal linear network, we apply Lemma~\ref{lemma:limit solution} by setting $g(\vtheta)= k_i(\vtheta_i)$.
We use   $\Gamma_i^{++},\Gamma_i^{+-}$ to denote the $\Gamma(\vv_1)$ and $\Gamma(-\vv_1)$ in Lemma~\ref{lemma:limit solution}, respectively. One can  also consider the limit trajectory of $$\frac{\rd \vtheta}{\rd t}=-\nabla k_i(\vtheta_i), \vtheta(0)=\vtheta_0$$
when we assume  $\vH$ has a unique smallest eigenvalue $\lambda_{2}<0$ and corresponding eigenvector $\vv_2$. We use    $\Gamma_i^{-+},\Gamma_i^{--}$ to denote the $\Gamma(\vv_2)$ and $\Gamma(-\vv_2)$. The four trajectories, $\Gamma_i^{++},\Gamma_i^{+-},\Gamma_i^{-+},\Gamma_i^{--}$ are called limit trajectories of neuron $i$.

\label{def:limit trajectory}
\end{definition}

\noindent\textbf{Example of limit trajectory:} Consider $k_i(\vtheta)=a^2-b^2, \vtheta=(a,b)\in \sR^2$. Then $\Gamma_i^{++}=\{(a,b)\mid a>0,b=0 \}$, $\Gamma_i^{+-}=\{(a,b)\mid a<0,b=0 \}$, $\Gamma_i^{-+}=\{(a,b)\mid a=0,b>0 \}$, $\Gamma_i^{--}=\{(a,b)\mid a=0,b<0 \}.$ In this case of $k_i(\vtheta)=a^2-b^2$, the limit trajectory can also be derived from the  conserved quantity $a(t)b(t)=a(0)b(0)$. Under infinitesimal initialization, we have $a(t)b(t)=0$. This results in the four limit trajectories.

\begin{lemma}[convergence to limit trajectory under infinitesimal initialization]
Consider the general two-layer diagonal linear network in Definition~\ref{def:general 2-layer}. For each $i\in [n]$, let $\vH_i=\nabla^2 k_i(\vtheta_i)$. Let $\lambda_i^+,\vv_i^+$ be the largest eigenvalue and corresponding eigenvector. Let $\lambda_i^-,\vv_i^-$ be the smallest eigenvalue and corresponding eigenvector. Fix $\vtheta=(\vtheta_i)_{i=1}^n$. Assume that $\langle \vtheta_i,\vv_i^+\rangle\neq 0, \langle \vtheta_i,\vv_i^-\rangle \neq 0$. Let $\Gamma^s$ be the trajectory of gradient flow under initialization $s\vtheta$. If there exists $l\in [n]$, $\mu \neq 0$,
$\vtheta^s=(\vtheta_i^s)_{i=1}^n \in \Gamma^s$ such that $\lim_{s\to 0}k_l(\vtheta_l^s)=\mu$, then  $\vtheta_l^s$ converges to a point in $\Gamma^{\mathrm{sign}(\mu),\mathrm{sign}(\langle\vtheta_i,\vv_i^{\mathrm{sign}(\mu)}\rangle)}$.

\label{lemma:convergece to limit trajectory}

\end{lemma}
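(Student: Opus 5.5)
The plan is to reduce the statement to a single-neuron problem and then use the structural invariant manifold of neuron $l$. By Theorem~\ref{thm:SIM}, the trajectory of $\vtheta_l$ under gradient flow from $s\vtheta_l$ stays inside the orbit $O_{\fF_l}(s\vtheta_l)$. This orbit is a simple analytic curve through $s\vtheta_l$, along which $k_l$ is strictly monotone. The curve is generated by the single vector field $\nabla k_l$. So it coincides, as a set, with the union of the forward and backward trajectories of $\frac{\rd\vtheta}{\rd\tau}=\nabla k_l(\vtheta)$ started at $s\vtheta_l$, together with their endpoints. Gradient flow of the full loss only reparametrizes time along this curve, through the scalar factor $u_l(\vtheta(t))$, which may change sign. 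Hence $\vtheta_l^s$ lies on the orbit curve $C_s$ through $s\vtheta_l$, and we only need to understand the limiting geometry of $C_s$ as $s\to 0$.

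Next I treat the two halves of $C_s$. Take $\mu>0$ for definiteness. Since $k_l(s\vtheta_l)=O(s^2)\to 0$ and $k_l(\vtheta_l^s)\to\mu>0$, the point $\vtheta_l^s$ lies on the half of $C_s$ where $k_l$ increases. That half is the forward trajectory $\phi_l(s\vtheta_l,\tau)$, $\tau\ge 0$, of $\nabla k_l$. I would apply Lemma~\ref{lemma:limit solution} with $g=k_l$ and $\lambda_1=\lambda_l^+$; its hypothesis of a unique top eigenvalue is guaranteed by Definition~\ref{def:general 2-layer}. It gives
\[
\phi_l\bigl(s\vtheta_l,\tau+\tfrac{1}{\lambda_l^+}\log\tfrac1s\bigr)\to h(\vtheta_l,\tau)
\]
locally uniformly in $\tau$, with trajectory $\Gamma_l^{+,\sigma}$, where $\sigma=\mathrm{sign}\langle\vtheta_l,\vv_l^+\rangle$. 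For $\mu<0$ I would use the flow of $-\nabla k_l$, the smallest eigenvalue $\lambda_l^-$, and $\Gamma_l^{-,\sigma}$ in the same way.

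It then remains to locate $\vtheta_l^s=\phi_l(s\vtheta_l,\tau_s+\frac1{\lambda_l^+}\log\frac1s)$ and show it converges to a point of $\Gamma_l^{+,\sigma}$. Along $h(\vtheta_l,\cdot)$ the value $k_l$ is strictly increasing. As $\tau\to-\infty$ it tends to $0$, because $h\to\vzero$ backward, $\vzero$ being the only critical point. By the unboundedness assumption of Definition~\ref{def:general 2-layer}, $k_l\to+\infty$ as $\tau$ tends to the forward endpoint. Hence there is a unique $\tau^*$ with $k_l(h(\vtheta_l,\tau^*))=\mu$, and I would show $\tau_s\to\tau^*$. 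Fix $\epsilon>0$. We have $k_l(h(\tau^*-\epsilon))<\mu<k_l(h(\tau^*+\epsilon))$. By convergence at these two fixed times and by monotonicity of $k_l$ along the finite-$s$ curve, $k_l(\vtheta_l^s)\to\mu$ forces $\tau_s\in(\tau^*-\epsilon,\tau^*+\epsilon)$ for small $s$. Local uniform convergence on $[\tau^*-\epsilon,\tau^*+\epsilon]$ then gives $\vtheta_l^s\to h(\vtheta_l,\tau^*)\in\Gamma_l^{+,\sigma}$.

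The main obstacle is justifying the convergence in Lemma~\ref{lemma:limit solution} locally uniformly in $\tau$, together with the global behaviour of $h$ (backward limit $\vzero$, forward unboundedness of $k_l$). Without these, $\tau_s$ could escape to $\pm\infty$. I would get uniformity from continuous dependence on initial data applied to $h(\vtheta_l,\tau_0)$ at a fixed time $\tau_0$. A second point needing care is the identification of the orbit with the $\nabla k_l$-trajectory. This requires that gradient flow cannot pass through a zero of $\nabla k_l$. That holds because $\vzero$ is the only critical point and it is not reached in finite time.
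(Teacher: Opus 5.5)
Your proposal is correct and takes essentially the same route as the paper. Both arguments place $\vtheta_l^s$ on the $\nabla k_l$-integral curve through $s\vtheta_l$ via Theorem~\ref{thm:SIM}, shift time by $\frac{1}{\lambda_l^+}\log\frac{1}{s}$ to invoke Lemma~\ref{lemma:limit solution}, use the unboundedness assumption to find the time at which the limit trajectory attains $k_l=\mu$, and apply continuous dependence on initial data at that fixed time. The only difference is in localizing $\tau_s$: you bracket it between $\tau^*\pm\epsilon$ using strict monotonicity of $k_l$ along the flow, whereas the paper uses a lower bound on $\frac{\rd}{\rd t}k_l=\|\nabla k_l\|_2^2$ near $T$, so your version is if anything slightly more careful.
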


\begin{proof}
For simplicity, we assume $\mu>0$, $\langle \vtheta_i,\vv_i^+\rangle> 0$. The proofs of other cases  are similar. Under this assumption, $\Gamma^{\mathrm{sign}(\mu),\mathrm{sign}(\langle\vtheta_i,\vv_i^{\mathrm{sign}(\mu)}\rangle)}=\Gamma^{++}$. 
By calculation, the gradient flow of general two-layer diagonal linear network is  
$$\frac{\rd \vtheta_l^s}{\rd t}=\nabla k_l(\vtheta_l^s) \cdot u(\vtheta^s), \vtheta^s_l(0)=s\vtheta_l.$$
Here, $u(\vtheta^s)$ is a scalar. Let $\phi(\vtheta',t)$ be the solution of 
\begin{equation}
\frac{\rd \vtheta_l}{\rd t}=\nabla k_l(\vtheta_l), \vtheta(0)=\vtheta'.  
\label{eq:one neuron SIM}
\end{equation}
Let $\vtheta^s=(\vtheta_i^s)_{i=1}^n \in \Gamma^s$, and assume  $k_l(\vtheta_l^s)=\delta$. 
Then $\vtheta^s_l$ is on the trajectory of Equation~\eqref{eq:one neuron SIM}. Define $\vtheta^s_{\mathrm{ref}}=\phi(s\vtheta_l,\frac{1}{\lambda_{1}}\log \frac{1}{s})$. Since $\vtheta^s_l$ is on the trajectory of Equation~\eqref{eq:one neuron SIM},
there exists $t(s)$ such that $\vtheta^s_l=\phi( \vtheta^s_{\mathrm{ref}},t(s)).$
By Lemma~\ref{lemma:limit solution}, the limit $\vtheta_{\mathrm{ref}}:=\lim_{s\to 0} \vtheta^s_{\mathrm{ref}}$ exists, and $\vtheta_{\mathrm{ref}}\neq \vzero.$ 

By assumption of general diagonal linear network, on  $\Gamma^{++}$, the value of   $k_l(\cdot)$ is unbounded. It is readily verifiable that on $\Gamma^{++}$, $k_l(\cdot)$ is positive and strictly increasing. Moreover, we have  $\mu>0$. Therefore, there exists $\vtheta_l^*\in \Gamma^{++}$ such that $k_l(\vtheta^*_l)=\mu$. Since $\vtheta^*_l \in \Gamma^{++}$, there exists $T\in \sR$ such that $\vtheta^*_l=\phi(\vtheta_{\mathrm{ref}},T).$
Since 
$\vtheta_{\mathrm{ref}}=\lim_{s\to 0} \vtheta^s_{\mathrm{ref}}$, we have $\lim_{s\to 0} \phi(\vtheta^s_{\mathrm{ref}},T)=\vtheta^*_l.$ Therefore, for any $\epsilon>0$, there exists $\delta>0$ such that for all $0<s<\delta$, we have $\| \phi(\vtheta^s_{\mathrm{ref}},T)-\vtheta^*_l\|_2<\epsilon,$ and $|k_l( \phi(\vtheta^s_{\mathrm{ref}},T))-k_l(\vtheta^*_l)|<\epsilon.$ Denote $\vh^s:=\phi(\vtheta^s_{\mathrm{ref}},T)$.
Since we have $\lim_{s\to 0} k_l(\vtheta_l^s)=k_l(\vtheta^*_l)=\mu$, we may assume that $|k_l(\vtheta^s_l)-\mu|<\epsilon.$ Then we have
$$|k_l(\vtheta_l^s)-k_l(\vh^s)|\leq |k_l(\vtheta^s_l)-\mu|+|k_l(\vh^s)-\mu|<2\epsilon.$$
Since $$\vtheta^s_l=\phi( \vtheta^s_{\mathrm{ref}},t(s)),$$
we have 
$$\vtheta^s_l=\phi(\vh^s,t(s)-T).$$

By calculation, 
$$\left.\frac{\rd k_l(\phi(\vtheta^s_{\mathrm{ref}},t))}{\rd t}\right|_{t=T}= \|\nabla k_l(\vh^s)\|_2^2 >0.$$
Since $\|\vh^s-\vtheta^*_l\|_2<\epsilon$, then we may assume 
$$\frac{\rd k_l(\phi(\vtheta^s_{\mathrm{ref}},t))}{\rd t}>\frac{1}{2}\|\nabla k_l(\vtheta^*_l)\|_2^2 $$
for $t$ near $T$.

Since $$|k_l(\vtheta^s)-k_l(\vh^s)|<2\epsilon,$$

the following holds:
$$|t(s)-T|<\frac{4\epsilon}{\|\nabla k_l(\vtheta_l^*)\|_2^2 }$$
when $\epsilon$ is sufficiently small. Therefore $$\vtheta^s_l=\phi(\vh^s,T), t(s)-T)=\phi(\vtheta^*_l+\fO(\epsilon), \fO(\epsilon)).$$ 
Therefore we have 
$$\|\vtheta_l^s-\vtheta^*_l\|_2=\fO(\epsilon).$$
So $\lim_{s\to 0} \vtheta_l^s=\vtheta^*$. The lemma is proved.

\end{proof}

\begin{lemma}[convergence guaranty if initialized at the limit trajectory]
Consider the general two-layer diagonal linear network in Definition~\ref{def:general 2-layer} with data matrix $\vX\in \sR^{m\times n}$ and $\vy\in \sR^m$.
For each $i\in [n]$, let $\Gamma_i^{++},\Gamma_i^{+-},\Gamma_i^{-+},\Gamma_i^{--}$ be the limit trajectories defined  in Definition~\ref{def:limit trajectory}. Let $\vtheta^*=(\vtheta_i^*)_{i=1}^n$  be the initialization. Consider the  gradient flow of general two-layer diagonal linear network:
\begin{equation}
 \frac{\rd \vtheta}{\rd t}=-\nabla L(\vtheta), \vtheta(0)=\vtheta^*, 
 \label{eq:GF convergence initialzed at limit trajectory}
\end{equation}
where $L(\vtheta)=\|\vX\vk(\vtheta)-\vy\|_2^2$.
Let $\vs=(s_1,\ldots,s_n)$ be a signed vector, i.e. $\vs_i\in \{+,-\}$ for all $i\in [n]$. We define $\Gamma_i^{+}:=\Gamma_i^{++}\cup \Gamma_i^{+-}$, and $\Gamma_i^{-}:=\Gamma_i^{-+}\cup \Gamma_i^{--}$.
Assume that for each $i\in [n]$, we have $\vtheta^*_i\in \Gamma^{s_i}$.  Assume that $\vX$ has full column rank.
Denote the solution of  Equation~\eqref{eq:GF convergence initialzed at limit trajectory} by $\vtheta(t)$. Then $\vk(\vtheta(t))$ converges to the solution of the  following optimization  problem:
\begin{equation}
    \min _{\vk\in \sR^n} \|\vX\vk-\vy\|_2^2 \quad s.t. \quad k_i\geq 0 \text{ if } s_i=+, \quad k_i\leq 0 \text{ if } s_i=-.
\label{eq:optimization probelm, lemma convergence}
\end{equation} 
\label{lemma:convergence guaranty if initialized at the limit trajectory}
\end{lemma}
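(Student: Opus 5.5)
Our approach is to reduce the gradient flow to a flow on $\vk$ alone and then use the sign-locking of each limit trajectory. Since the orbit of $\fF_i$ through $\vtheta_i^*\neq\vzero$ is a one-dimensional simple curve (Theorem~\ref{thm:SIM}), and $\Gamma_i^{s_i}$ is a union of such trajectories of $\pm\nabla k_i$, the component $\vtheta_i(t)$ stays on the connected trajectory containing $\vtheta_i^*$. On $\Gamma_i^{+}$ (resp.\ $\Gamma_i^{-}$) the function $k_i$ is strictly monotone along the curve, positive (resp.\ negative), and unbounded. This holds because $\Gamma_i^{++}$ emanates from $\vzero$ with $k_i\to 0^+$ at the $t\to-\infty$ end, by Lemma~\ref{lemma:limit solution}. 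Hence $k_i$ gives a diffeomorphic coordinate $\psi_i:\Gamma_i^{s_i\pm}\to(0,\infty)$ or $(-\infty,0)$, and the dynamics reduce to
\[
\frac{\rd k_i}{\rd t}=\|\nabla k_i(\vtheta_i)\|_2^2\,u_i(\vk),\qquad \vu(\vk)=2\vX^\T(\vy-\vX\vk),
\]
with $\vtheta_i=\psi_i^{-1}(k_i)$. The result is a preconditioned gradient flow $\dot{\vk}=-\vD(\vk)\nabla_{\vk}\tilde L(\vk)$, where $\tilde L(\vk)=\|\vX\vk-\vy\|^2$, $\vD$ is a positive diagonal matrix, and $\vk$ lies in the open orthant $\{s_ik_i>0\}$.

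The key steps are as follows. (1) $\tilde L(\vk(t))$ is nonincreasing, so $\vk(t)$ stays in a bounded sublevel set, using full column rank of $\vX$. Since $k_i$ is unbounded and monotone along the curve, $\vtheta_i(t)$ also stays on a compact piece of the curve, except possibly near its endpoint $\vzero$. Thus the solution exists for all $t\ge0$. (2) Strict convexity from full column rank gives a unique minimizer $\vk^\dagger$ of \eqref{eq:optimization probelm, lemma convergence} on the closed orthant, characterized by KKT conditions: $u_i(\vk^\dagger)=0$ if $k_i^\dagger\neq0$, and $s_iu_i(\vk^\dagger)\le 0$ if $k_i^\dagger=0$. (3) We use a Lyapunov/Bregman argument. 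Define
\[
V(\vk)=\sum_i\int_{k_i^\dagger}^{k_i}\frac{x-k_i^\dagger}{g_i(x)}\,\rd x,
\]
where $g_i(k)=\|\nabla k_i(\psi_i^{-1}(k))\|^2$. Then $\dot V=\sum_i (k_i-k_i^\dagger)u_i(\vk)$, and by convexity of $\tilde L$ this is at most $-\tfrac12(\tilde L(\vk)-\tilde L(\vk^\dagger))$ up to a constant. Because $\vk^\dagger$ might lie on the boundary $k_i=0$, we need $V$ finite there. This requires $\int_0 x/g_i(x)\,\rd x<\infty$. Near $\vzero$ we have $k_i\approx\tfrac12\lambda_i^{\pm}|\vtheta|^2$ and $g_i\approx(\lambda_i^{\pm})^2|\vtheta|^2\sim k_i$, so $x/g_i(x)$ is bounded and $V$ is finite and continuous on the closed orthant. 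It follows that $\int_0^\infty(\tilde L(\vk(t))-\tilde L(\vk^\dagger))\,\rd t<\infty$. Combined with monotonicity of $\tilde L$, this gives $\tilde L(\vk(t))\to\tilde L(\vk^\dagger)$. Since the minimizer is unique and the level sets are compact, $\vk(t)\to\vk^\dagger$.

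The main obstacle is step (3): the behavior of $g_i$ both near the origin and at infinity. Near the origin we need $g_i(k)\gtrsim |k|$; we plan to obtain this from the analytic local structure of $\Gamma_i^{\pm}$ along the eigendirection $\vv_i^{\pm}$ in Lemma~\ref{lemma:limit solution}. At infinity we need $g_i>0$, which follows from $\vzero$ being the unique critical point. If the Bregman route fails because $V$ does not decrease strictly enough, we will instead use LaSalle's principle on the bounded trajectory. Any $\omega$-limit point $\bar\vk$ of the flow satisfies $\vD\nabla\tilde L=0$ in the interior directions. For coordinates with $\bar k_i=0$, a coordinate with $s_iu_i(\bar\vk)>0$ would make $k_i$ grow away from $0$ at a rate comparable to $g_i\sim|k_i|$, so $\bar k_i$ could not be $0$. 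Therefore $\bar\vk$ satisfies the KKT conditions, and uniqueness of the minimizer gives $\bar\vk=\vk^\dagger$.
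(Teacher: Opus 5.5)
Your argument is correct, but it takes a different route from the paper's. The paper stays in parameter space. After the same boundedness step, it invokes the {\L}ojasiewicz inequality for the analytic loss $L$ to get convergence of $\vtheta(t)$ to a single critical point $\vtheta'$. It then uses the unique-critical-point assumption to conclude that each block has $\vtheta_i'=\vzero$ or $u_i=0$. Finally it checks the KKT conditions with a sign argument: if $\vtheta_i'=\vzero$, $s_i=+$ and $u_i>0$, then eventually $\dot k_i>0$, contradicting $k_i\downarrow 0$.

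You instead use the chart $\psi_i$ to rewrite the flow as a diagonally preconditioned gradient flow on the open orthant. You then run a Bregman (mirror-flow-type) Lyapunov argument centered at the constrained minimizer $\vk^\dagger$. This buys three things:
\begin{itemize}
\item it identifies the limit directly, with no separate KKT check;
\item by monotonicity of $\tilde L$ it gives the rate $\tilde L(\vk(t))-\tilde L(\vk^\dagger)\lesssim V(\vk(0))/t$;
\item it also yields convergence of $\vtheta(t)$, because $\psi_i^{-1}$ extends continuously by $0\mapsto\vzero$.
\end{itemize}
The price is the boundary estimate that makes $V$ finite. The paper's softer argument needs no information on $\|\nabla k_i\|$ near the origin, but it gives no rate.

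That estimate is easier than you expect: you do not need tangency of the limit trajectory to $\vv_i^{\pm}$. The {\L}ojasiewicz gradient inequality for $k_i$ at $\vzero$ gives $\|\nabla k_i(\vtheta)\|\ge c\,|k_i(\vtheta)|^{\alpha}$ near $\vzero$, with $\alpha\in[1/2,1)$. Hence $|x|/g_i(x)\le c^{-2}|x|^{1-2\alpha}$, which is integrable at $0$.

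Your LaSalle fallback, as sketched, has a gap. The ``grows away from $0$'' argument only works once $\vk(t)$ converges to a single point. To get there you would additionally need that the stationary set is finite (at most one stationary point per zero-pattern, by full column rank), so that the connected $\omega$-limit set is a singleton. The Bregman route makes this fallback unnecessary.
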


\begin{proof}

Define $I=\{i\in [n]\mid s_i=+\}$, and $J=[n]\setminus{ I}$.
Without loss of generality, we assume $\vtheta_i^*\in \Gamma^{++}$ for all $i\in I$,  and $\vtheta_i^*\in \Gamma^{-+}$ for all $i\in J$. By Theorem~\ref{thm:SIM}, $\vtheta_i(t)\in \Gamma^{++}$ for all $i\in I$, and $\vtheta_j(t)\in \Gamma^{-+}$ for all $j\in J$. It is readily verifiable that on $\Gamma^{++}_i$, the value of $k_i$ is positive. On $\Gamma^{-+}_i$, the value of $k_i$ is negative.
So $k_i(\vtheta_i(t))> 0,\forall i\in I,t\in \sR$, and  $k_i(\vtheta_j(t))< 0,\forall j\in J,t\in \sR $. 

By assumption, $\vX$ has full column rank. Therefore, for any $\delta>0$, the set $\{\vk\in \sR^n\mid \|\vX\vk-\vy\|_2<\delta\}$ is bounded. Let $\vtheta(t)$ be the solution of Equation~\eqref{eq:GF convergence initialzed at limit trajectory}. Since the loss of gradient flow is decreasing, we have $\|\vX\vk(\vtheta(t))-\vy\|_2\leq \|\vX\vk(\vtheta(0))-\vy\|_2$. So $\vk(\vtheta(t))$ is bounded. So $k_i(\vtheta_i(t))$ is bounded for each $i\in [n]$. It is readily verifiable that on $\Gamma_i^{++}$, $\vk_i$ is positive and monotonically increasing. Besides, by assumption, on $\Gamma_i^{++}$, $k_i(\cdot)$ is unbounded. For all $i\in I$, we have $\vtheta_i(t)\in \Gamma^{++}$.
Therefore, since $k_i(\vtheta_i(t))$ is bounded,   $\vtheta_i(t)$ is also bounded. Similarly, one may prove that for $j\in J$,  $\vtheta_j(t)$ is bounded. 
So $\vtheta(t)$ is bounded.

By assumption, for each $i\in [n]$, $k_i(\vtheta_i)$ is a real analytic function. So $L(\vtheta)$ is a real analytic function. By standard Łojasiewicz's inequality~\cite{lojasiewicz1965ensembles} argument, $\vtheta(t)$ converges to a critical point of $L(\vtheta)$. Let $\vtheta'=(\vtheta_i')_{i=1}^n$ be the critical point that $\vtheta(t)$ converges to. Since $\vtheta'=\lim_{t\to +\infty}\vtheta(t)$, we have 
$k_i(\vtheta'_i(t))\geq 0,\forall i\in I$, and  $k_i(\vtheta'_j(t))\leq 0,\forall j\in J.$

Define $\vu(\vtheta)=\vX^\T(\vy-\vX\vk(\vtheta))$, and  $\vu=\vX^\T(\vy-\vX\vk(\vtheta')).$ Since $\vtheta'$ is a critical point of $L(\vtheta)$, so we have $\nabla L(\vtheta')=0$. This is equivalent to 
$$\nabla k_i(\vtheta') u_i=\vzero, \forall i\in [n].$$
Since for each $i\in [n]$, $\nabla k_i(\cdot)$ has unique critical point at $\vzero$, so $\nabla L(\vtheta')=0$ is equivalent to 
$$\vtheta'=\vzero \text{ or } u_i=0, \forall i\in [n].$$
Let $K=\{i\in [n]\mid \vtheta'_i=\vzero\}.$ For any $i\in I\cup K$, it is readily to verify that $u_i\leq 0$. Otherwise assume $u_i>0$. By calculation,  $\vtheta_i$ follows the  dynamics of
$$\frac{\rd \vtheta_i}{\rd t}=\nabla k_i(\vtheta_i) \cdot u_i(\vtheta).$$
So we have 
$$\frac{\rd k_i(\vtheta_i)}{\rd t}=\|\nabla k_i(\vtheta_i)\|_2^2 \cdot u_i(\vtheta).$$
Therefore, for sufficiently large $t$, the value of $k_i(\vtheta_i)$ increases monotonically. This contradicts the fact that $k_i(\vtheta_i(t))>0$ and $k_i(\vtheta_i(t))\to 0$. So $u_i\leq 0$ for all $i\in I\cup K$. Similarly, we have $u_i\geq 0$ for all $i\in J\cup K$. Together with the condition that $\vtheta'=\vzero \text{ or } u_i=0, \forall i\in [n]$, it is readily verifiable that $\vk(\vtheta')$ satisfies the KKT conditions of optimization problem~\eqref{eq:optimization probelm, lemma convergence}. Moreover, since optimization problem~\eqref{eq:optimization probelm, lemma convergence} is convex and satisfies Slater's condition, $\vk(\vtheta')$ is its solution.    
\end{proof}



\begin{lemma}
Let $\vy\in \sR^m$. Let $U$ be a subspace of $\sR^m$.
Let $\vx^*$ be the solution of the optimization problem:
\[ \min_{\vx\in U}  \|\vx-\vy\|_2^2. \]
Fix $L>0$.
Then for any $\epsilon>0$, there exists $\delta>0$, such that if $\vx \in U$ satisfies $\|\vx-\vy'\|_2 <L\delta+ \|\vx^*-\vy\|_2$ and $\|\vy'-\vy\|_2 < \delta$, then $\|\vx-\vx^*\|_2 < \epsilon$.
\label{lemma:small change}
\end{lemma}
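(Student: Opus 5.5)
The plan is to use the orthogonal decomposition given by the projection onto $U$. Since $\vx^*$ is the orthogonal projection of $\vy$ onto $U$, for every $\vx\in U$ we have $\vx-\vx^*\in U$ and $\vy-\vx^*\perp U$. Pythagoras then gives
\[
\|\vx-\vy\|_2^2=\|\vx-\vx^*\|_2^2+\|\vx^*-\vy\|_2^2 .
\]
This identity is the whole engine of the argument: any $\vx\in U$ whose distance to $\vy$ is only slightly larger than the optimal distance $r:=\|\vx^*-\vy\|_2$ must lie close to $\vx^*$.

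The next step passes from $\vy'$ back to $\vy$ using the triangle inequality. For $\vx\in U$ with $\|\vx-\vy'\|_2<L\delta+r$ and $\|\vy'-\vy\|_2<\delta$, we get
\[
\|\vx-\vy\|_2\le \|\vx-\vy'\|_2+\|\vy'-\vy\|_2<(L+1)\delta+r .
\]
Substituting this into the Pythagorean identity yields
\[
\|\vx-\vx^*\|_2^2<\bigl((L+1)\delta+r\bigr)^2-r^2=(L+1)\delta\bigl((L+1)\delta+2r\bigr).
\]

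Finally, we choose $\delta$ so that the right-hand side is below $\epsilon^2$. Since the expression tends to $0$ as $\delta\to0$, with $r$ and $L$ fixed, such a $\delta$ exists. Explicitly, one can take
\[
\delta=\min\Bigl\{1,\ \frac{\epsilon^2}{(L+1)\bigl((L+1)+2r\bigr)}\Bigr\}.
\]
With this choice, $\|\vx-\vx^*\|_2<\epsilon$, which proves the lemma.

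There is no real obstacle here. The only point requiring care is to justify the Pythagorean identity, namely that $\vy-\vx^*\perp U$ holds because $\vx^*$ is the minimizer over the subspace $U$. The case $r=0$ is covered by the same formula.
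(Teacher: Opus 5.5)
Your proof is correct and is essentially the paper's argument: Pythagoras for the orthogonal projection $\vx^*$, then passing from $\vy'$ back to $\vy$. Where the paper expands $\|(\vx-\vy')+(\vy'-\vy)\|_2^2$ and applies Cauchy--Schwarz, you use the triangle inequality; both give the same bound $\|\vx-\vx^*\|_2^2<(L+1)^2\delta^2+2r(L+1)\delta$, and your version is cleaner, with a valid explicit choice of $\delta$.
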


\begin{proof}
Let $d = \|\vx^* - \vy\|_2$. If $d=0$, the statement is trivial. So
we assume $d > 0$.
As $\vx^*$ is the orthogonal projection of $\vy$ onto $U$, the vector $\vx^* - \vy$ is orthogonal to the subspace $U$. Since $\vx - \vx^* \in U$, it follows that $\langle \vx - \vx^*, \vx^* - \vy \rangle = 0$.

By the Pythagorean theorem:
\[ \|\vx - \vy\|_2^2 = \|(\vx - \vx^*) + (\vx^* - \vy)\|_2^2 = \|\vx - \vx^*\|_2^2 + \|\vx^* - \vy\|_2^2. \]
Rearranging this identity, we get an expression for the term we wish to bound:
\[ \|\vx - \vx^*\|_2^2 = \|\vx - \vy\|_2^2 - \|\vx^* - \vy\|_2^2 = \|\vx - \vy\|_2^2 - d^2. \]
We introduce $\vy'$ by writing $\vx - \vy = (\vx - \vy') + (\vy' - \vy)$. Substituting this gives:
\begin{align*}
    \|\vx - \vx^*\|_2^2 &= \|(\vx - \vy') + (\vy' - \vy)\|_2^2 - d^2 \\
    &= \|\vx - \vy'\|_2^2 + 2\langle \vx - \vy', \vy' - \vy \rangle + \|\vy' - \vy\|_2^2 - d^2.
\end{align*}
Using the given conditions $\|\vx - \vy'\|_2 < d$ and $\|\vy' - \vy\|_2 < \delta$:
\[ \|\vx - \vx^*\|_2^2 < (L\delta+d)^2 + 2\langle \vx - \vy', \vy' - \vy \rangle + \delta^2 - d^2 = 2\langle \vx - \vy', \vy' - \vy \rangle + (L^2+1)\delta^2+2dL\delta. \]
By the Cauchy-Schwarz inequality and the given conditions again:
\[ \|\vx - \vx^*\|_2^2 < 2\|\vx - \vy'\|_2 \|\vy' - \vy\|_2 + (L^2+1)\delta^2+2dL\delta < 2d(L+1)\delta+(L+1)^2\delta^2 \]
For a given $\epsilon > 0$, we could pick $\delta > 0$  sufficiently small such that $d(L+1)\delta+(L+1)^2\delta^2 < \epsilon^2$.
 For this choice of $\delta$:
\[ \|\vx - \vx^*\|_2^2 < \epsilon^2. \]
Taking the square root yields $\|\vx - \vx^*\|_2 < \epsilon$.
\end{proof}

\begin{lemma}
Let $\vk(\vtheta)=(k_i(\vtheta_i))_{i=1}^n \in \sR^n$, where $\vtheta=(\vtheta_i)_{i=1}^n$.
Assume that for each $i\in [n]$, $\vk_i$ is a real analytic activation, and $k_i(\vzero)=\nabla k_i(\vzero)=0$. Let $\vX\in \sR^{m\times n}, \vy\in \sR^{m}$. Define $L(\vtheta)=\|\vX \vk(\vtheta)-\vy\|_2^2.$ Fix $\vtheta^*$, and define $\vk^*=\vk(\vtheta^*)$.
Assume that $\vk^*$ solves the following optimization problem for some $I_1,I_2,I_3,\{j\}\subset [n], $:
$$\min \|\vX \vk-\vy\|_2^2, \quad s.t. \quad \vk_i\geq 0,i\in I_1;\vk_i \leq 0,i\in I_3; \vk_i=0,i\in I_2\setminus{\{j\}}. $$
Define $I=\{i\in [n]\mid \vk_i^{*}=0\}$. Then
for any $\epsilon>0$, there exists $\delta>0$ such that the following holds:
``For all initialization $\vtheta'$ such that $\|\vk(\vtheta')-\vk^*\|_2<\delta$, let $\vtheta(t)$ be the solution of $\frac{\rd \vtheta}{\rd t}=-\nabla L(\vtheta), \vtheta(0)=\vtheta'$. Then 
 $\|\vX\vk(\vtheta(t))-\vX\vk(\vtheta^*)\|_{\infty}<\epsilon$ for all $0\leq t<T$, where $T=\inf_{t\geq 0} \{t\mid \|\vk_I(\vtheta(t))\|_{\infty}>\delta\}$.
''
\label{lemma:small change of large neurons}
\end{lemma}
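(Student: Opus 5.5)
The approach is to use the loss as a Lyapunov function and then apply Lemma~\ref{lemma:small change} on a subspace determined by the support of $\vk^*$. Gradient flow never increases the loss, so for all $t\ge 0$
\[
\|\vX\vk(\vtheta(t))-\vy\|_2\le \|\vX\vk(\vtheta')-\vy\|_2\le \|\vX\vk^*-\vy\|_2+\|\vX\|_2\,\delta .
\]
So along the whole trajectory the prediction $\vX\vk(\vtheta(t))$ stays within $\|\vX\|_2\delta$ of the optimal residual level $d:=\|\vX\vk^*-\vy\|_2$.

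The next step decomposes the prediction. Let $S=[n]\setminus I$ be the support of $\vk^*$ and $U=\mathrm{span}\{\vX_{:,i}\}_{i\in S}$. For $t<T$ we have $\|\vk_I(\vtheta(t))\|_\infty\le\delta$, so
\[
\vX\vk(\vtheta(t))=\vx(t)+\vr(t),\qquad \vx(t):=\vX_{:,S}\vk_S(\vtheta(t))\in U,\qquad \|\vr(t)\|_2\le C\delta,
\]
with $C$ depending only on $\vX$ and $n$.

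The key claim is that $\vX\vk^*$ is the orthogonal projection of $\vy$ onto $U$. Since $\vk^*$ solves the sign-constrained problem, the KKT conditions give $u_i=(\vX^\T(\vy-\vX\vk^*))_i=0$ for every $i$ with $k_i^*\neq 0$. This holds both for $i\in I_1\cup I_3\cup\{j\}$ with $k^*_i\ne0$, where the sign constraint is inactive, and trivially for the unconstrained index $j$. Hence $\vy-\vX\vk^*\perp U$ while $\vX\vk^*\in U$, so $\vx^*:=\vX\vk^*$ minimizes $\|\vx-\vy\|_2$ over $U$.

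To apply Lemma~\ref{lemma:small change}, set $\vy'=\vy-\vr(t)$. Then $\|\vy'-\vy\|\le C\delta$, and
\[
\|\vx(t)-\vy'\|_2=\|\vX\vk(\vtheta(t))-\vy\|_2\le d+\|\vX\|_2\delta .
\]
Taking the lemma's constant $L$ comparable to $\|\vX\|_2/C$ and its $\delta$ as $C\delta$ (adjusting constants), Lemma~\ref{lemma:small change} gives $\|\vx(t)-\vX\vk^*\|_2<\epsilon/2$ once $\delta$ is small. It follows that
\[
\|\vX\vk(\vtheta(t))-\vX\vk^*\|_\infty\le \|\vx(t)-\vX\vk^*\|_2+C\delta<\epsilon .
\]
This holds for every $t<T$, and $\delta$ does not depend on the initialization $\vtheta'$.

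The main subtlety is the KKT step, and in particular the treatment of indices in $I$ itself. We must not claim $u_i=0$ for $i\in I$; this is avoided by putting only the support columns into $U$ and absorbing the small coordinates $\vk_I$ into $\vr(t)$. A second point to check is the case $t=0$ with $T=0$, where the statement is vacuous. Finally, Lemma~\ref{lemma:small change} is phrased with strict inequalities, so $\delta$ should be shrunk slightly to accommodate that.
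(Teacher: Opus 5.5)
Your proposal is correct and takes essentially the same route as the paper. Both arguments restrict to the span $U$ of the support columns of $\vk^*$, use monotonicity of the loss to bound the residual, absorb the small coordinates $\vk_I$ into a perturbed target $\vy'$, and invoke Lemma~\ref{lemma:small change}. Your explicit first-order (KKT) argument that $\vX\vk^*$ is the orthogonal projection of $\vy$ onto $U$ fills in a step the paper only asserts, namely that $\vk^*$ restricted to its support solves the unconstrained least-squares problem.
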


\begin{proof}
Define $J=[n]\setminus{I}$.
Define $U=\mathrm{span}( \{\vX_{:,i}\mid i\in J\}).$
 Since $\vk^{*}$ solves the optimization problem
$$\min \|\vX \vk-\vy\|_2^2, \quad s.t. \quad \vk_i\geq 0,i\in I_1;\vk_i \leq 0,i\in I_3; \vk_i=0,i\in I_2\setminus{j}, $$
So $\vk_J^{*}$ must solve the following problem:
$$\min \|\vX_{:,J} \vk_J-\vy\|_2^2.$$
Let $\vz^*=\vX_{:,J} \vk_J\in U$. Then  $\vz$ is the solution of the optimization problem
$$\min_{\vz\in U} \|\vz-\vy\|_2^2.$$

 Since the loss of gradient flow is decreasing, so 
 $$
 \|\vX\vk(\vtheta(t))-\vy\|_2<\|\vX\vk(\vtheta')-\vy\|_2 \leq \|\vX\vk(\vtheta')-\vX\vk(\vtheta^*)\|_2+\|\vX\vk(\vtheta^*)-\vy\|_2 \leq L\|\vk(\vtheta')-\vk^*\|_2 +\|\vz^*-\vy\|_2.$$
 Here, $L$ is  $\|\vX\|_2$. By Lemma~\ref{lemma:small change}, for any $\epsilon>0$, there exists $\delta>0$ such that if $\vz\in U$ satisfies $\|\vz-\vy'\|_2<L\delta+\|\vz^*-\vy\|_2$ and $\|\vy'-\vy\|<\delta$, then $\|\vz-\vz^*\|<\epsilon.$
Assume 
 $\|\vk(\vtheta')-\vk^*\|<\delta'$ for some $\delta'>0$ to be determined.  Then we have 
$$\|\vX\vk(\vtheta(t))-\vy\|_2<L\delta'+\|\vz^*-\vy\|_2.$$
We may pick $\delta'<\delta$, so that 
$$\|\vO(\vtheta(t))-\vy\|_2<L\delta+\|\vz^*-\vy\|_2.$$
Besides,  we have 
$$\vX\vk(\vtheta(t))=\sum_{i\in I}k_i(\vtheta_i(t))\vX_{:,i}+\sum_{i\in J}k_i(\vtheta_i(t))\vX_{:,i}.$$


We may pick $\delta'$ sufficiently small such that 
$$\|\vk_I(\vtheta(t))\|_2<\delta' \Longrightarrow \|\sum_{i\in I}k_i(\vtheta_i(t))\vX_{:,i}\|_2<\delta.$$
So as long as $\|\vk_I(\vtheta(t))\|_2<\delta'$, we have 
$$\|\sum_{i\in J}k_i(\vtheta_i(t))\vX_{:,i}-(\vy-\sum_{i\in I}k_i(\vtheta_i(t))\vX_{:,i})\|_2<L\delta+\|\vz^*-\vy\|_2.$$
Besides, we have $\|\sum_{i\in I}k_i(\vtheta_i(t))\vX_{:,i})\|_2<\delta$, and $\sum_{i\in J}k_i(\vtheta_i(t))\vX_{:,i}\in U.$
By Lemma~\ref{lemma:small change}, we have $$\|\sum_{i\in J}k_i(\vtheta_i(t))\vX_{:,i}-\vz^*\|_2<\epsilon.$$ So we have 
$$\|\vX\vk(\vtheta(t))-\vX\vk^*\|_{2} \leq \|\sum_{i\in J}k_i(\vtheta_i(t))\vX_{:,i}-\vz^*\|_2+\|\sum_{i\in I}k_i(\vtheta_i(t))\vX_{:,i}\|_2<\epsilon+\delta'.$$
We may pick $\delta'<\epsilon$, then we have 
$$\|\vX\vk(\vtheta(t))-\vX\vk^*\|_{2}<2\epsilon.$$
Since in $L_2$ norm is equivalent to $L_{\infty}$ norm, the lemma holds.

\end{proof}

\begin{definition}[time mapping]
Consider the general two-layer diagonal linear network, and let  Assumption~\ref{assum:L-layer} holds. For each $i\in [n]$, let $\vtheta_i^s(t)$ be the solution of 
$$\frac{\rd \vtheta_i}{\rd t}=\nabla k_i(\vtheta_i),\vtheta_i(0)=s\vtheta_i^*.$$
Define $\Gamma_i^s=\{\vtheta_i^s(t)\mid t\in \sR\}.$
It is readily verifiable that for any $\vtheta_i\in \Gamma_i(\vtheta_i^*)$,  there exists unique $t\in \sR$ such that $\vtheta_i=\vtheta_i^s(t)$. The map from $\vtheta_i$ to $\frac{t}{-\log s}$ is denoted by $\frac{t}{-\log s}=\tau^s_i(\vtheta_i)$.
We call $\tau^s_i(\vtheta_i)$ as the  time mapping of neuron $i$. The time mapping $\tau^s_i(\vtheta_i)$ makes sense only if $\vtheta_i\in \Gamma_i^s$. Besides, we define $\vtau^s(\vtheta)=(\tau_i^s(\vtheta_i))_{i=1}^n.$ We call $\vtau^s$ as the time mapping.

\label{def:time mapping}
\end{definition}

\begin{lemma}
Consider the notation and assumptions in Definition~\ref{def:time mapping}. Fix $i\in [n]$.
Let $\vH=\nabla^2 k_i(\vzero)$. Assume $\vH$ has a unique largest eigenvalue $\lambda_{1}>0$ with corresponding eigenvectors $\vv_{1}$. Assume $\vH$ has a unique smallest eigenvalue $\lambda_{2}<0$ with the corresponding eigenvectors $\vv_{2}$. Assume $\langle\vtheta_i^*,\vv_1\rangle, \langle\vtheta_i^*,\vv_2\rangle \neq 0$. 
Then the following holds:
\begin{itemize}
    \item Fix $0<\delta<1$.
    Pick $\vtheta_i^s\in \Gamma_i^s(\vtheta_i^*) $ for each $0<s<\delta$. Assume there exists $\epsilon>0$ such that
    $\frac{1}{\lambda_2}+\epsilon<\tau_i^s(\vtheta_i^s)<\frac{1}{\lambda_1}-\epsilon$ for all $0<s<\delta.$ Then $\lim_{s\to 0} \vtheta_i^s=\vzero.$

    \item Fix $0<\delta<1$.
    Pick $\vtheta_i^s\in \Gamma_i^s(\vtheta_i^*) $ for each $0<s<\delta$. Assume there exists $\epsilon>0$ such that
    $\tau_i^s(\vtheta_i^s)>\frac{1}{\lambda_1}+\epsilon$ for all $0<s<\delta.$ Then $\lim_{s\to 0} k_i(\vtheta_i^s)=+\infty.$

\item Fix $0<\delta<1$.
    Pick $\vtheta_i^s\in \Gamma_i^s(\vtheta_i^*) $ for each $0<s<\delta$. Assume there exists $\epsilon>0$ such that
    $\tau_i^s(\vtheta_i^s)<\frac{1}{\lambda_2}-\epsilon$ for all $0<s<\delta.$ Then $\lim_{s\to 0} k_i(\vtheta_i^s)=-\infty.$


    

\label{lemma: short time leads to zero}    
\end{itemize}

\label{lemma:time mapping properties}
\end{lemma}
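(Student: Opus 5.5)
Throughout, $\vtheta_i^s(t)$ solves the unit-speed flow $\dot\vtheta_i=\nabla k_i(\vtheta_i)$ with $\vtheta_i^s(0)=s\vtheta_i^*$, and we write $T=-\log s$. The time mapping then gives $\vtheta_i^s=\vtheta_i^s(\tau T)$ with $\tau=\tau_i^s(\vtheta_i^s)$. Note that $1/\lambda_2<0<1/\lambda_1$.

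The plan is to compare this flow with its linearization at the origin, using Lemma~\ref{lemma:limit solution} to handle the escape phase. Near $\vzero$ we have $\nabla k_i(\vtheta)=\vH\vtheta+O(\|\vtheta\|^2)$. The linear flow gives $e^{t\vH}s\vtheta_i^*$, whose largest component grows like $s\,e^{\lambda_1 t}$ along $\vv_1$ for $t>0$, and like $s\,e^{\lambda_2 t}$ along $\vv_2$ for $t<0$. The non-orthogonality assumptions guarantee that these components do not vanish.

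\textbf{Item 1.} Suppose $t=\tau T$ with $\tau\in(1/\lambda_2+\epsilon,\,1/\lambda_1-\epsilon)$. In the linear flow every mode is then bounded by $s\,e^{\lambda_{\max}\tau T}\le s\cdot s^{-(1-\epsilon\lambda_1)}=s^{\epsilon\lambda_1}$ for $t\ge 0$. For $t<0$ it is bounded by $s\cdot s^{-(1-\epsilon|\lambda_2|)}$, using that $\lambda_2\tau<1-\epsilon|\lambda_2|$. Both bounds tend to $0$.

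To pass to the nonlinear flow I will use a bootstrap (Grönwall) argument. As long as $\|\vtheta\|\le r$, we have $\frac{d}{dt}\|\vtheta\|\le(\lambda_1+Cr)\|\vtheta\|$. Taking $r$ small relative to $\epsilon$, this yields $\|\vtheta(t)\|\le \|s\vtheta_i^*\|\,e^{(\lambda_1+Cr)t}\le C' s^{\epsilon\lambda_1/2}$ for $0\le t\le(1/\lambda_1-\epsilon)T$. The smallness of the bound shows a posteriori that the trajectory never left the ball of radius $r$, which closes the bootstrap. The backward-time case is handled symmetrically with $|\lambda_2|$. Hence $\vtheta_i^s\to\vzero$.

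\textbf{Items 2 and 3.} Lemma~\ref{lemma:limit solution} applies because $\langle\vtheta_i^*,\vv_1\rangle\neq 0$. It says that $\vtheta_i^s(t+T/\lambda_1)\to h(t)$, where $h$ parametrizes the limit trajectory $\Gamma^{+\pm}$, so the flow at time $\tau T$ is close to $h\big((\tau-1/\lambda_1)T\big)$. When $\tau>1/\lambda_1+\epsilon$, the argument $(\tau-1/\lambda_1)T\ge\epsilon T$ tends to $\infty$. On the limit trajectory $k_i$ is increasing, and by the third assumption of Definition~\ref{def:general 2-layer} it is unbounded, so $k_i(h(t'))\to+\infty$ as $t'\to\infty$.

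The difficulty is that the convergence in Lemma~\ref{lemma:limit solution} is only for fixed $t$, not uniform in $t$. I will circumvent this with monotonicity, since $k_i$ is nondecreasing along the flow because $\frac{d}{dt}k_i=\|\nabla k_i\|^2\ge 0$. Given $M$, choose $t_M$ with $k_i(h(t_M))>M+1$. For small $s$, pointwise convergence gives $k_i\big(\vtheta_i^s(T/\lambda_1+t_M)\big)>M$. Since $\tau T\ge T/\lambda_1+t_M$ once $\epsilon T>t_M$, monotonicity yields $k_i(\vtheta_i^s)>M$. Item 3 follows in the same way from the reversed-time flow $\dot\vtheta=-\nabla k_i$, rescaled by $|\lambda_2|$ and combined with $\langle\vtheta_i^*,\vv_2\rangle\neq0$. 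Here $k_i$ is nonincreasing as $t$ decreases, and the limit trajectory $\Gamma^{-\pm}$ supplies unboundedness from below.

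The main obstacle is the bootstrap in Item 1, specifically controlling the quadratic remainder uniformly over the long time window of length $\Theta(T)$ while staying in the small ball. I plan to handle it by shrinking $r$ depending on $\epsilon$ so that $(\lambda_1+Cr)(1/\lambda_1-\epsilon)<1-\epsilon\lambda_1/2$. Item 1 does not need the unique-largest-eigenvalue assumption; that assumption is used in Items 2 and 3 through Lemma~\ref{lemma:limit solution}.
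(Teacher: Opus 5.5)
Your proposal is correct. The paper gives no actual proof here: it only asserts that the lemma is a direct corollary of Lemma~\ref{lemma:limit solution} and omits the details. Your argument partly follows that indicated route and partly departs from it.

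\textbf{Items 2 and 3.} You follow the paper's suggested route, but you add the step that makes it work. Lemma~\ref{lemma:limit solution} only gives convergence at each fixed shifted time. The hypothesis, however, places $\vtheta_i^s$ at shifts $(\tau-1/\lambda_1)T\geq\epsilon T\to\infty$, so pointwise convergence alone does not suffice. Your monotonicity observation $\frac{\rd}{\rd t}k_i=\|\nabla k_i\|_2^2\geq 0$ bridges exactly this non-uniformity.

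\textbf{Item 1.} Here you take a genuinely different route: a Gr\"onwall bootstrap on $\|\vtheta\|$ using only the Rayleigh bounds $\lambda_2\|\vtheta\|_2^2\le\langle\vtheta,\vH\vtheta\rangle\le\lambda_1\|\vtheta\|_2^2$. This is the better choice. The fixed-$t$ statement of Lemma~\ref{lemma:limit solution} does not by itself keep the trajectory near the origin uniformly over the window $[0,(1/\lambda_1-\epsilon)T]$; that would require the quantitative linearization estimates inside the proof of \citet{li2020towards}. Your bootstrap is self-contained and yields an explicit rate $O(s^{c\epsilon})$. It also shows that Item 1 needs neither the spectral gap nor the non-orthogonality assumptions.

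\textbf{One small point to tighten.} In Item 2, unboundedness of the monotone function $k_i\circ h$ alone does not give $k_i(h(t'))\to+\infty$; you must rule out unboundedness from below. Your own monotonicity argument does this. For fixed $t$ and small $s$ we have $t+T/\lambda_1>0$, hence
\[
k_i(h(t))=\lim_{s\to 0}k_i\bigl(\vtheta_i^s(t+T/\lambda_1)\bigr)\geq\lim_{s\to 0}k_i(s\vtheta_i^*)=0 .
\]
So $k_i\circ h\ge 0$, the unboundedness must be from above, and the limit is $+\infty$. The symmetric argument gives $k_i\le 0$ along the reversed-time limit trajectory in Item 3.
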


    

\begin{proof}


This lemma is a direct corollary of Lemma~\ref{lemma:limit solution}. We  leave out the  proof.
\end{proof}

\begin{theorem}[Dynamics of General Two-layer Diagonal Linear Network]
Consider the general two-layer diagonal linear network, and let  Assumption~\ref{assum:L-layer} holds. Let $\Gamma^s$ to be the trajectory of GF initialized at $s\vtheta^*$. Let $\vtau^s$ be the time mapping defined in Definition~\ref{def:time mapping}.
Then the following holds:
\begin{itemize}
    \item For each $p=0,1,\ldots,p_{\max}$, there exists $\vtheta^s\in \Gamma^s$ such that  $\lim_{s\to 0}\vk(\vtheta^s) =\vk^{(p)}$ and $\lim_{s\to 0}\vtau^s(\vtheta^s) =\vs^{(p)}.$
    \item The limit $\vtheta'=\lim_{s\to +\infty}\lim_{t\to +\infty} \phi(s\vtheta^*,t)$ exists, and $\vk(\vtheta')=\vk^{(p_{\max})}$.
\end{itemize}

\label{thm:L=2}
\end{theorem}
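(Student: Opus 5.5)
Induction on $p$, working throughout on the slow time scale $\tau = t/(-\log s)$. The key identity comes from Theorem~\ref{thm:SIM}. Each neuron obeys $\dot\vtheta_i=\nabla k_i(\vtheta_i)\,u_i(\vtheta(t))$ with $\vu(\vtheta)=\vX^\T(\vy-\vX\vk(\vtheta))$, so $\vtheta_i(t)$ stays on the one–dimensional orbit $\Gamma_i^s$. Along this orbit it is a reparametrisation of the single-neuron flow $\dot\vtheta_i=\nabla k_i(\vtheta_i)$, with internal clock $\int_0^t u_i(\vtheta(r))\,\rd r$. Hence
\[
\tau_i^s(\vtheta_i(t))=\frac{1}{-\log s}\int_0^t u_i(\vtheta(r))\,\rd r ,
\]
and $\vtau^s$ integrates the correlation vector $\vu$ on the slow clock, exactly as $\vs$ accumulates $\delta_j\vu$ in Algorithm~\ref{alg:my_algorithm}. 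The thresholds $t_i^+=1/\lambda_i^+$ and $-t_i^-=1/\lambda_i^-$ are the cutoffs of Lemma~\ref{lemma:time mapping properties}. The induction hypothesis at step $p$ is that there are times $T_p^s$ with $\vk(\vtheta(T_p^s))\to\vk^{(p)}$ and $\vtau^s(\vtheta(T_p^s))\to\vs^{(p)}$. In addition, for $i\in I_2^{(p)}$ we have $s_i^{(p)}\in(-t_i^-,t_i^+)$, and for $i\notin I_2^{(p)}$ the neuron $\vtheta_i(T_p^s)$ converges, by Lemma~\ref{lemma:convergece to limit trajectory}, to a point of the limit trajectory of the sign of $k_i^{(p)}$. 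The base case $p=0$ is immediate with $T_0^s=0$.

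\textbf{Plateau (feature selection).} Start from $T_p^s$. As long as every $i\in I_2^{(p)}$ keeps $\tau_i^s$ inside $(-t_i^-+\epsilon,\,t_i^+-\epsilon)$, the first bullet of Lemma~\ref{lemma:time mapping properties} gives $\vtheta_i\to\vzero$, so $\vk_{I_2}$ is small. Lemma~\ref{lemma:small change of large neurons} then keeps $\vX\vk(\vtheta(t))$ within $\epsilon$ of $\vX\vk^{(p)}$, so $\vu(\vtheta(t))$ stays within $O(\epsilon)$ of $\vu^{(p)}=\vX^\T(\vy-\vX\vk^{(p)})$. The neurons in $I_1\cup I_3$ have $u^{(p)}_i=0$ by the KKT conditions of the sign-constrained problem. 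Integrating the identity above gives
\[
\vtau^s(t)\approx\vs^{(p)}+\frac{t-T_p^s}{-\log s}\,\vu^{(p)} .
\]
So on the slow clock every inactive neuron moves linearly, and the first to hit its threshold is the index $j$ with minimal $\delta_j$. Uniqueness of the argmin and the nondegeneracy assumptions of Theorem~\ref{thm: algorithm} give a positive gap to all other neurons. This yields $\vtau^s\to\vs^{(p+1)}=\vs^{(p)}+\delta_j\vu^{(p)}$ at the escape time, while $\vk\to\vk^{(p)}$ still holds there. Active coordinates keep $\tau_i$ of order one over the $O(\log(1/s))$ window because $u_i=O(\epsilon)$, and since $\epsilon$ is arbitrary a diagonal argument in $(\epsilon,s)$ closes this step.

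\textbf{Jump (learning phase).} After crossing, the escaping neuron $j$ reaches $O(1)$ size within $O(1)$ real time: Lemma~\ref{lemma:limit solution} gives a shifted limit solution, and Lemma~\ref{lemma:convergece to limit trajectory} identifies its limit trajectory. On $O(1)$ time windows the neurons in $I_2\setminus\{j\}$ stay $o(1)$, because their $\tau$ moves by only $O(1/\log(1/s))$. By continuous dependence on initial data, the rescaled trajectory therefore converges to gradient flow started on the limit trajectories, with the coordinates in $I_2\setminus\{j\}$ frozen at $\vzero$ (their orbit is $\{\vzero\}$ by Theorem~\ref{thm:SIM}). Restricted to the columns $J^{(p)}$, which are independent by Lemma~\ref{lemma: linearly indpendent}, Lemma~\ref{lemma:convergence guaranty if initialized at the limit trajectory} applies. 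This limit flow converges to the sign-constrained least squares solution, which is $\vk^{(p+1)}$. Choosing a large but fixed time $T$ in the limit flow and then $s$ small yields a point with $\vk\to\vk^{(p+1)}$ and $\vtau$ unchanged to first order, which completes the inductive step. Coordinates that hit zero in the limit flow are handled by the third nondegeneracy assumption: their $\tau$ then sits strictly inside the open interval or $u_i\neq0$, so they correctly re-enter $I_2$.

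\textbf{Main obstacle and final step.} The main obstacle is this jump step, namely exchanging $s\to0$ with $t\to\infty$ on the limit flow. It matters when a coordinate converges to $0$ only asymptotically, and when the limit flow starts merely near, not on, the limit trajectory; I would control both with Łojasiewicz-type convergence together with uniform continuity on compact time windows. For the last bullet, at $p_{\max}$ the loss is zero and $\vu^{(p_{\max})}=\vzero$. Local Łojasiewicz stability near the attained global minimum keeps the trajectory in a small neighbourhood of its limit for all later times. Hence $\lim_{t\to\infty}\phi(s\vtheta^*,t)$ lies near $\vtheta$ at $T_{p_{\max}}^s$, and the double limit exists with $\vk(\vtheta')=\vk^{(p_{\max})}$.
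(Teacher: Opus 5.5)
Your proposal is correct and follows essentially the same route as the paper. Both run an induction on $p$ using the slow time mapping $\tau=t/(-\log s)$. The plateau is controlled by the small-change lemma for large neurons and the time-mapping lemma, with the unique argmin selecting $j$. The jump passes to the limit point of $\vtheta^s(T^s)$ and runs the limit flow with the $I_2\setminus\{j\}$ coordinates frozen at zero. The convergence lemma for flows started on limit trajectories is applied at a fixed large time before sending $s\to 0$, and the argument closes with {\L}ojasiewicz stability at the global minimizer.

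One slip: your stated invariant $s_i^{(p)}\in(-t_i^-,t_i^+)$ for all $i\in I_2^{(p)}$ is false for neurons that were active and returned to zero. For example, neuron $1$ at $p=2$ in the paper's example has $k_1^{(2)}=0$ and $s_1^{(2)}=t_1^+$. Such neurons start the plateau at the boundary. There the KKT sign $u_i\le 0$, made strict by the third hypothesis of the algorithm's convergence theorem, drives $\tau_i$ inward, while $k_i$ stays small by monotonicity along the orbit. Your closing remark in the jump step effectively does this, and the paper's own proof simply suppresses the case by assuming $u_i^*>0$.
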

\begin{proof}

For $p=0$, the statements obviously hold. Assume the statements hold for some natural number $p$. We want to prove that the statements  hold for $p+1$.  


Define $I=\{i\in [n]\mid k_i^{(p)}=0\}$,  $J_1=\{i\in [n]\mid k_i^{(p)}>0\}$, , $J_2=\{i\in [n]\mid k_i^{(p)}<0\}$.
Define $\vu(\vtheta)=\vX^\T(\vy-\vX \vk(\vtheta))$. Let $\vtheta^s(t)$ be the solution of 
$$\frac{\rd \vtheta }{\rd t}=-\nabla L(\vtheta), \vtheta(0)=\vtheta^s.$$


By calculation, we have 
$$\frac{\rd \vtheta^s_i(t)}{\rd t}=\nabla k_i(\vtheta^s_i(t)) u_i(\vtheta^s(t)).$$
For each $i\in [n]$, let $T_i^s(t)$ to be the solution of 
$$\frac{\rd T_i^s(t)}{\rd t}=u_i(\vtheta^s(t)), T_i^s(0)=0.$$
Then we have 
$$\frac{\rd \vtheta^s_i(T_i^s)}{\rd T_i^s}=\nabla k_i(\vtheta^s_i(T_i^s)), \vtheta_i^s(0)=\vtheta^s_i.$$
Let $\vtau^s$ be the time mapping defined in Definition~\ref{def:time mapping}. By Theorem~\ref{thm:SIM}, $\tau^s_i(\vtheta_i^s(t))$ is well defined for each $i\in [n]$.
By definition of time mapping, for all $i\in [n]$ and $s>0$, we have 
\begin{equation}
\tau_i^s(\vtheta_i^s(t))=\frac{T_i^s(t)}{-\log s}+\tau_i^s(\vtheta^s_i).   
\label{eq: time equation}
\end{equation}

Let $\vu^*:=\vX^\T(\vy-\vX \vk^{(p)}
)$. By Lemma~\ref{lemma:small change of large neurons}, for any $\epsilon>0$, there exists $\delta>0$ such that for sufficiently large $s$, we have 
 $\|\vu(\vtheta^s(t))-\vu^*\|_{\infty}<\epsilon$, where $T^s=\inf_{t\geq 0} \{t\mid \|\vk_{I}(\vtheta(t))\|_{\infty}>\delta\}$. 

For $0\leq t<T^s$, we have 
$$u_i^*-\epsilon <\frac{ \rd T_i^s(t)}{\rd t}<u_i^*+\epsilon.$$
So for $0\leq t<T^s$, it holds that 
$$(u_i^*-\epsilon)t < T_i^s(t)<(u_i^*+\epsilon)t.$$
By Equation~\eqref{eq: time equation} and the assumption that $\lim_{s\to 0} \vtau^s(\vtheta^s)=\vs^{(p)}$, for sufficiently small $s$, we have 
$$(u_i^*-\epsilon)\frac{t}{-\log s}+(s^{(p)}_i-\epsilon)<\tau_i^s(\vtheta_i^s(t))<(u_i^*+\epsilon)\frac{t}{-\log s}+(s^{(p)}_i+\epsilon),\quad \forall 0<t<T^s.$$

For simplicity of notation, we assume $u_i^*>0$ for each $i\in I$. The case of  $u_i^*<0$ for some $i$ is similar.
For each $i\in I$, define $\mu_i(\epsilon)$ to be the  interval $[\frac{t_i^+-(s_i^{(p)}+\epsilon)}{u_i^*+\epsilon},\frac{t_i^+-(s_i^{(p)}-\epsilon)}{u_i^*-\epsilon}].$ We use $\mu_i^-(\epsilon)$ and $\mu_i^+(\epsilon)$ to denote the left endpoint and right endpoint of $\mu(\epsilon)$, respectively.

Fix $i\in I,0<a<\mu_i^-(\epsilon)$. 
By Lemma~\ref{lemma: short time leads to zero}, if  $0<t^s<\min \{a\log\frac{1}{s} ,T^s\}$ for each $s>0$, then $\lim_{s\to 0} \vtheta_i^s(t)=\vzero.$
Besides, if $a>\mu_i^+(\epsilon)$ and $0<t^s<\min \{a\log\frac{1}{s} ,T^s\}$ for each $s>0$, then $\lim_{s\to 0} k_i(\vtheta_i^s(t))=+\infty.$

Since $T^s=\inf_{t\geq 0} \{t\mid \|\vk_{I}(\vtheta(t))\|_{\infty}>\delta\}$, we have $\|\vk_{I}(\vtheta(T^s))\|_{\infty}=\delta.$ Therefore, there exists $l\in I$, such that $$k_l(\vtheta_l(T^s))=\pm \delta.$$
Since $u_i^*>0$ for each $i$, it holds that 
$$k_l(\vtheta_l(T^s))= \delta.$$

For each  \( i \in I \), we define   \( \mu_i(0)=(t_i^+ - s_i^{(p)}) / u_i^*. \) Let $j \in \mathrm{argmin}_{i \in I_2} \mu_i(0) $. 
By assumption, $j$ is unique.
Therefore, for sufficiently small $\epsilon$, the  intervals $\mu_i(\epsilon), i\in [n]$ has no intersection.

It is readily verifiable that for sufficiently small $\epsilon$, we have $l=j$. Otherwise let $l\neq j$. By Lemma~\ref{lemma: short time leads to zero}, for any $\hat{\epsilon}>0$, if $T^s/\log\frac{1}{s}< \mu_l^-(\epsilon)-\hat{\epsilon}$ holds for sequence $s_n\to 0$, then $k_l(\vtheta_l(T^s))\to 0$, which contradicts $k_l(\vtheta_l(T^s))=\delta>0$. So for sufficiently small $s$, we have 
$$T^s/\log\frac{1}{s}> \mu_l^-(\epsilon)-\hat{\epsilon}.$$ 

We may pick $\hat{\epsilon}$ sufficiently small such that 
$$\mu_l^-(\epsilon)-\hat{\epsilon}>\mu_j^+(\epsilon).$$

Therefore, we have 
$$T^s/\log\frac{1}{s}> \mu_l^-(\epsilon)-\hat{\epsilon}>\mu_j^+(\epsilon).$$

By Lemma~\ref{lemma: short time leads to zero}, we have
$k_j(\vtheta_j^s(T^s)) \to +\infty$, which contradicts that $|k_j(\vtheta_j^s(T^s)) |\leq \delta.$

Moreover, we have $T^s/\log\frac{1}{s}\in \mu_j(\epsilon).$ Therefore, for all $i\in I$, it holds that 
$$\tau_i^s(\vtheta_i^s(T^s))=u_i^*\mu_j(0)+s^{(p)}_i+\fO(\epsilon).$$
As a consequence, for sufficiently small $\epsilon$, for each $i\in I\setminus{\{j\}}$, we have 
$$\lim_{s\to 0} \vtheta_i^s(T^s)=\vzero.$$
By Lemma~\ref{lemma:convergece to limit trajectory}, $\vtheta_j^s(T^s)$ converges to one point in $\Gamma^{++}$ or $\Gamma^{+-}$. Define $J=J_1\cup J_2$.
For $i\in J$, we already have the control $\|\vX_{:,J}\vk_J(\vtheta_J^s(T^s))-\vX_{:,J}\vk_J^{(p)}\|_{\infty}<\epsilon.$ By Lemma~\ref{lemma: linearly indpendent}, $\vX_{:,J}$ has full column rank. So we have 
$$\|\vk_J(\vtheta_J^s(T^s))-\vk_J^{(p)}\|_{\infty}<C\epsilon,$$
where $C$ depends only on $\vX_{:,J}$. By Lemma~\ref{lemma:convergece to limit trajectory}, for each $i\in J$, the limit $\lim_{s\to 0} \vtheta^s_i(T^s)$ exists, and $\lim_{s\to 0} \vtheta^s_i(T^s)$ is on the limit trajectories in Definition~\ref{def:limit trajectory}.

Therefore, for all $i\in [n]$, the limit $\vtheta_i^s(T^s)$ exists. So $$\hat{\vtheta}_0:=\lim_{s\to 0}\vtheta^s(T^s)$$ exists. Let $\hat{\vtheta}(t)=\phi(\hat{\vtheta}_0,t).$
By Theorem~\ref{thm:SIM}, for each $i\in I\setminus{\{j\}}$, for all $t>0$, we have  $\hat{\vtheta}_i(t)=\vzero.$ Then apply Lemma~\ref{lemma:convergence guaranty if initialized at the limit trajectory}, the limit 
$\vk^*=\lim_{t\to \infty} \vk(\hat{\vtheta}(t))$ exists, and solves the optimization problem:
\begin{equation}
    \min _{\vk\in \sR^n} \|\vX\vk-\vy\|_2^2 \quad s.t. \quad k_i\geq 0 \text{ if } s_i=+, \quad k_i\leq 0 \text{ if } s_i=-.
\end{equation} 
By definition, $\vk^*=\vk^{(p+1)}$. So for any $\epsilon'>0$, there exists $T>0$ such that 
$$\|\vk^*- \vk(\hat{\vtheta}(T))\|_2<\epsilon'.$$

Since $\hat{\vtheta}_0:=\lim_{s\to 0}\vtheta^s(T^s)$, for sufficiently small $s$, we have 
$$\|\phi(\vtheta^s(T^s),T)-\hat{\vtheta}(T)\|_2<\epsilon',$$
and 
$$\|\vk(\phi(\vtheta^s(T^s),T))-\vk(\hat{\vtheta}(T))\|_2<\epsilon'.$$
Let $\vtheta_s:=\phi(\vtheta^s(T^s),T)$. Then we have 
$$\|\vk(\vtheta_s)-\vk^{(p+1)}\|_2<2\epsilon'.$$
Therefore, we have 
$$\lim_{s\to 0}\vk(\vtheta_s)=\vk^{(p+1)}.$$
By definition, 
$$\vtau^s(\vtheta_s)=\vtau^s(\vtheta^s(T^s))+\frac{T}{-\log s }.$$
Let $s\to 0$, one gets
$$\lim_{s\to 0}\vtau^s(\vtheta_s)=\lim_{s\to 0}\vtau^s(\vtheta^s(T^s))=\vu^* \mu_j(0)+\vs^{(p)}.$$
By definition, 
$\vu^* \mu_j(0)+\vs^{(p)}=\vs^{(p+1)}.$
So 
$$\lim_{s\to 0}\vtau^s(\vtheta_s)=\vs^{(p+1)}.$$
 So the statements of induction  hold for $p+1$. By mathematical induction, the statements hold for any natural number  $p$.

In the following we prove that the limit $\vtheta'=\lim_{s\to +\infty}\lim_{t\to +\infty} \phi(s\vtheta^*,t)$ exists, and $\vk(\vtheta')=\vk^{(p_{\max})}$. We have already proved that there exists $\vtheta_s$ such that $\vk(\vtheta_s)\to \vk^{p_{\max}}$. By Lemma~\ref{lemma:convergece to limit trajectory}, the limit $\tilde{\vtheta}:=\lim_{s\to 0}\vtheta_s$ also exists.

By definition of $\vk^{p_{\max}}$, $L(\tilde{\vtheta})=0$. So $\tilde{\vtheta}$ is a global minimizer of $L(\vtheta)$. Since $L(\vtheta)$ is real analytic, Łojasiewicz inequality~\citet{lojasiewicz1965ensembles} holds for $L(\vtheta)$.

By standard Łojasiewicz inequality argument (for example, see Lemma G.1 in~\citet{li2020towards}) , there exists  $C,\alpha,\delta>0$, such that for any $\vtheta_0$ in $B_{\delta}(\tilde{\vtheta})$, the limit $\lim_{t\to \infty}\phi(\vtheta_0,t)$ exists, and $$\|\lim_{t\to \infty}\phi(\vtheta_0,t)-\tilde{\vtheta}\|_2<C|\vtheta_0-\hat{\vtheta}|_2^\alpha.$$ 

Since 
$\lim_{s\to +\infty}\vtheta_s =\tilde{\vtheta}$, for sufficiently small $s$, we have $$\|\lim_{t\to \infty}\phi(\vtheta_s,t)-\hat{\vtheta}\|_2<C\|\vtheta_s-\hat{\vtheta}\|_2^\alpha.$$
Let $s\to 0$, we get 
$$\lim_{s\to 0}\lim_{t\to +\infty} \phi(s\vtheta^*,t)=\tilde{\vtheta}.$$

\end{proof}

\begin{theorem}
Consider the deep diagonal linear network, and let Assumption~\ref{assum:L-layer} holds. Let $\Gamma_s$ to be the trajectory of GF initialized at $s\vtheta^*$. We use $a_{i,j}(\vtheta)$ to denote the value of $a_{i,j}$ at $\vtheta$. Then the following holds:

\begin{itemize}
    \item For each $p=0,1,\ldots,p_{\max}$, there exists $\vtheta^s\in \Gamma^s$ such that  
    \begin{itemize}
        \item $\lim_{s\to 0}\vk(\vtheta^s) =\vk^{(p)}$, and
        \item $\lim_{s\to +\infty} \frac{a_{i,L}(\vtheta_s)}{s^{L-2}} =F_k^{-1}(s_k^{(p)}).$
    \end{itemize}
    
    \item The limit $\vtheta'=\lim_{s\to +\infty}\lim_{t\to +\infty} \phi(s\vtheta^*,t)$ exists, and $\vk(\vtheta')=\vk^{(p_{\max})}$.
\end{itemize}

\label{thm:L-layer}
\end{theorem}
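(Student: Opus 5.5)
The plan is to mirror the proof of Theorem~\ref{thm:L=2}, replacing the logarithmic clock $-\log s$ by the polynomial clock $s^{-(L-2)}$ and the limit trajectories of Lemma~\ref{lemma:limit solution} by the explicit curves given by the conservation laws. For neuron $i$ the flow is $\dot a_{i,k} = u_i(\vtheta)\prod_{l\neq k} a_{i,l}$. This is a reparametrization of the single-neuron flow $\dot a_{i,k}=\prod_{l\ne k}a_{i,l}$, consistent with Theorem~\ref{thm:SIM}, and it preserves $a_{i,j}^2-a_{i,L}^2 = s^2\mu_{i,j}^2$. Hence the orbit through $s\vtheta_i^*$ is a curve parametrized by $z=a_{i,L}$, with $a_{i,j}=\pm\sqrt{z^2+s^2\mu_{i,j}^2}$ and signs fixed by $\vtheta_i^*$ (these never vanish since $\mu_{i,j}\neq 0$). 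Along it,
\[
\frac{\rd z}{\rd T}=\prod_{j<L}\sqrt{z^2+s^2\mu_{i,j}^2},\qquad \frac{\rd T_i}{\rd t}=u_i(\vtheta(t))
\]
(up to a fixed sign, absorbed into the definition of $F_i$). Substituting $z=s\tilde z$ gives $T = s^{-(L-2)}F_i(\tilde z)$. The \emph{time mapping} is then $\tau_i^s(\vtheta_i):=s^{L-2}T = F_i(a_{i,L}/s)$, which is exactly the quantity whose limit the theorem identifies. I would first prove the analogue of Lemma~\ref{lemma:time mapping properties}. If $\tau_i^s\in(-t_i^-+\epsilon,\,t_i^+-\epsilon)$, then $\tilde z$ is bounded, so $\vtheta_i^s=\fO(s)\to\vzero$. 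If $\tau_i^s>t_i^+$ eventually, then $F_i^{-1}$ would exceed $F_i(+\infty)$, meaning the neuron has already blown up, $k_i\to+\infty$; similarly for the negative side. This is elementary, because $F_i$ is an explicit increasing bijection $\sR\to(-t_i^-,t_i^+)$ and $L\ge3$ makes the integral converge.

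Next I would prove the deep analogues of Lemmas~\ref{lemma:convergece to limit trajectory} and~\ref{lemma:convergence guaranty if initialized at the limit trajectory}. As $s\to0$ the orbits converge to the limit curves $\{|a_{i,j}|=|a_{i,L}|\}$ with the inherited signs, and $k_i=\pm |z|^L$ is strictly monotone and unbounded on each branch. So $k_l(\vtheta_l^s)\to\mu\neq0$ forces $a_{i,L}\to\pm|\mu|^{1/L}$ and determines all the other coordinates, which gives convergence of $\vtheta_l^s$ directly without Lemma~\ref{lemma:limit solution}. On the limit curves, sign invariance (the SIM argument), boundedness from the full column rank of the active columns (Lemma~\ref{lemma: linearly indpendent}), and Łojasiewicz yield convergence to the sign-constrained least-squares solution, by the same KKT argument as in the two-layer case.

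The induction on $p$ then copies the proof of Theorem~\ref{thm:L=2}. Lemma~\ref{lemma:small change of large neurons} keeps $\vu(\vtheta^s(t))$ within $\epsilon$ of $\vu^*$ until the exit time $T^s$. During this window $\tau_i^s$ moves linearly, $\tau_i^s(t)\approx s_i^{(p)}+u_i^* s^{L-2}t$. The uniqueness assumption on $\arg\min\delta_i$ separates the exit windows $s^{L-2}T^s\in\mu_i(\epsilon)$, so the unique neuron $j$ exits while the others stay at $\fO(s)$ with $\tau_i^s\to s_i^{(p)}+\delta_j u_i^*=s_i^{(p+1)}$. The third assumption of Theorem~\ref{thm: algorithm} rules out neurons sitting exactly at the boundary $\pm t_i^{\mp}$ with zero speed. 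After the exit, I would flow for an $s$-independent time $T$ from the limit point to get $\vk\to\vk^{(p+1)}$; this shifts $\tau$ only by $s^{L-2}T\to0$, and $a_{i,L}/s=F_i^{-1}(\tau_i^s)$ gives the second bullet. The final limit statement follows from the same Łojasiewicz neighbourhood argument.

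The main obstacle is the control of inactive neurons across learning phases. Their ``position'' is $\tau_i^s$, a quantity of order one, while the coordinates themselves are $\fO(s)$. I must show that during the $\fO(1)$-length learning phase $\tau_i^s$ changes by only $\fO(s^{L-2})$ and $\vtheta_i$ stays $\fO(s)$, so that the induction hypothesis on $\vtau^s$ genuinely transfers. The reparametrization identity $\tau_i^s(t)=\tau_i^s(0)+s^{L-2}T_i(t)$, together with boundedness of $\vu$ along the trajectory, is what I would rely on to close this.
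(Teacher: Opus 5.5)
Your proposal is correct and follows essentially the same route as the paper. The conservation laws $a_{i,j}^2-a_{i,L}^2=s^2\mu_{i,j}^2$ reduce each neuron to a curve parametrized by $a_{i,L}$, and the clock $F_i(a_{i,L}/s)=s^{L-2}\int_0^t u_i$ replaces the logarithmic time mapping. Lemma~\ref{lemma:small change of large neurons} freezes $\vu$ until the first exit, the unique argmin neuron exits while the others stay $O(s)$, and the learning phase and final limit are handled as in Theorem~\ref{thm:L=2}. Your normalization $F_i(a_{i,L}/s)\to s_i^{(p)}$ is the consistent one; the statement prints $a_{i,L}/s^{L-2}$, and the paper's inequality~\eqref{eq:time estimate of L-layer} is missing the factor $s^{L-2}$ that you restore. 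Your direct derivation of the deep analogues of Lemmas~\ref{lemma:convergece to limit trajectory} and~\ref{lemma:convergence guaranty if initialized at the limit trajectory} from the explicit curves supplies what the paper leaves as ``similar''.
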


\begin{proof}
The main idea of proof is the same with Theorem~\ref{thm:L=2}. 
By calculation, we have 
\[
\frac{\rd a_{i,\ell}}{\rd t}
=
\Big(\prod_{r\neq \ell} a_{i,r}\Big)\, u_i(\vtheta),
\qquad
 \vu(\vtheta)=\vX^\T(\vy-\vX\vk(\vtheta)).
\]
For any $i=1,\ldots,n$, $j=1,\ldots,L-1$,
we have the preserved quantity
\[
a_{i,j}^2(t)-a_{i,L}^2(t)
=
a_{i,j}^2(0)-a_{i,L}^2(0).
\]
By assumption, $a_{i,j}^2(0)-a_{i,L}^2(0)=s^2\mu_{i,j}^2$.
So we have 
\[
a_{i,j}^2(t)-a_{i,L}^2(t)
=s^2\mu_{ij}^2>0.
\]
Therefore, for any $t\geq 0$, the value of $a_{i,j}^2(t)$ can not be zero. So for 
$i=1,\ldots,n$, $j=1,\ldots,L-1$,
the sign of $a_{i,j}(t)$ does not change during training. As a consequence, we have 
\[
\frac{\rd a_{i,L}}{\rd t}
= \pm
\Big(\prod_{r\neq L} \sqrt{a_{i,L}^2+s^2\mu^2_{i,r}}\Big)\, u_i(\vtheta),\]
The sign of the $\pm$ is same with the sign of $\prod_{r\neq L} a_{i,r}(0)$. Without loss of generality we assume that  $\prod_{r\neq L} a_{i,r}^*>0$ for all $i$. Otherwise one can replace $a_{i,L}(t)$ with $-a_{i,L}(t)$. Under the assumption, we have 
\begin{equation}
\frac{\rd a_{i,L}}{\rd t}
= 
\Big(\prod_{r\neq L} \sqrt{a_{i,L}^2+s^2\mu^2_{i,r}}\Big)\, u_i(\vtheta).
\label{eq:GF of L layer}
\end{equation}
For $p=0$, it is readily verifiable that the statements of the theorem hold.
Now assume the statements of the theorem hold for natural number $p$. We want to prove the statements for $p+1$. 

Define $I_1=\{i\in [n]\mid k_i^{(p)}>0\}$, $I_2=\{i\in [n]\mid k_i^{(p)}=0\}$, $I_3=\{i\in [n]\mid k_i^{(p)}<0\}$. 
 Denote $\vtheta^s(t)=\phi(\vtheta^s,t)$, and denote $a_{ij}^s(t)$ to be the value of $a_{ij}$ of $\vtheta^s(t)$.


 Define $\vu(\vtheta)=\vX^\T(\vy-\vX \vk(\vtheta))$, and define $\vu^*=\vX^\T(\vy-\vX \vk^{(p)})$. 
By Lemma~\ref{lemma:small change of large neurons}, for any $\epsilon>0$, there exists $\delta>0$ such that for sufficiently small $s$, we have 
 $$\|\vu(\vtheta^s(t))-\vu^*\|_{\infty}<\epsilon$$ for all $0\leq t\leq T^s$, where $T^s=\inf_{t\geq 0} \{t\mid \|\vk_{I_2}(\vtheta^s(t))\|_{\infty}>\delta\}$. 
 

 By Equation~\eqref{eq:GF of L layer}, we have 
$$\frac{\rd a^s_{i,L}}{\prod_{r\neq L} \sqrt{(a_{i,L}^{s})^2+s^2\mu^2_{i,r}}}
= 
\, u_i(\vtheta^s(t)) \rd t.$$
Take the integration, and we get
$$\int_{a^s_{i,L}(0)}^{a^s_{i,L}(t)}  \frac{\rd x}{\prod_{r\neq L} \sqrt{x^2+s^2\mu^2_{i,r}}}
= \int _{0}^t 
\, u_i(\vtheta^s(r)) \rd r.$$
Change variables by defining $x=sx'$. Then 
$$\int_{ a_{i,L}(0)}^{a^s_{i,L}(t)}  \frac{\rd x}{\prod_{r\neq L} \sqrt{x^2+s^2\mu^2_{i,r}}}
=\int _{a_{i,L}(0)/s}^{a^s_{i,L}(t)/s} s^{2-L} \frac{\rd x'}{\prod_{r\neq L} \sqrt{(x')^2+\mu^2_{i,r}}} .$$
Therefore, we have 
$$\int _{a_{i,L}(0)/s}^{a^s_{i,L}(t)/s} s^{2-L} \frac{\rd x}{\prod_{r\neq L} \sqrt{x^2+\mu^2_{i,r}}}=\int _{0}^t 
\, u_i(\vtheta^s(r)) \rd r.$$
By the definition of $F_i(z)$, we have 
$$F_i(\frac{a^s_{i,L}(t)}{s^{L-2}})-F_i(\frac{a^s_{i,L}(0)}{s^{L-2}})=\int _{0}^t 
\, u_i(\vtheta^s(r)) \rd r.$$
Since $\|\vu(\vtheta^s(t))-\vu^*\|_{\infty}<\epsilon$ for all $0\leq t\leq T^s$, so we have 
\begin{equation}
(u_i^*-\epsilon)t  <F_i(\frac{a^s_{i,L}(t)}{s})-F_i(\frac{a^s_{i,L}(0)}{s})<(u_i^*+\epsilon)t, \forall t\in [0,T^s].
\label{eq:time estimate of L-layer}
\end{equation}
Since $\|\vk_{I_2}(\vtheta^s(T^s))\|_{\infty}=\delta$, then there exists $j\in I_2$ such that 
$$k_j(\vtheta^s(T^s))=\pm \delta.$$
Without loss of generality, let us assume that $u^*_j>0$. Then $k_j(\vtheta^s(T^s))>0.$ So we have 
$$k_j(\vtheta^s(T^s))= \delta.$$

Since we have the conserved quantities
\[
(a_{j,l}^s)^2(t)-(a_{j,L}^s)^2(t)
=s^2\mu_{jl}^2\quad , \quad  \forall l\in [L]
\]
So we have 
$$\lim_{s\to 0} a_{j,L}^s=\delta^{1/L}.$$

Apply Equation~\eqref{eq:time estimate of L-layer} to neuron $j$ and  $t=T^s$,  we get 
$$T^s=\frac{t_j^+-F_j^{-1}(\vs_j^{(p)})}{u_j^*}+h(\epsilon,s)$$
Here, $h(\epsilon,s)\to 0$ as $(\epsilon,s)\to \vzero$.
For each $i\in I_2$, define
\[
T_i = 
\begin{cases}
\frac{t_i^+ - F_i^{-1}(s_i^{(p)})}{|u_i|}, & \text{if } u_i > 0 \\
\frac{t_i^- + F_i^{-1}(s_i^{(p)})}{|u_i|}, & \text{if } u_i < 0 \\
+\infty, & \text{if } u_i = 0
\end{cases}
\]

Let $l\in \mathrm{argmin} _{k\in I_2} T_k$.
By assumption, $l$ is unique. Therefore 
$T_{l}$ is strictly smaller than $T_k$ if $k\neq l$. 

It is readily verifiable that $j=l$ for sufficiently small $\epsilon$ and $s$, otherwise $a_{j,L}(t)$ will first reach $\delta$. 

For $i\in I_2\setminus{j}$, by Equation~\eqref{eq:time estimate of L-layer}, we have 
$$(u_i-\epsilon)T^s  <F_i(\frac{a^s_{i,L}(T^s)}{s})-F_i(\frac{a^s_{i,L}(0)}{s})<(u_i+\epsilon)T^s.$$
So we have 
$$F_i(\frac{a^s_{i,L}(T^s)}{s})=F_i(s_i^{(p)})+u_iT_j+o(1).$$
Here, $o(1)\to 0$ as $\epsilon,s\to 0$.
Since $F_i(s_i^{(p)})+u_iT_j \in (-t_i^-,t_i^+)$ for all $i\in I_2\setminus{\{j\}}$, then  
$a_{i,L}^s(T^s)=O(s)$. So we have 
$$\lim_{s\to 0}a_{i,L}^s(T^s)=0,\forall i\in I_2\setminus{\{j\}}.$$
Similar to the proof of Theorem~\ref{thm:L=2}, the limit $\vtheta'=\lim_{s\to 0} \vtheta^s(T^s)$ exists. Besides, $a_{i,l}(\vtheta^s)=0,\forall i\in I_2\setminus{\{j\}},l\in [L].$
The following part of proof is the same with Theorem~\ref{thm:L=2}, we leave out the details.
\end{proof}

\section{Concepts in Differential Geometry}
\label{sec:diffenrential geometry}
In the appendix, we present several definitions and concepts in  differential geometry that are pertinent to the content of this paper.

\begin{definition}[analytic manifold, page 3 and 4 of~\citet{jurdjevic1997geometric})]
$\fM$ is called an $n$ dimensional analytic manifold if $\fM$ is a topology space such that at each point $p\in \fM$
 there exists a neighbourhood $U$ of $p$ and a homeomorphism $\phi$ from $U$ onto an open subset of $\sR^n$.  It is assumed that $n$ does not vary with the choice of a point $p$ on $\fM$. The pair $(\phi,U)$ is called a chart at $p$. Moreover:
 \begin{enumerate}
 \item There exists a countable collection of charts $\{(\phi_i,U_i)\}_{i=1}^{\infty}$ such that $\fM=\bigcup_{i=1}^{\infty} U_i$.
     \item For each pair of points $p_1$ and $p_2$, there exist charts $(\phi_1,U_1)$ and $(\phi_2,U_2)$ such that $p_1\in U_1$, $p_2\in U_2$, and $U_1\cap U_2=\emptyset$. That is, points of $\fM$ are separated by coordinate neighborhoods (i.e., $\fM$ is Hausdorff).
        \item  For any charts $(\phi_1,U_1)$ and $(\phi_2,U_2)$ such that $U_1\cap U_2\neq \emptyset$, the mapping $\phi_1\circ \phi_2^{-1}$ is analytic as a mapping from an open set in $\sR^n$ into $\sR^n$.

 \end{enumerate}
\end{definition}

\begin{definition}[analytic vector fields, Definition 1 in Chapter 1 of~\citet{jurdjevic1997geometric}] Let $\fM$ be an analytic manifold. The totality of $(p,v),p\in \fM, v\in T_p \fM$, is called the tangent bundle of $\fM$ and is denoted by $T\fM$. A vector field is a mapping $X: \fM \to T\fM$ such that for each $p \in \fM$, if $\pi: T\fM \to \fM$ denotes the natural projection, then $\pi\big(X(p)\big)=p$. We say that $X$ is an analytic vector field if $X$ is an analytic map from $\fM$ (as an analytic manifold) into $T\fM$ (another analytic manifold).
    
\end{definition}

\begin{definition}[Invariant manifold]
Let $\fM$ be an immersed submanifold of $\sR^M$. Let $X$ be an analytic vector field on \( \sR^M\), and let \( \vtheta(t)\) denote the solution to the Cauchy problem \( \dot{\vtheta} = X(\vtheta),\vtheta(0) = \vtheta_0 \). 
We say that \( \mathcal{M} \) is an \textbf{invariant manifold} of  \( X \) if for every \( \vtheta_0 \in \mathcal{M} \), the solution \( \vtheta(t) \) remains in \( \mathcal{M} \) for all \( t \) in its maximal interval of existence.  We also say \textbf{$\fM$ is invariant under $X$}.
\label{def:invariant manifold}
\end{definition}

\section{Experiments}

\subsection{Details of Figure~\ref{fig:experiment}}
\label{sec:details of figure}
We consider  the model  $F(\vtheta)(\vx)=\sum_{i=1}^4 k_i(\vtheta_i)$
such that:
$$k_i(\vtheta_i)=2a_ib_i+(a_i^2+b_i^2+c_i^2)c_i \, , i=1,2$$ $$k_i(\vtheta)=a_i\tanh(a_i)-(e^{b_i}-1)^2\,, i=3,4.$$
One can readily verify that this model satisfies Definition~\ref{def:general 2-layer}, and $t_i^+=t_i^-=1$ for all $i\in [n]$. 
We have the data matrix
\[
\vX = \begin{pmatrix}
1 & 0.5 & 0.7 & 0 \\
0.5 & 1 & 0.1 & 0.7
\end{pmatrix}, \quad 
\vy = \begin{pmatrix}
1 \\
0
\end{pmatrix}.
\]

\textbf{Experiment details:} The parameters are initialized uniformly in the interval  $[-10^{-60},10^{-60}]$. The learning rate is $0.1$. We use \textit{mpmath} library to support calculation of  high-precision. In the code, we set mp.mp.dps = 200, which means that we use $200$ decimal places of precision for calculations.

\textbf{Output of Algorithm~\ref{alg:my_algorithm}:} In the following we calculate the output of Algorithm~\ref{alg:my_algorithm} under the input $\vX,\vy, t_i^+,t_i^-.$

\begin{itemize}
    \item $ p = 0$; The initialization is $\vk^{(0)}=\vs^{(0)}=\vzero$.
    \item $p=1$; We have $I_2=\{1,2,3,4\}.$
By calculation, $\vu=\vX^\T \vy=(1,0.5,0.7,0)^\T.$ So the time vector $\vdelta=(1,2,\frac{10}{7},+\infty).$ Therefore, the first neuron will be chosen as the feature. So $\vs^{(1)}=(1,0.5,0.7,0)^\T.$ $k_1^{(1)}$ is the solution of 
$$\min_{k_1\in \sR} \| \vX_{:,1}k_1-\vy\|_2^2.$$
This leads to $k_1^{(1)}=0.8.$
So $\vk^{(1)}=(0.8,0,0,0)^\T.$ To summarize, we have 
$$\vk^{(1)}=(0.8,0,0,0)^\T \, , \, \vs^{(1)}=(1,0.5,0.7,0)^\T.$$

\item $p=2$; We have $I_2=\{2,3,4\}.$ By calculation, we have 
$$u_2,u_3,u_4=-0.3,0.1,-0.28.$$ So we have 
$$\delta_2,\delta_3,\delta_4=5,3,\frac{25}{7}.$$
Therefore, $\delta_3$ is the smallest. So the third neuron is chosen. Then $\vs^{(2)}=(1,-0.4,1,-0.84)^\T.$ 

Besides, $k_1^{(2)}$ and  $k_3^{(2)}$ are the solution of 
$$\min_{(k_1,k_3)\in \sR^2} \| \vX_{:,1}k_1+\vX_{:,3}k_3-\vy\|_2^2 \quad  s.t. \quad  k_1\geq 0.$$
The solution is $k_1=0,k_3=1.4.$ So we have $\vk^{(2)}=(0,0,1.4,0)^\T.$  To summarize, we have 
$$\vk^{(2)}=(0,0,1.4,0)^\T \, , \, \vs^{(2)}=(1,-0.4,1,-0.84)^\T.$$

\item $p=3$; We have $I_2=\{1,2,4\}.$ By calculation, we have 
$$u_1,u_2,u_4=-0.05,-0.13,-0.098.$$ So we have 
$$\delta_1,\delta_2,\delta_4=20,\frac{60}{13}, \frac{80}{49}.$$
Therefore, $\delta_4$ is the smallest. So the fourth neuron is chosen. $k_3^{(3)}$ and  $k_4^{(3)}$ are the solution of 
$$\min_{(k_3,k_4)\in \sR^2} \| \vX_{:,3}k_3+\vX_{:,4}k_4-\vy\|_2^2 \quad  s.t. \quad  k_3\geq 0.$$
The solution is $k_3=\frac{10}{7},k_4=-\frac{10}{49}.$ So we have $\vk^{(3)}=(0,0,\frac{10}{7},-\frac{10}{49})^\T.$ Besides, it is readily to verify that $\vX\vk^{(3)}=\vy.$ So the iteration stops.

To summarize, Algorithm~\ref{alg:my_algorithm} ends at $p=3$, and 
we have 
$$\vk^{(3)}=(0,0,\frac{10}{7},-\frac{10}{49})^\T.$$

\end{itemize}

\textbf{Comparison of $\vk^{(p)}$ to the dashed line in Figure~\ref{fig:experiment}:} 
\begin{itemize}
    \item In the first dashed line, we have $(k_1,k_2,k_3,k_4)^\T=(0.800,8.34e-70,1.51e-28,-1.35e-100)^\T.$ This value is close to $\vk^{(1)}=(0.8,0,0,0)^\T.$
    \item In the second dashed line, we have $(k_1,k_2,k_3,k_4)^\T=(-1.01e-6,-6.23e-78,1.400,-1.34e-17)^\T.$ This value is close to $\vk^{(2)}=(0,0,1.4,0)^\T.$

    \item In the second dashed line, we have $(k_1,k_2,k_3,k_4)^\T=(-1.01e-6,-6.23e-78,1.400,-1.34e-17)^\T.$ This value is close to $\vk^{(2)}=(0,0,1.4,0)^\T.$

    \item In the third dashed line, we have $(k_1,k_2,k_3,k_4)^\T=(-1.71e-6,-2.48e-57,1.429,-0.204)^\T.$ This value is close to $\vk^{(3)}=(0,0,\frac{10}{7},-\frac{10}{49})^\T.$
\end{itemize}

\textbf{Difference between gradient flow and gradient descent:}
In gradient flow, as shown in Theorem~\ref{thm:SIM}, the parameter $\vtheta_i$ is strictly restricted to a simple curve. As a consequence, if $k_i(\vtheta_i)$ wants to change its sign, then  $\vtheta_i$ must 
must pass its initialization. Therefore, if $k_i(\vtheta_i)$ wants to change its sign, $|k_i|$ must decrease to its  initialization scale.

However, in Figure~\ref{fig:experiment}, we see that the value of $k_1$ is $0.800$ at the first dashed line, and $-1.01e-6$ at the second dashed line. Between the two dashed lines, the value of $|k_1|$ does not decrease to its initialization scale $1e-60.$ This is different from the dynamics of gradient flow. We attribute this phenomenon to the fact that Structural Invariant Manifold in Theorem~\ref{thm:SIM} is not invariant under gradient descent (since $O_{\fF_i}(\vtheta_i)$ is not a straight line). The failure of Theorem~\ref{thm:SIM} in gradient descent is also evidenced by~\citet{ziyin2024parameter}.

\subsection{Deep Diagonal Linear Network}

Figure~\ref{fig:combined_2}  illustrates the training dynamics of a Deep Diagonal Linear Network ($w = a \odot b \odot c$) optimized via gradient descent. The experiment utilizes a synthetic dataset with $M=3$ samples and $N=8$ features. We employ a small initialization scale ($\approx 10^{-4}$) while maintaining a strict layer-wise hierarchy ($a > b > c$) to facilitate theoretical analysis.

Figures~\ref{fig:loss_2}  and ~\ref{fig:dynamics_2}  demonstrate a clear synchronization between the descent of the loss curve and the sequential growth of neural parameters. The learning process exhibits distinct stage-wise behavior, where the system transitions between saddle points. We define the end of each stage as the point where the active parameters stabilize and the gradient of the residual becomes negligible.

Figure~\ref{fig:predictions_2}  presents the theoretical ``growth time" for each neuron at the onset of every stage, calculated by our algorithm as the integrated virtual distance required to escape the saddle point. Notably, these theoretical predictions align perfectly with the empirical emergence order and relative timing observed in Figure~\ref{fig:experiment}.

Figure~\ref{fig:combined} demonstrates the controllability of feature selection through initialization. While the network naturally prioritizes features based on data correlation, we show that this order can be manipulated. By selectively amplifying the initialization magnitude of the 5th neuron by a factor of 5 (while keeping others at the original scale), we successfully induce this neuron to emerge first, overriding the natural data-driven order. Crucially, our theoretical framework accurately captures this perturbation, correctly predicting the altered emergence sequence in Figure~\ref{fig:combined}.

Figure~\ref{fig:placeholder} investigates the implicit bias under a specific ``Zero-Balanced" initialization scheme ($a=b, c=0$, with scale $\approx 10^{-4}$). This setting creates a strict saddle point at initialization. To verify the regularization properties, we computed the mathematical minimum $\ell_1$-norm solution for the generated dataset. As shown in Figure~\ref{fig:combined}, as the loss approaches zero, the parameters converge precisely to the $\ell_1$ solution rather than the sparse ground truth ($\ell_0$ solution). This confirms that the $c=0$ initialization induces a strong inductive bias towards $\ell_1$ regularization, effectively driving the network to select the solution with the minimum norm among infinite possibilities.

\begin{figure}[h]
    \centering
    \begin{subfigure}[b]{0.38\linewidth}
        \centering
        \includegraphics[height=5cm, keepaspectratio]{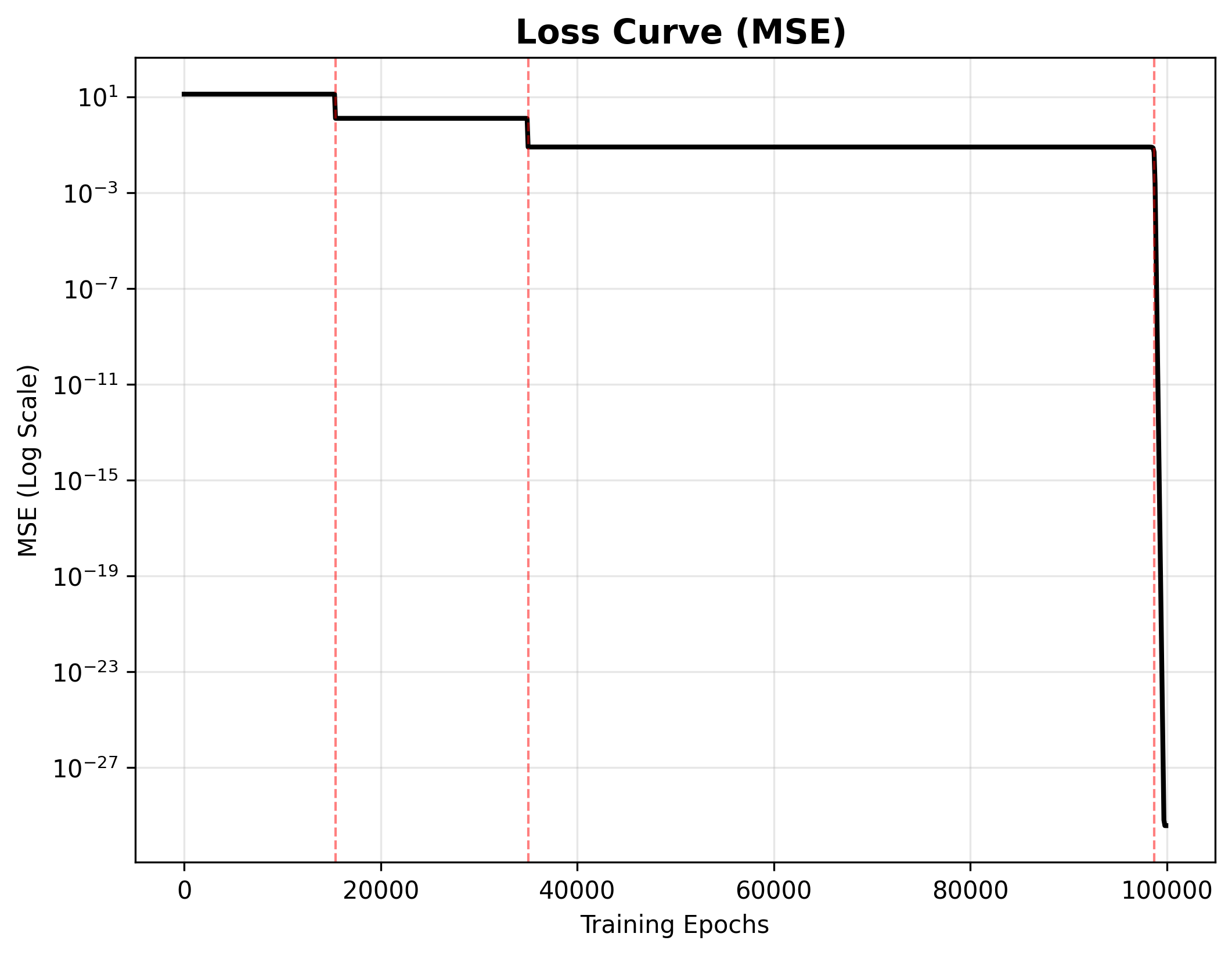}
        \caption{Training loss of the Deep Diagonal Linear Network } 
        \label{fig:loss_2}
    \end{subfigure}
    \hfill
    \begin{subfigure}[b]{0.6\linewidth}
        \centering
        \includegraphics[height=5cm, keepaspectratio]{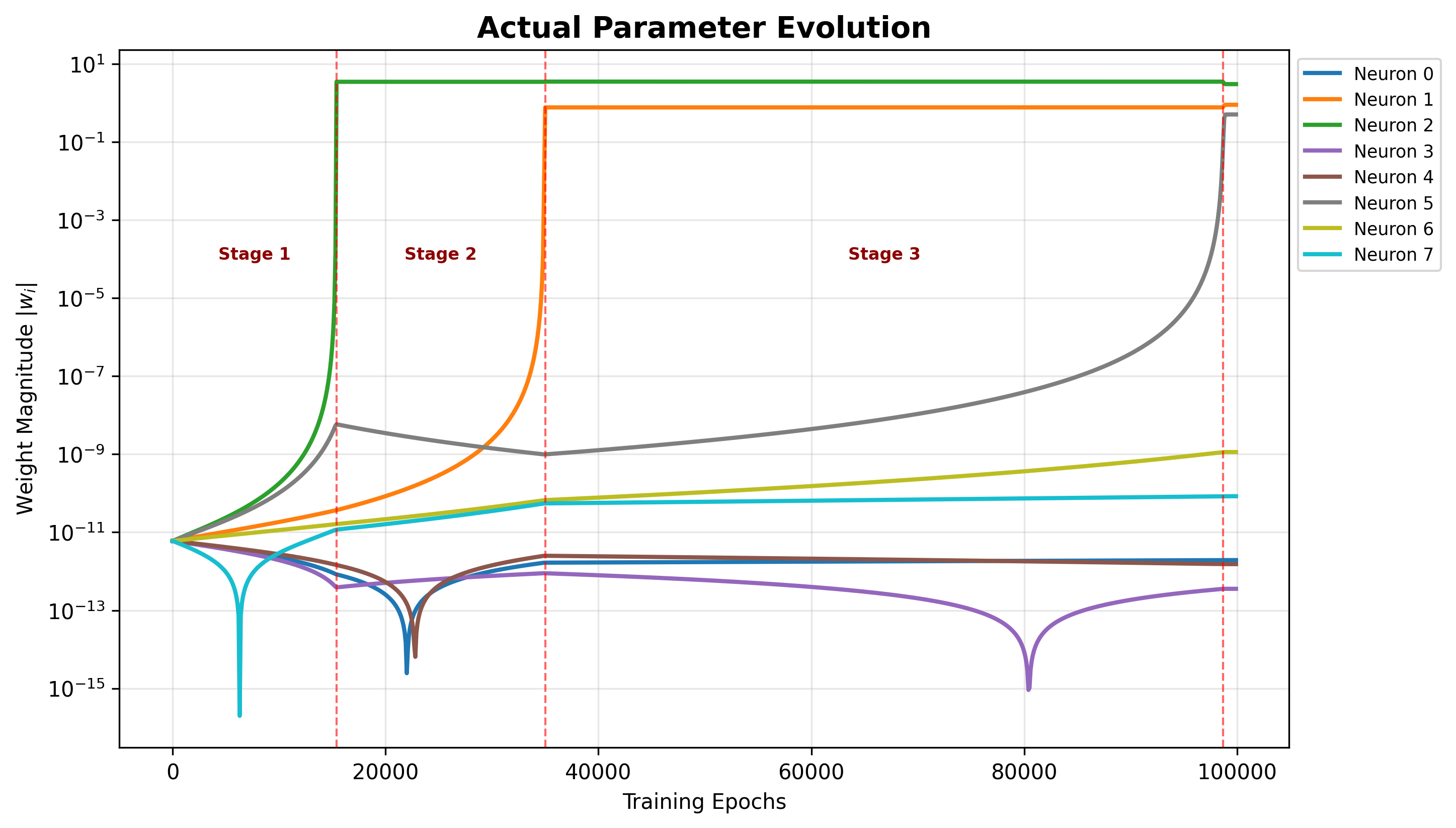}
        \caption{Parameter trajectories of individual neurons across training epochs}
        \label{fig:dynamics_2}
    \end{subfigure}

    \vspace{1em}

    \begin{subfigure}[b]{\linewidth}
        \centering
        \includegraphics[width=\linewidth]{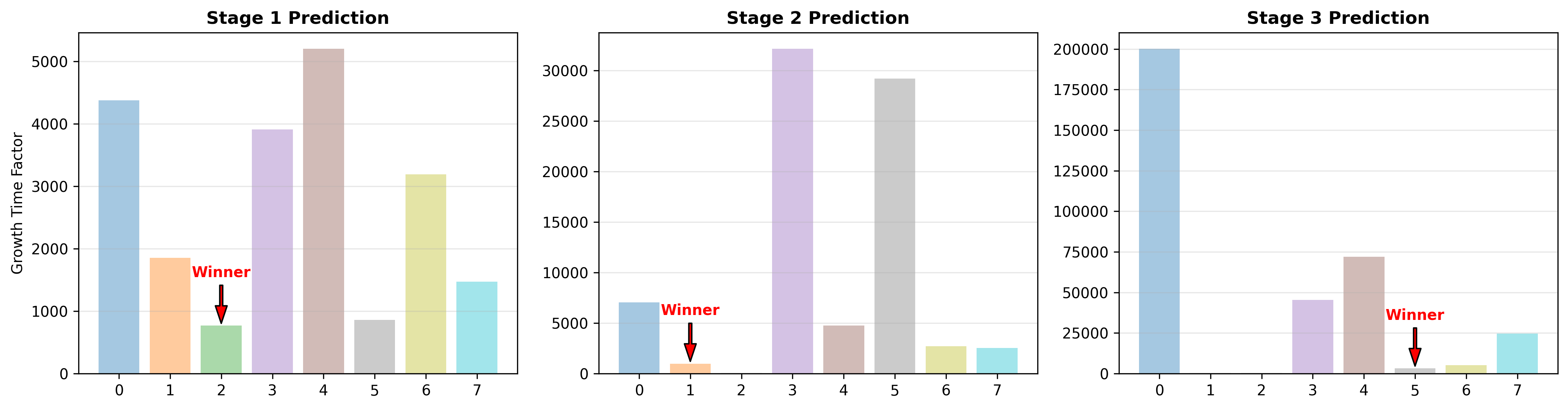}
        \caption{Theoretical prediction of neuron growth times at each stage}
        \label{fig:predictions_2}
    \end{subfigure}

    \caption{Training dynamics of the Deep Diagonal Linear Network with uniform initialization scale.
(a) The training loss curve. The vertical red dashed lines correspond to the same epochs marked in (b), indicating stage transitions.
(b) Evolution of the absolute parameter magnitudes $|w_i| = |a_i b_i c_i|$ for the 8 neurons throughout training.
(c) Theoretical predictions of the growth time calculated at the beginning of each stage. The algorithm identifies the neuron with the shortest predicted growth time as the "winner" for the subsequent stage.
    }
    \label{fig:combined_2}
\end{figure}

\begin{figure}[htbp]
    \centering
    \begin{subfigure}[b]{0.38\linewidth}
        \centering
        \includegraphics[height=5cm, keepaspectratio]{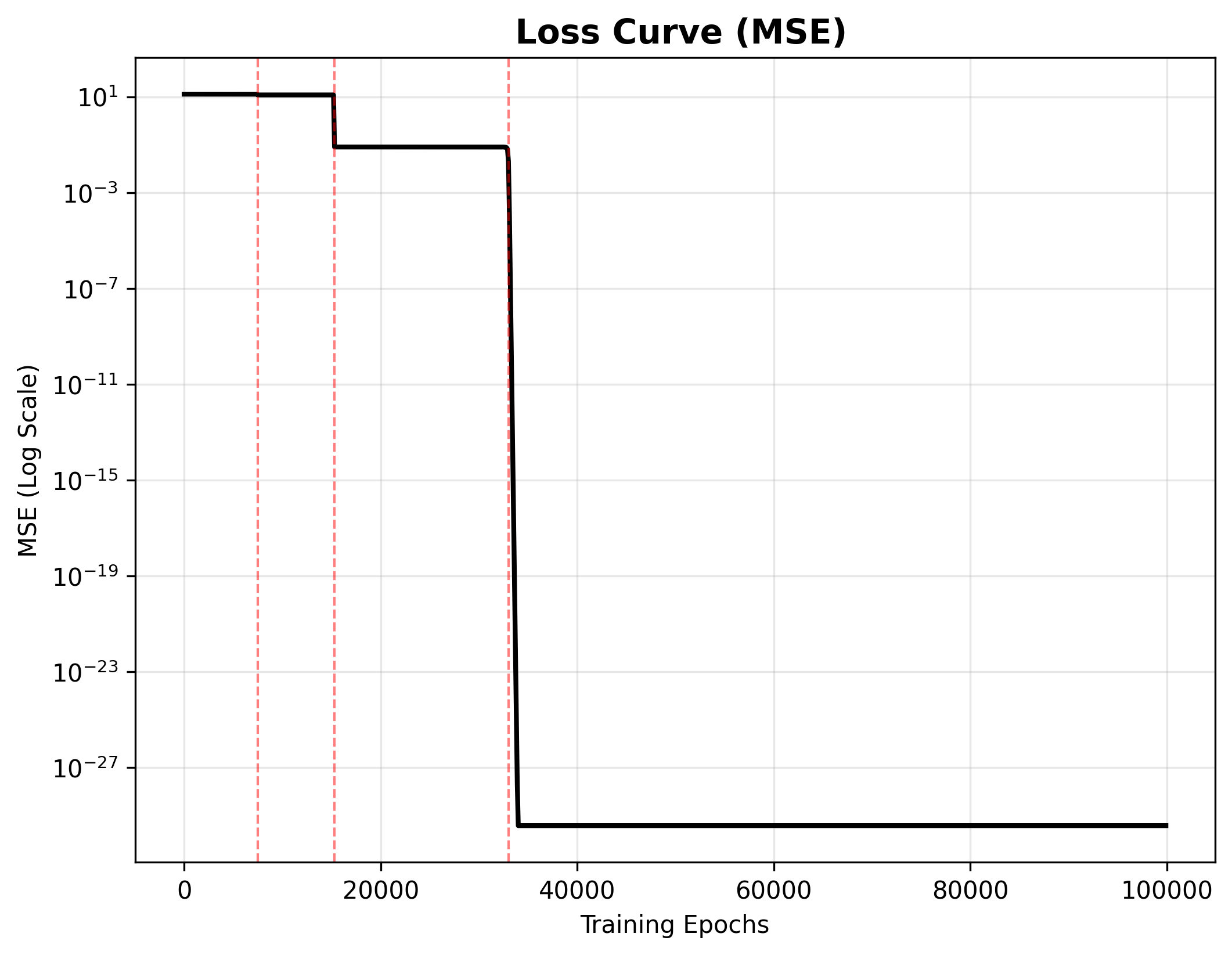}
        \caption{Training loss curve of the Deep Diagonal network }
        \label{fig:loss_3}
    \end{subfigure}
    \hfill
    \begin{subfigure}[b]{0.6\linewidth}
        \centering
        \includegraphics[height=5cm, keepaspectratio]{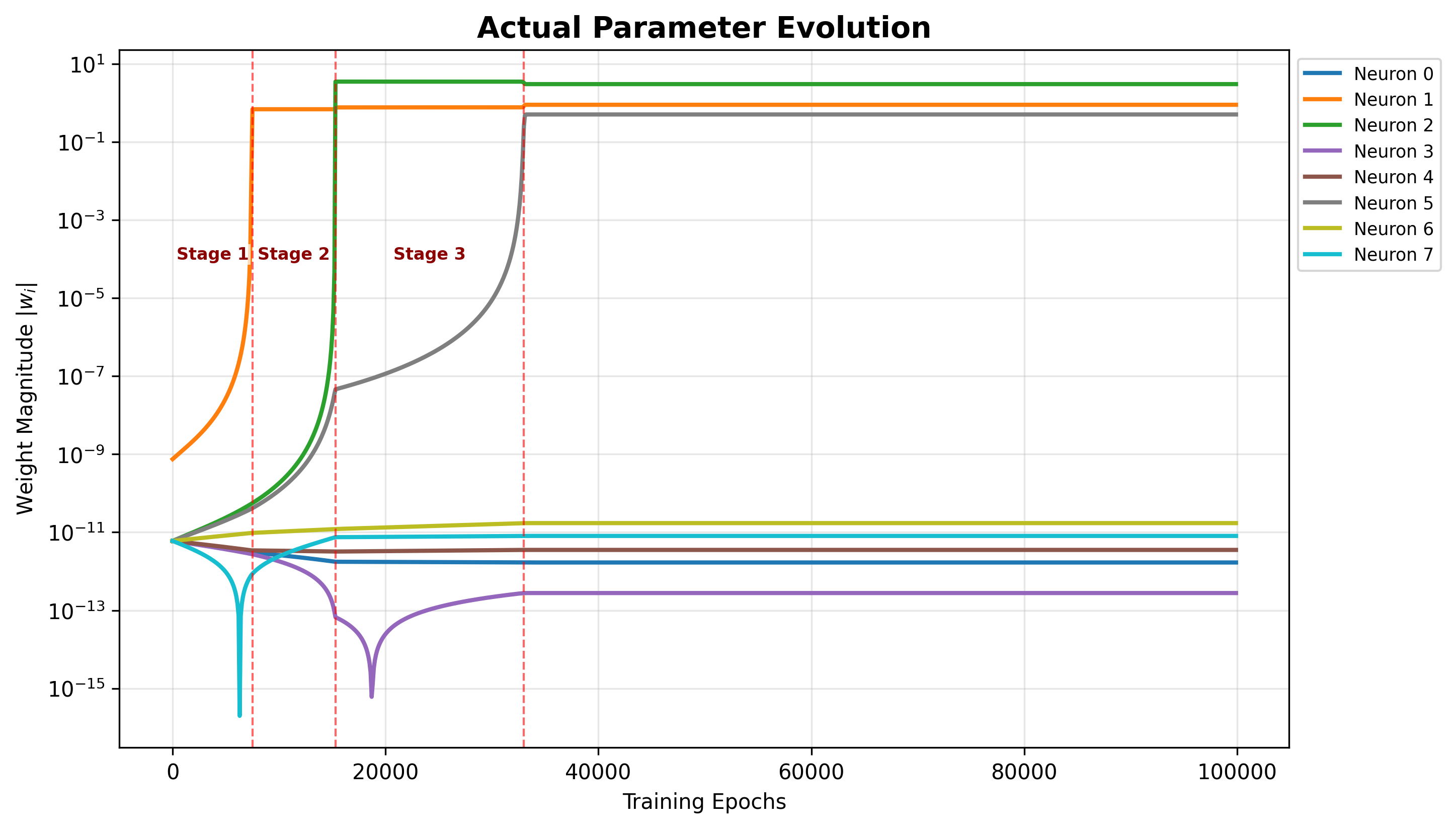}
        \caption{Parameter trajectories of individual neurons across training epochs}
        \label{fig:dynamics}
    \end{subfigure}

    \vspace{1em}

    \begin{subfigure}[b]{\linewidth}
        \centering
        \includegraphics[width=\linewidth]{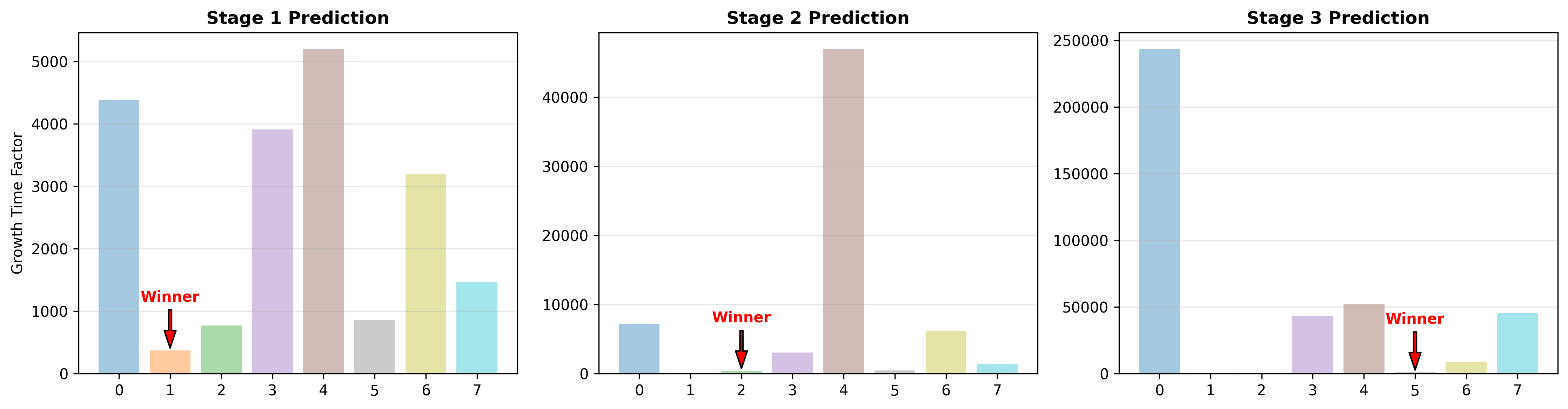}
        \caption{Theoretical prediction of neuron growth times at each stage}
        \label{fig:predictions}
    \end{subfigure}

    \caption{Training dynamics of the Deep Diagonal Linear Network with biased initialization. The initialization scale of the first neuron is amplified by a factor of 5, while others remain unchanged.
(a) The training loss curve. The vertical red dashed lines correspond to the same epochs marked in (b).
(b) Evolution of the absolute parameter magnitudes $|w_i|$ throughout training. Note that the first neuron emerges earliest due to its larger initialization scale.
(c) Theoretical predictions of the growth time. The algorithm correctly identifies the first neuron as the "winner" of the initial stage, capturing the effect of the initialization bias.
    }
    \label{fig:combined}
\end{figure}

\begin{figure}
    \centering
    \includegraphics[width=1.0\linewidth]{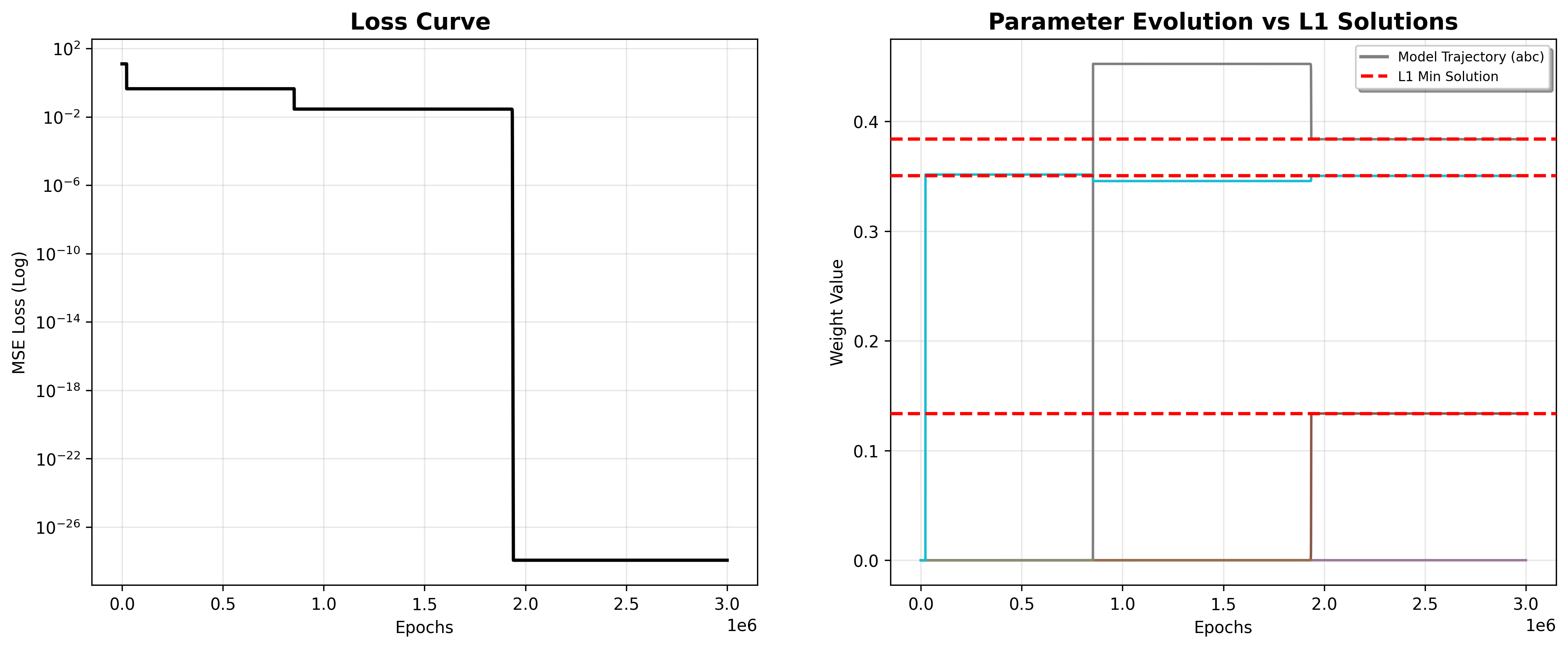}
    \caption{Verification of implicit $\ell_1$ regularization under "Zero-Balanced" initialization ($a=b, c=0$).
The left figure is Training loss curve. The right figure
depicts parameter evolution trajectories compared against the theoretical baseline. The red dashed lines represent the mathematical minimum $\ell_1$-norm solution for the dataset. }
    \label{fig:placeholder}
\end{figure}

\end{document}